\newcommand\CoAuthorMark{\footnotemark[\arabic{footnote}]}
\newtheorem{defi}{Definition}
\newtheorem{thm}{Theorem}
\newtheorem{lem}[thm]{Lemma}
\newtheorem*{nothm}{Theorem}
\title{Combating the Instability of Mutual Information-based Losses\\via Regularization}
\author[1]{\href{mailto:<juice500@sogang.ac.kr>}{Kwanghee Choi\thanks{These authors contributed equally to this work.}}}
\author[2]{\href{mailto:<siyeong.lee@naverlabs.com>}{Siyeong Lee\protect\CoAuthorMark}}
\affil[1]{%
Sogang University
}
\affil[2]{%
NAVER LABS
}
\begin{document}

\maketitle
\begin{abstract} \label{sec:abstract}
Notable progress has been made in numerous fields of machine learning based on neural network-driven mutual information (MI) bounds.
However, utilizing the conventional MI-based losses is often challenging due to their practical and mathematical limitations.
In this work, we first identify the symptoms behind their instability: (1) the neural network not converging even after the loss seemed to converge, and (2) saturating neural network outputs causing the loss to diverge.
We mitigate both issues by adding a novel regularization term to the existing losses.
We theoretically and experimentally demonstrate that added regularization stabilizes training.
Finally, we present a novel benchmark that evaluates MI-based losses on both the MI estimation power and its capability on the downstream tasks, closely following the pre-existing supervised and contrastive learning settings.
We evaluate six different MI-based losses and their regularized counterparts on multiple benchmarks to show that our approach is simple yet effective.
\end{abstract}

\section{Introduction} \label{sec:intro}
Identifying a relationship between two variables of interest is one of the key problems in mathematics, statistics, and machine learning \citep{goodfellow2014generative, ren2015faster, he2016deep, vaswani2017attention}.
One of the fundamental approaches is information theory-based measurement, namely the measure of mutual information (MI).
Due to its mathematical soundness and the rise of deep learning, many have designed differentiable MI-based losses for neural networks.
Some utilize the MI-based losses to bridge the gap between latent variables and representations in generative adversarial networks \citep{nowozin2016f, chen2016infogan, belghazi2018mutual, oord2018representation, hjelm2019learning}, where others introduce MI-based methodologies identifying the relationship between input, output, and hidden variables \citep{tishby2015deep, shwartz2017opening, saxe2019information}.
Furthermore, recent self-supervised losses use contrastive losses, where its origin can be traced back to MI-based losses \citep{pmlr-v119-cheng20b, DBLP:conf/icml/Henaff20, DBLP:conf/nips/ChuangRL0J20}.

Although many have shown computational tractability and usefulness of MI-based losses, others still struggle with their instability during optimization.
Contrastive learning literature with MI-based losses such as \cite{chen2020simple, he2020momentum} use huge batch sizes to reduce the variance of losses.
\cite{DBLP:journals/corr/abs-2105-04906} adds a regularization term to the neural network embeddings to stabilize the training.
\cite{mcallester2018formal} and \cite{Song2020Understanding} further provide theoretical limitations of variational MI estimators, arguing that the limited batch size induces a MI estimation variance too large to handle.
We argue that mitigating the variance of MI-based losses is critical for stabilizing training, where it is well known that more stable optimization of neural networks yields better predictive performance on the downstream tasks \citep{DBLP:conf/iclr/RothfussLCAA19, DBLP:conf/cogsci/BearC20, DBLP:conf/nips/ChavdarovaGFL19, DBLP:conf/nips/RichterBNRA20, DBLP:journals/jr/ZengWW20, DBLP:conf/acl/ColomboPC20}.

In this paper, we concentrate on identifying the cause behind the instability of MI-based losses and propose a simple yet effective regularization method that can be applied to various MI-based losses.
We start by analyzing the behaviors of two MI estimators; the MI Neural Estimator (MINE) loss \citep{belghazi2018mutual} and Nguyen-Wainwright-Jordan loss (NWJ) loss \citep{nguyen2010estimating}.
We identify two distinctive behaviors that induce instability during training, drifting and exploding neural network outputs.
Based on these observations, we design two novel dual representations of the KL-divergence called Regularized Donsker-Varadhan representation (ReDV) and Regularized NWJ representation (ReNWJ).
We show theoretically and experimentally that adding our regularizer term suppresses two behaviors of drifting and exploding, avoiding instability during training.
Finally, we design a novel benchmark that bridges the gap between variational MI estimators and real-world tasks, whereas previous works either do not directly show the MI estimation performance or evaluate only on toy problems.
We reformulate both the supervised and the contrastive learning problem \citep{chen2020simple, he2020momentum, khosla2020supervised} as MI estimation problems and show that our regularization yields better performance on both perspectives, downstream task and MI estimation performance.

\section{Background \& Related works} \label{sec:related_works}
\paragraph{Definition of MI} The mutual information between two random variables $X$ and $Y$ is defined as
\begin{align}\begin{split}
\label{MIDef}
    I(X,Y) & = D_\text{KL}(\mathbb{P}_{X Y} || \mathbb{P}_X \otimes \mathbb{P}_Y) \\
     & = \mathbb{E}_{\mathbb{P}_{XY}} (\log{\frac{d\mathbb{P}_{XY}}{d\mathbb{P}_{X\otimes Y}}})
\end{split}\end{align}
where $\mathbb{P}_{X Y}$ and $\mathbb{P}_X \otimes \mathbb{P}_Y$ are the joint distribution and the product of the marginal distributions, respectively.
$D_\text{KL}$ is the Kullback-Leibler (KL) divergence.
Without loss of generality, we consider $\mathbb{P}_{X Y}$ and $\mathbb{P}_X \otimes \mathbb{P}_Y$ as being distributions on a compact domain $\Omega \subset \mathbb{R}^d$.

\paragraph{MI through dual representation of $D_\text{KL}$}
We first introduce two dual representations of $D_\text{KL}$, as MI is defined using $D_\text{KL}$.
The most widely known is the Donsker-Varadhan representation $D_\text{DV}$ \citep{donsker1975asymptotic}.
For given two distribution $\mathbb{P}$ and $\mathbb{Q}$ on some compact domain $\Omega \subset \mathbb{R}^d$,
\begin{equation}\label{DV}
    D_\text{DV}(X,Y) := \sup_{T:\Omega\to\mathbb{R}} \mathbb{E}_{\mathbb{P}}(T) - \log(\mathbb{E}_{\mathbb{Q}}(e^{T})),
\end{equation}
where both the expectations $\mathbb{E}_{\mathbb{P}}(T)$ and $\mathbb{E}_{\mathbb{Q}}(e^{T})$ are finite.
If we substitute $\mathbb{P}$ and $\mathbb{Q}$ into $\mathbb{P}_{X Y}$ and $\mathbb{P}_X \otimes \mathbb{P}_Y$, $D_\text{DV}$ yields the definition of MI.
The optimal $T^* = \log\frac{d\mathbb{P}}{d\mathbb{Q}} + C$, where $C \in \mathbb{R}$ can be any constant.

In contrast to $D_\text{DV}$, the Nguyen-Wainwright-Jordan representation $D_\text{NWJ}$ \citep{nguyen2010estimating} is induced by the convex conjugate known as Fenchel's inequality \citep{hiriart2004fundamentals}:
\begin{equation}\label{NWJ}
    D_\text{NWJ}(X,Y) := \sup_{T:\Omega\to\mathbb{R}} \mathbb{E}_{\mathbb{P}}(T) - \mathbb{E}_{\mathbb{Q}}(e^{T - 1})
\end{equation}
The optimal $T^* = \log\frac{d\mathbb{P}}{d\mathbb{Q}} + 1$ is unique unlike the optimal $T^*$ of $D_\text{DV}$ due to its self-normalizing property \citep{belghazi2018mutual}.
However, $D_\text{DV}$ guarantees tighter lower bounds than $D_\text{NWJ}$ \citep{Ruderman2012TighterVR, polyanskiy2014lecture}.
These two representations provide the theoretical soundness for numerous variational MI bounds.

\paragraph{Variational MI estimation}
With the increasing success of neural networks, several neural network-driven variational bounds of MI are proposed.
They are widely employed, such as contrastive learning \citep{oord2018representation, he2020momentum, chen2020simple} or generative adversarial training \citep{belghazi2018mutual, nowozin2016f}.
Variational bounds of MI commonly focus on estimating $T^*$ via a neural network $T_{\theta}:\Omega\to\mathbb{R}$, called the statistics network \citep{belghazi2018mutual}, which outputs a single real value given the input sample pairs.

$I_\text{MINE}$ \citep{belghazi2018mutual} directly maximize $D_\text{DV}$ as the objective function by feeding the samples $(x, y)$ of $\mathbb{P}_{X Y}$ and $\mathbb{P}_X \otimes \mathbb{P}_Y$ into $T_{\theta}$:
\begin{multline}\label{eq:MINE}
    I_{\text{MINE}}(X,Y) := \\ \mathbb{E}_{\mathbb{P}_{X Y}^{(n)}}(T_\theta(x, y)) - \log(\mathbb{E}_{\mathbb{P}_X^{(n)} \otimes \mathbb{P}_Y}^{(n)}(e^{T_\theta(x, y)})),
\end{multline}
where $\mathbb{P}^{(n)}$ is the empirical distribution associated to $n$ i.i.d. samples for given distribution $\mathbb{P}$.
\cite{belghazi2018mutual} also utilizes moving averages of mini-batches to reduce the MI estimation variance caused by the limited batch size.

$I_\text{InfoNCE}$ \citep{oord2018representation} is also commonly used due to its stability and decent performance:
\begin{equation}
I_{\text{InfoNCE}}(X, Y) = \frac{1}{N}\sum_{i=1}^N\log\frac{e^{T_\theta(x_i, y_i)}}{\frac{1}{N}\sum_{j}^N e^{T_\theta(x_i, y_j)}}
\end{equation}
where the $N$ samples $(x_i, y_i)_{i=1}^N$ are drawn from $\mathbb{P}_{XY}$, which becomes equivalent to using the Softmax function with the negative log loss.
$I_\text{InfoNCE}$ is also equivalent to $I_\text{MINE}$ up to a constant, but upper bounded by $\log N$, hence not able to estimate large MI values \citep{oord2018representation}.

\cite{poole2019variational} introduced $I_\text{TUBA}$, a unified lower bound, by expanding $D_\text{NWJ}$ \citep{barber2004information,nguyen2010estimating}.
\begin{multline}\label{eq:NWJLOSS}
I_\text{NWJ}(X, Y) := \\ \mathbb{E}_{\mathbb{P}_{XY}^{(n)}}(T_{\theta}(x, y)) -  \mathbb{E}_{\mathbb{P}_X^{(n)} \otimes \mathbb{P}_Y^{(n)}} (e^{T_{\theta}(x, y)-1}),
\end{multline}
\begin{multline}\label{eq:TUBA}
I_\text{TUBA}(X, Y) :=  \mathbb{E}_{\mathbb{P}_{XY}^{(n)}}(T_{\theta}(x, y)) \\ -  \mathbb{E}_{\mathbb{P}^{(n)}_{Y}}\left({\mathbb{E}_{\mathbb{P}^{(n)}_{X}}(e^{T_{\theta}(x, y)})}/{a(y)} + \log(a(y))- 1\right),
\end{multline}
where $a(y)$ is the variational parameter.
However, unlike $I_\text{MINE}$ or $I_\text{InfoNCE}$, directly using the exponential term often causes numerical instability.
Even if $T_\theta$ outputs a moderately sized value, $e^{T_\theta}$ can easily exceed the floating-point range.

To avoid this problem, \cite{poole2019variational} introduce $D_\text{NWJ}$-based lower bound $I_\text{JS}$ by using a softplus-activated neural network as $T_\theta$,
\begin{multline}\label{eq:JS}
I_\text{JS}(X, Y) :=  1 + \mathbb{E}_{\mathbb{P}_{XY}^{(n)}}(T_{\theta}(x, y)) \\ -  \mathbb{E}_{\mathbb{P}^{(n)}_{Y} \otimes \mathbb{P}^{(n)}_{X}}((e^{T_{\theta}(x, y)})).
\end{multline}

\paragraph{Variance problem of MI estimators}
Despite the variety of bounds proposed, many still suffer from the bias-variance trade-off \citep{poole2019variational}.
\cite{mcallester2018formal} and \cite{Song2020Understanding} prove that the $I_\text{MINE}$ estimator must have a batch size proportional to the exponential of true MI to control the variance of the estimation.

Many bounds try to mitigate this problem by reducing the variance of low-biased estimators, such as by interpolating with a low variance bound \citep{poole2019variational} or dropping the formal theoretical guarantees \citep{mcallester2018formal}.
 \cite{Song2020Understanding} proposed $I_\text{SMILE}$ to clip the range of $T_\theta$ trained with $I_\text{MINE}$, sacrificing the estimation quality to reduce the variance.
\begin{multline}\label{eq:SMILE}
    I_{\text{SMILE}}(X,Y) := \mathbb{E}_{\mathbb{P}_{X Y}^{(n)}}(T_\theta(x, y)) \\- \log(\mathbb{E}_{\mathbb{P}_X^{(n)} \otimes \hat{\mathbb{P}}_Y^{(n)}}(\text{clip}(e^{T_\theta(x, y)}, e^{-\tau}, e^\tau)),
\end{multline}
where $\text{clip}(v, l, u) = \max(\min(v, u), l) $ for $v, u, l \in \mathbb{R}$.

\paragraph{Practical usages of MI}
MI-based losses are often applied in generative modeling, such as for better mode coverage \citep{belghazi2018mutual} or learning disentangled representations without supervision \citep{chen2016infogan, DBLP:conf/nips/OjhaSHL20, DBLP:journals/kbs/LiLNCC21, jeon2021ib}.
Representation learning employs MI-based losses \citep{DBLP:conf/nips/Tian0PKSI20, hjelm2019learning, DBLP:conf/iclr/TschannenDRGL20, pmlr-v119-cheng20b, DBLP:journals/corr/abs-2005-13149,DBLP:conf/iclr/WenZHZX20, DBLP:conf/eccv/BoudiafRZGPPA20, DBLP:conf/eccv/TianKI20, DBLP:journals/corr/abs-2005-04966} to yield feature extractors that reflect its downstream tasks well.
We emphasize that these approaches can be further utilized to measure the performance of MI estimators.

\paragraph{Comparing between MI estimators}
Toy datasets such as correlated multivariate Gaussian distributions has been widely accepted for the evaluation of MI estimation \citep{belghazi2018mutual, poole2019variational, Song2020Understanding, pmlr-v119-cheng20b, lin2019data}.
However, we emphasize that using synthetic data as a definitive benchmark will end up in a disparity with real-world tasks.
There have been some approaches that compared different MI estimators on generative modeling \citep{belghazi2018mutual, hjelm2019learning} or representational learning \citep{DBLP:conf/nips/Tian0PKSI20}.
However, finding the ideal MI for each downstream task is not trivial, making it impossible to directly assess the MI estimation quality.
Moreover, \cite{DBLP:conf/iclr/TschannenDRGL20} and \cite{DBLP:conf/nips/Tian0PKSI20} showed the gap between MI estimation quality and downstream performance on specific tasks.
Hence, it is crucial to evaluate both perspectives.
The closest work to our benchmark is the consistency test of \cite{Song2020Understanding} using CIFAR-10 \citep{krizhevsky2009learning} and MNIST \citep{lecun1998gradient}.
However, the test only offered to assess the ratio of two separate MI estimations, making it difficult to separately measure the quality of each estimation.

\section{Instability of MI Bounds} \label{sec:motivating_experiment}
To demonstrate and analyze the instability of variational MI bounds, we design a synthetic problem with the One-hot dataset.
We then solve the task via $I_\text{MINE}$ and $I_\text{NWJ}$, which are the losses derived from the two most commonly used representations of KL-divergence, $D_\text{DV}$ and $D_\text{NWJ}$, respectively.
Both losses consist of two terms, each derived from the statistics of joint distribution $\mathbb{E}_{\mathbb{P}_{XY}}$ and the product of marginal distributions $\mathbb{E}_{\mathbb{P}_{X} \otimes \mathbb{P}_Y}$.
Hence, to observe the behavior of each loss during training, we plot the two terms separately.
Also, to observe how each distribution differ by the statistics network outputs $T_\theta(x, y)$, we plot each output from $(x, y) \sim Supp(\mathbb{P}_{XY})$ and $(x, y) \sim Supp(\mathbb{P}_{X} \otimes \mathbb{P}_Y) \setminus Supp(\mathbb{P}_{XY})$, where we denote the support of $\mathbb{P}$ as $Supp(\mathbb{P})$.
Support is the set of values that the random variable can take \citep{def_support}.

\paragraph{One-hot Dataset}
We design a one-hot discrete problem with uniform distribution $X \sim U(1,N)$ to estimating $I(X, X)=\log N$ for a given integer $N$.
This task is intentionally created to easily discern samples $(x, x) \sim \mathbb{P}_{X X}$ from $(x, x) \sim \mathbb{P}_X \otimes \mathbb{P}_X$, so that we can directly observe its network outputs $T_\theta(x, x)$.

\begin{figure}[t]
\centering
\subfloat{\includegraphics[width=0.49\columnwidth]{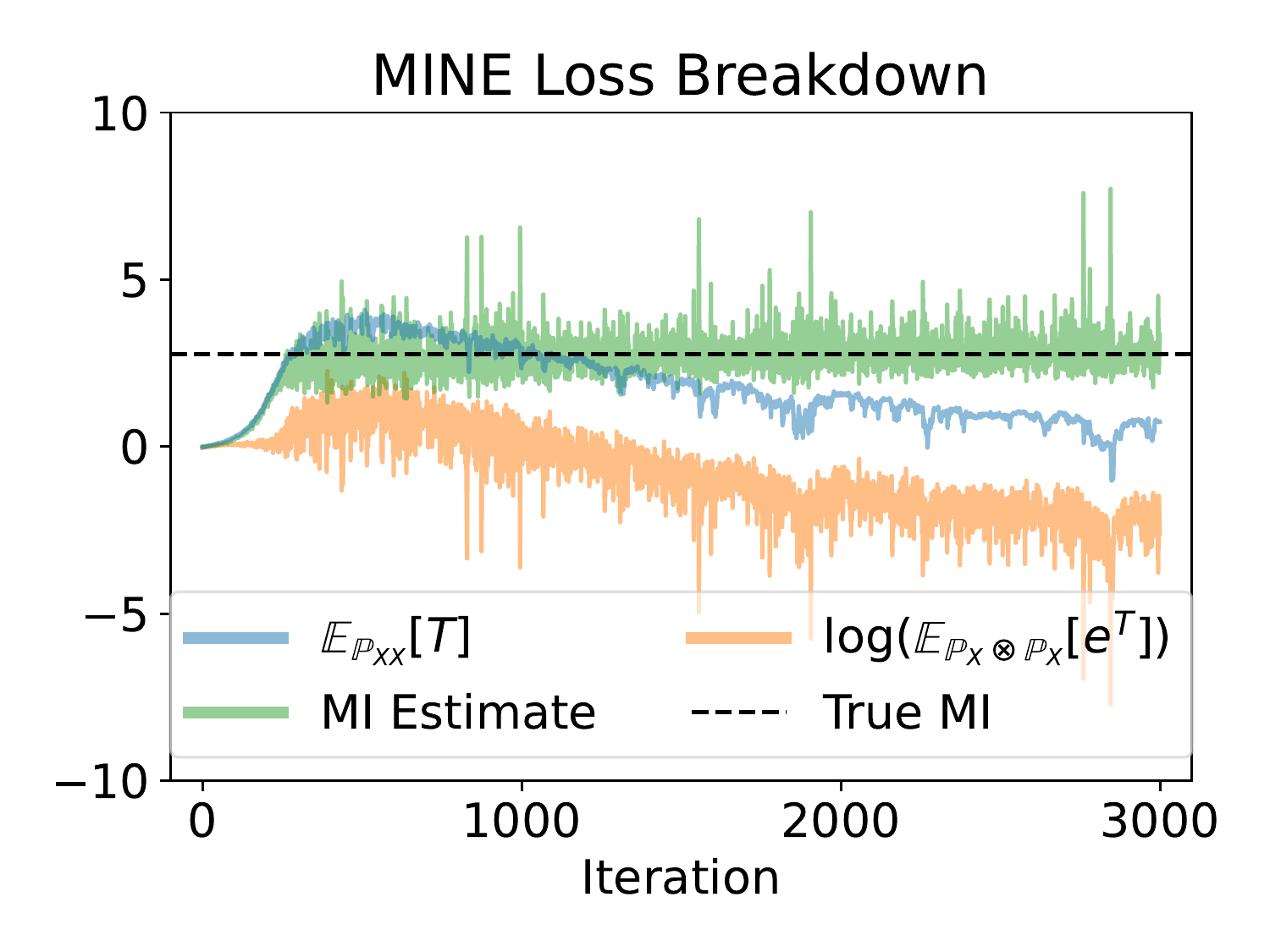}}
\subfloat{\includegraphics[width=0.49\columnwidth]{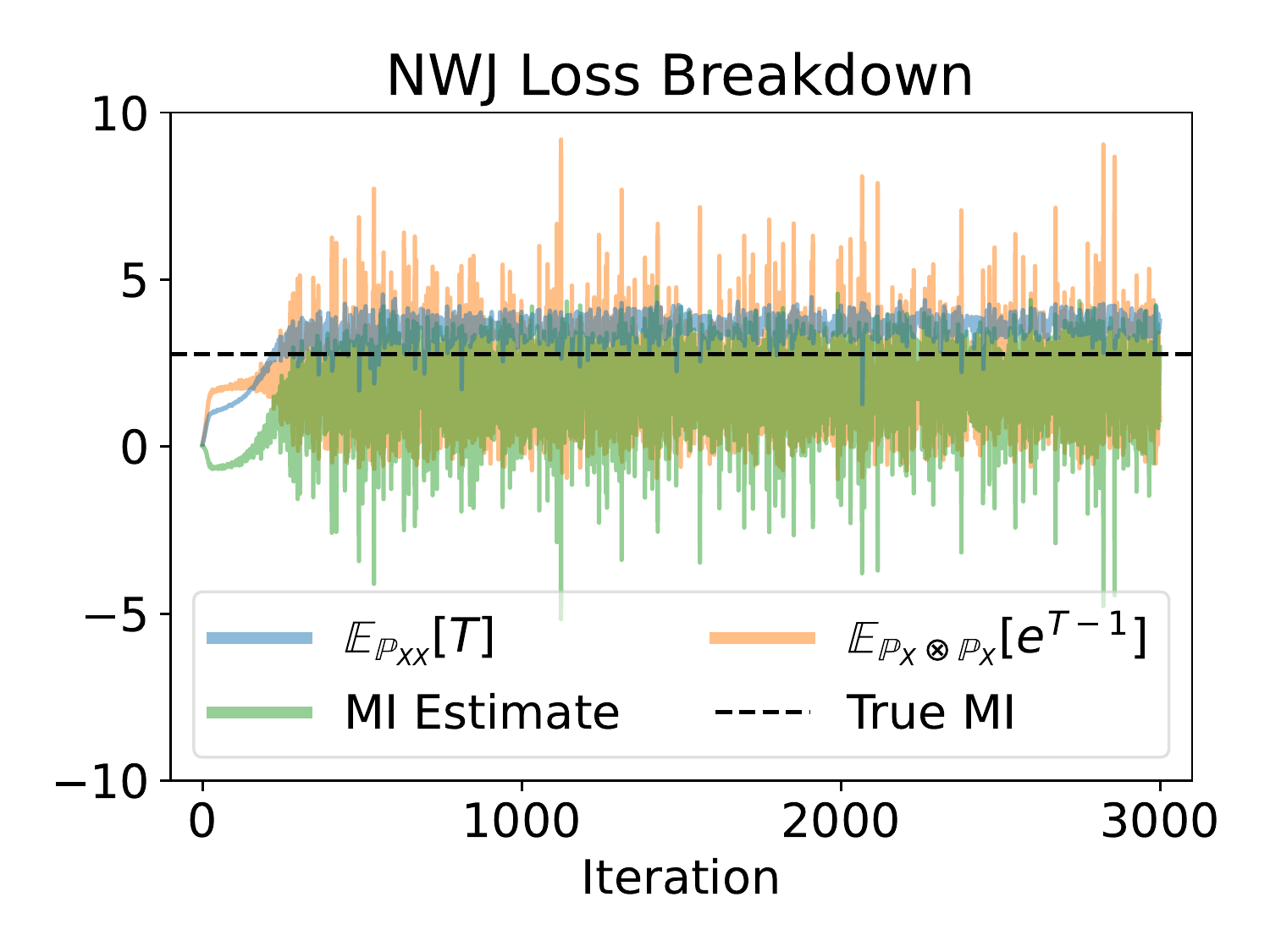}}
\caption{
    Training $T_\theta$ using $I_\text{MINE}$ and $I_\text{NWJ}$ with batch size $100$ for $3000$ iterations.
    We breakdown the MI loss into two components.
    We split $I_\text{MINE}$ into first term \textcolor{blue}{\textbf{$\mathbb{E}_{\mathbb{P}_{X X}}(T)$}} and second term \textcolor{orange}{\textbf{$\log\mathbb{E}_{\mathbb{P}_X \otimes \mathbb{P}_X}(e^T)$}}.
    Similarly, we split $I_\text{NWJ}$ into first term \textcolor{blue}{\textbf{$\mathbb{E}_{\mathbb{P}_{X X}}(T)$}} and second term \textcolor{orange}{\textbf{$\mathbb{E}_{\mathbb{P}_X \otimes \mathbb{P}_X}(e^{T-1})$}}.
}
\label{figs:onehot_toy_stable_loss}
\end{figure}

\begin{figure}[t]
\centering
\subfloat{\includegraphics[width=0.49\columnwidth]{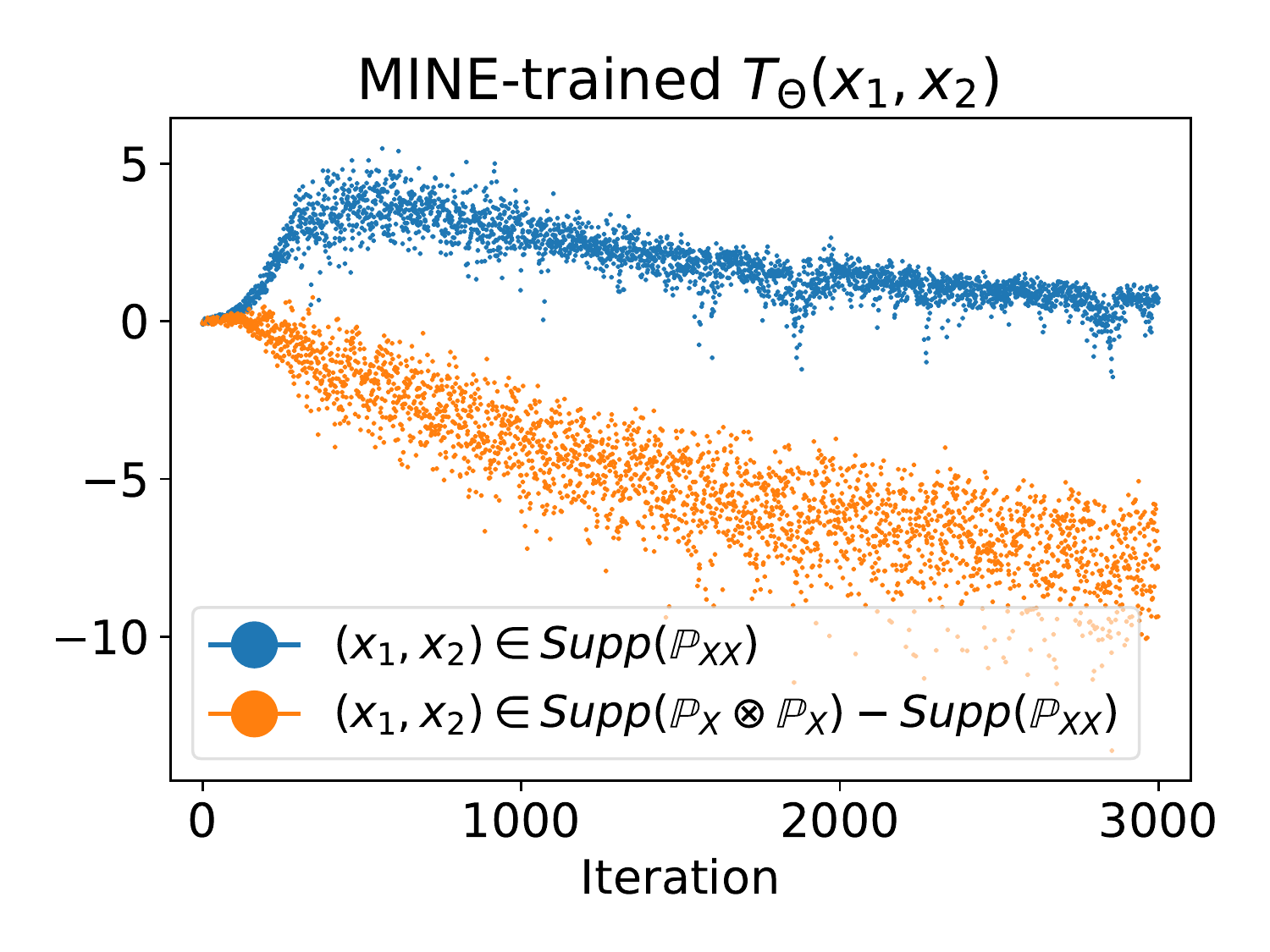}}
\subfloat{\includegraphics[width=0.49\columnwidth]{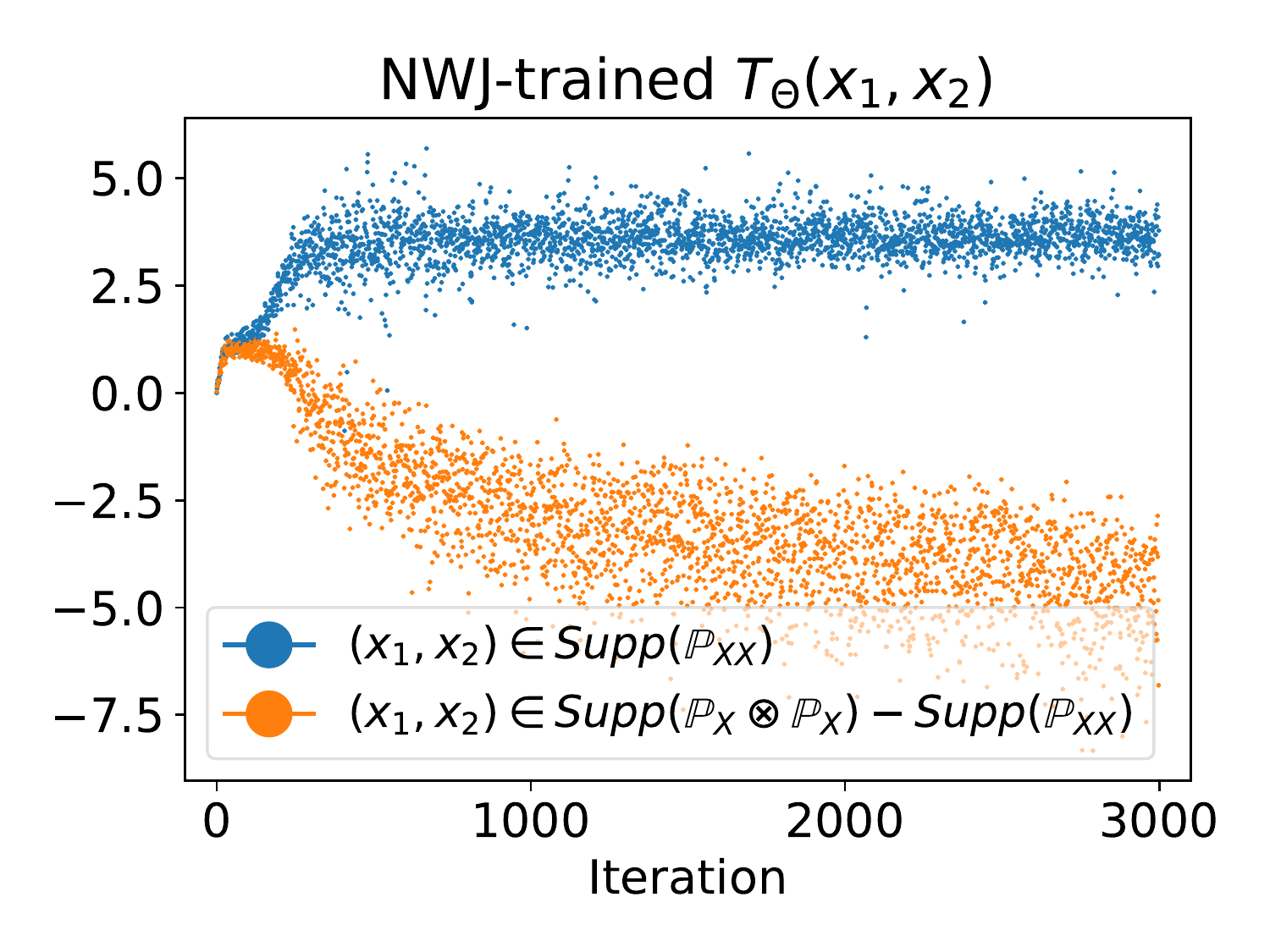}}
\caption{
    Training $T_\theta$ using $I_\text{MINE}$ and $I_\text{NWJ}$ with batch size $100$ for $3000$ iterations.
    We observe the statistics network outputs $T_\theta(x_1, x_2)$, where we split the outputs into two: \textcolor{blue}{$(x_1, x_2) \in Supp(\mathbb{P}_{X X})$} and \textcolor{orange}{$(x'_1, x'_2) \in Supp(\mathbb{P}_X \otimes \mathbb{P}_X) \setminus Supp(\mathbb{P}_{X X})$}.
}
\label{figs:onehot_toy_stable_scatter}
\end{figure}

\paragraph{Seemingly Stable Case}
We first observe the behaviors of the statistics network $T_\theta$ when the losses are seemingly stable, producing a successful MI estimate.
\cref{figs:onehot_toy_stable_loss} shows the MI estimates and the two terms that construct each MI estimate per batch.
We observe that the first and the second term estimates of $I_\text{MINE}$, unlike $I_\text{NWJ}$, drifting in parallel even after the MI estimate converge.
This is due to the free constant term $C$ in the optimal $T^*$ of $D_\text{DV}$, where the self-normalizing $D_\text{NWJ}$ avoids this problem.
This drifting phenomenon implies that $T_\theta$ is not stable even after the loss seems to be converged, as shown in \cref{figs:onehot_toy_stable_scatter}.
Also, the plot demonstrates how $T_\theta$ is trained; it isolates the samples $(x, y) \sim \mathbb{P}_{X Y}$ from the samples $(x, y) \sim \mathbb{P}_X \otimes \mathbb{P}_Y$.

\paragraph{Unstable Case}
We also demonstrate the behaviors of $T_\theta$ when the losses get unstable in \cref{figs:onehot_toy_unstable}.
We reduce the batch size to make the optimization unstable, where this behavior is often reported in multiple works \citep{oord2018representation, he2020momentum, chen2020simple}.
However, even though the losses seem unstable, $T_\theta$ successfully discerns the samples before the outputs explode.
We believe that this is because of how $T_\theta$ is optimized during training.
The statistics network outputs $T_\theta(x_1, x_2)$ of $(x_1, x_2) \in Supp(\mathbb{P}_{X X})$ gets increased by the first term but occasionally decreased by the second term.
However, $T_\theta(x'_1, x'_2)$ of $(x'_1, x'_2) \in Supp(\mathbb{P}_X \otimes \mathbb{P}_X) \setminus Supp(\mathbb{P}_{X X})$ gets decreased whatsoever, as $(x'_1, x'_2)$ is used only for the second term.
This makes the second term more unstable and motivates us to regularize it for better numerical stability during optimization.

\begin{figure}[t]
\centering
\subfloat{\includegraphics[width=0.49\columnwidth]{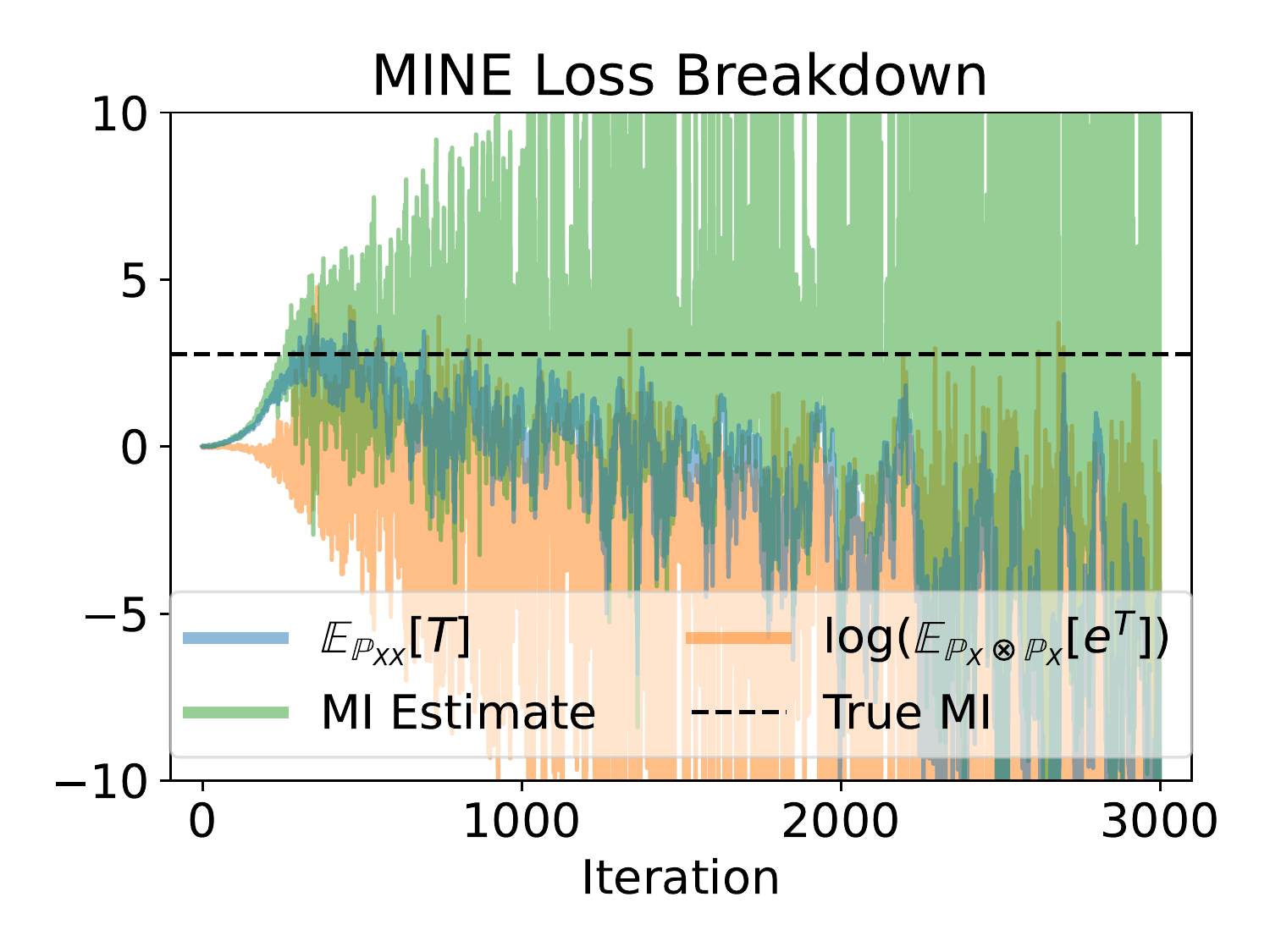}}
\subfloat{\includegraphics[width=0.49\columnwidth]{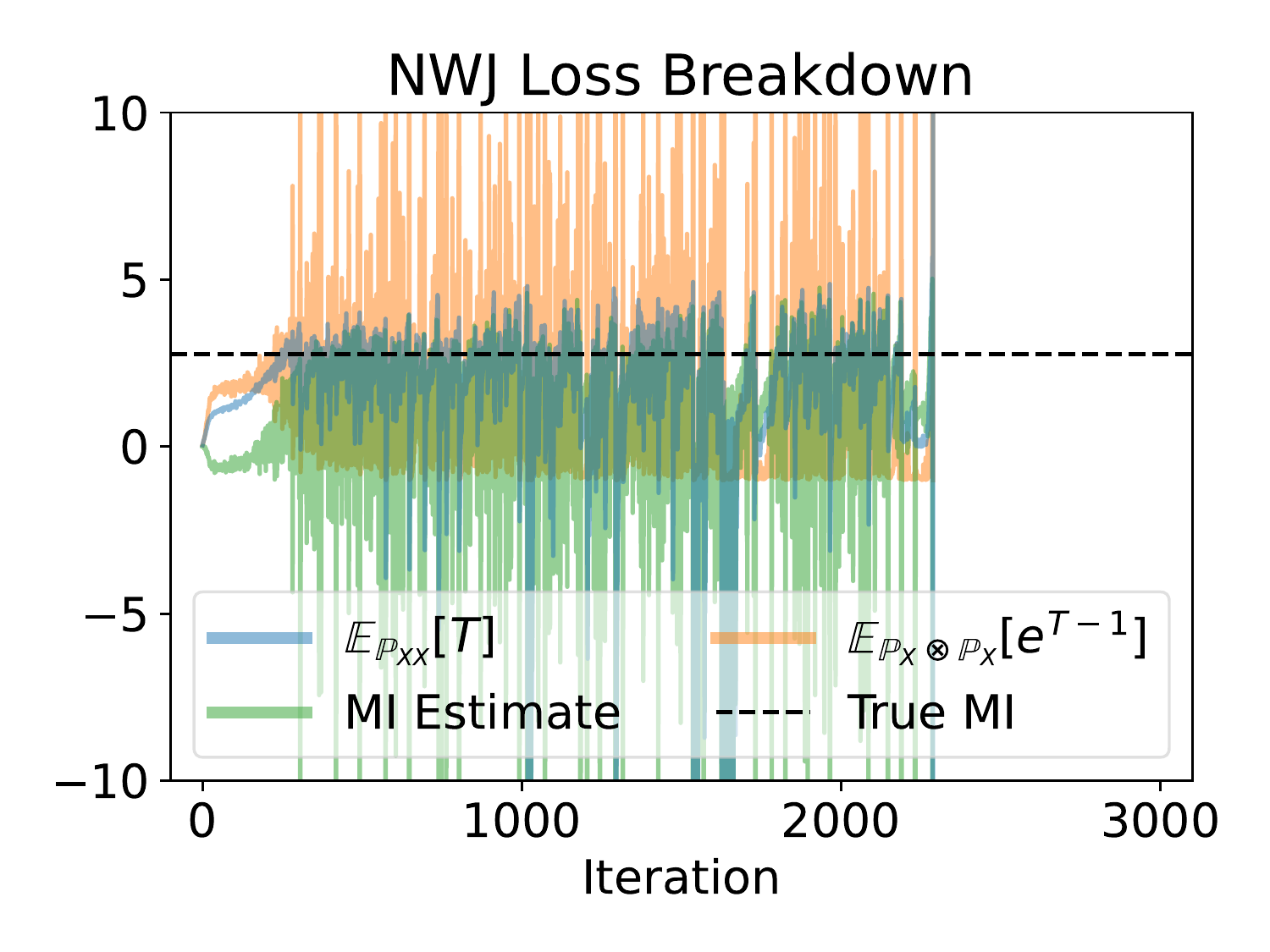}}\\
\subfloat{\includegraphics[width=0.49\columnwidth]{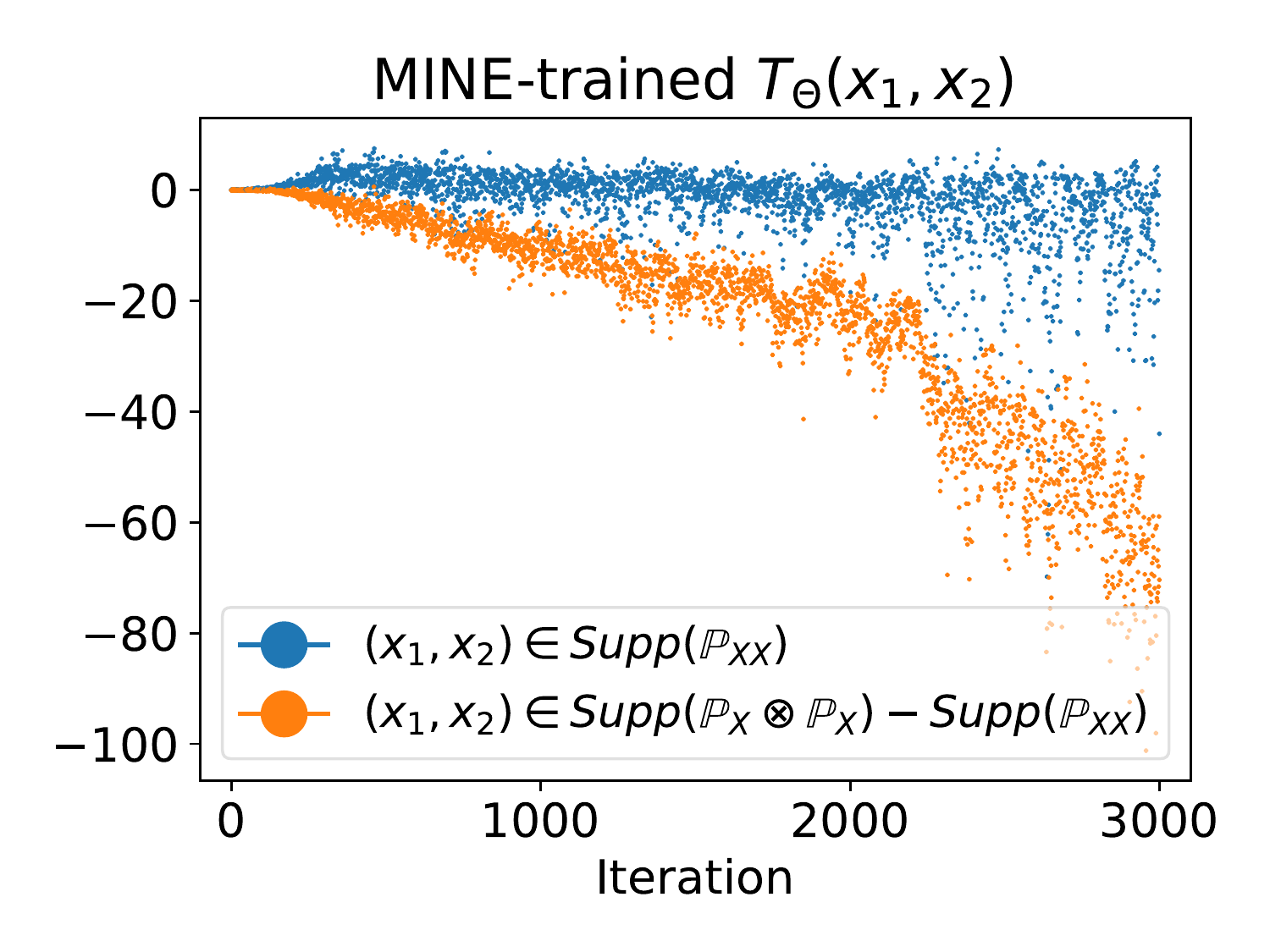}}
\subfloat{\includegraphics[width=0.49\columnwidth]{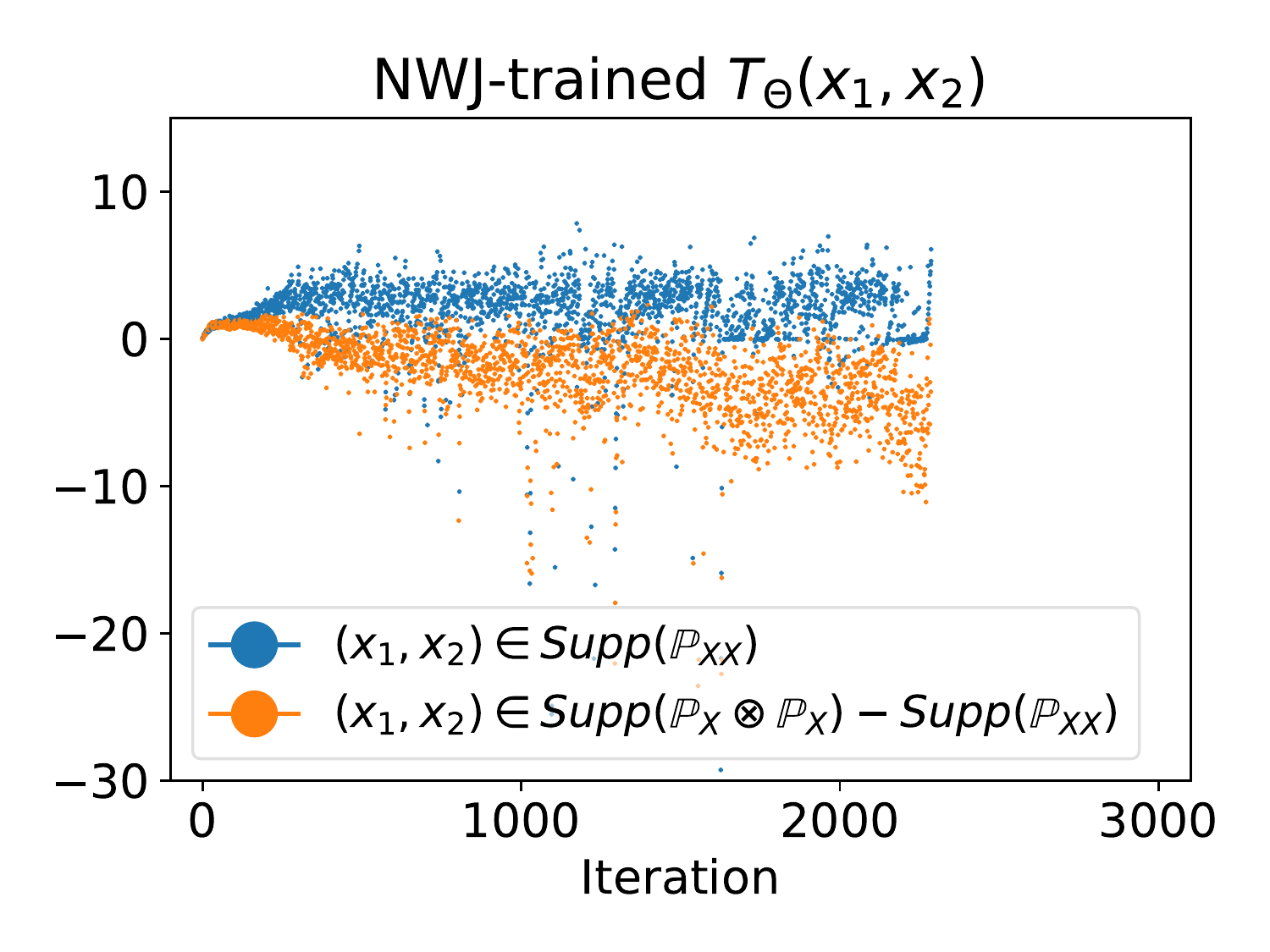}}
\caption{
    Training $T_\theta$ using $I_\text{MINE}$ and $I_\text{NWJ}$ with a reduced batch size of $32$ for $3000$ iterations.
    MI estimate diverges for both cases.
    Also, $I_\text{NWJ}$ incurs exploding $T_\theta$ outputs, hence the empty plot after ~23k iterations.
}
\label{figs:onehot_toy_unstable}
\end{figure}

To summarize, we suspect the instability of variational bounds comes from two reasons. 
Firstly, the statistics network did not converge even after the loss seemingly converged.
We argue that this is due to the unnormalized constant term in the optimal $T^*$ of $D_\text{DV}$, where $D_\text{NWJ}$ successfully avoids via self-normalization.
Secondly, the loss gets unstable as $T_\theta(x'_1, x'_2)$ endlessly decrease due to the second term.
This observation is also consistent with the theoretical findings of \cite{Song2020Understanding,mcallester2018formal}, where they show that large variance of the second term leads to failed MI estimation.
We claim that the outputs have to be regularized in some form to avoid the instability.

\begin{figure}[t]
\centering
\includegraphics[width=0.95\columnwidth]{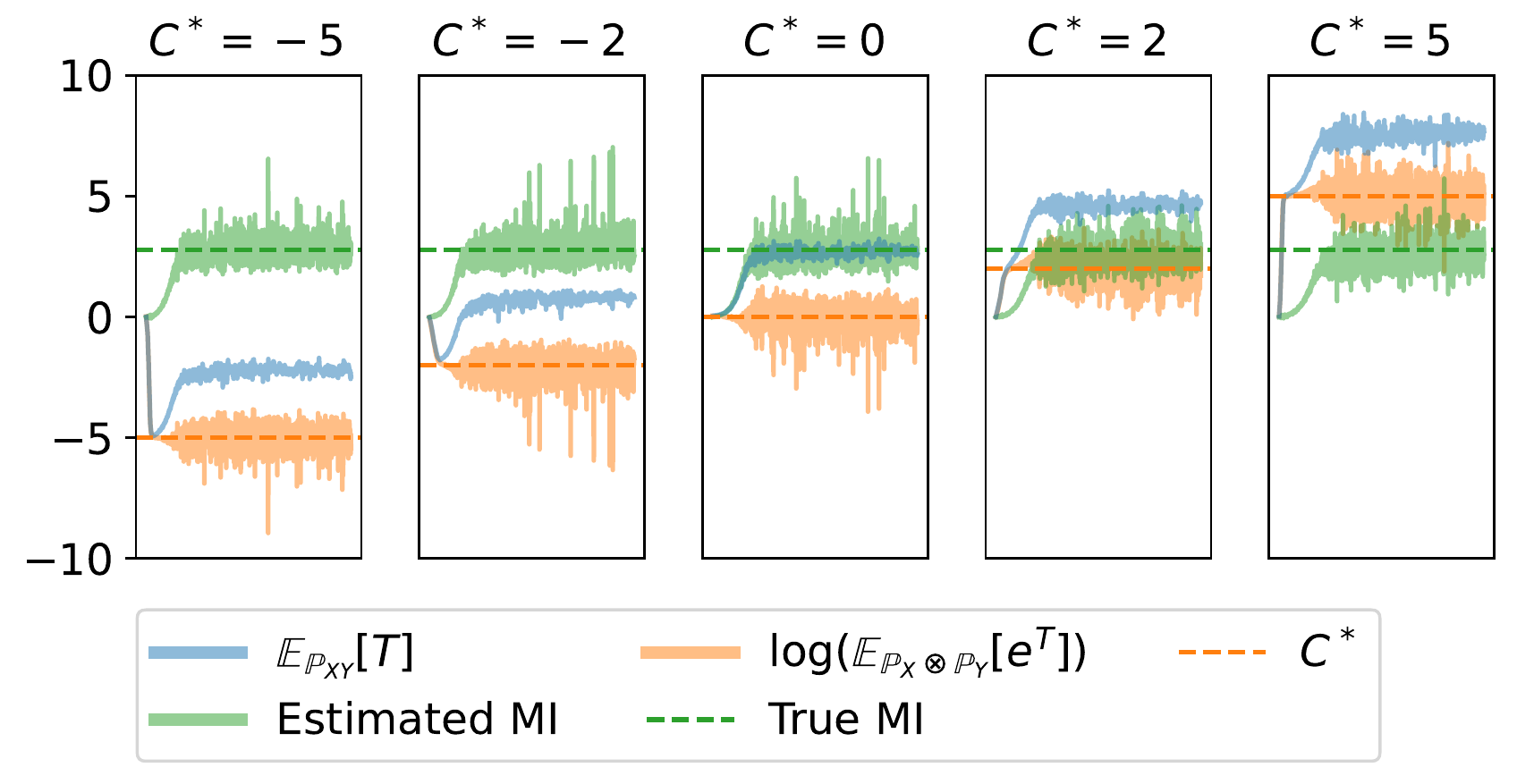}
\caption{
    Training $T_\theta$ with batch size $100$ for $1500$ iterations using $I_\text{ReMINE}$ with different $C^*$ (orange dotted line).
}
\label{figs:remine_different_c}
\end{figure}

\section{Stabilizing the MI Bounds} \label{sec:method}
In this section, we introduce two novel regularized representations and its corresponding losses to tackle the instability during optimization.
We show both theoretically and experimentally that adding regularization mitigates the unstable behavior of the statistics network $T_\theta$.
We also describe a simple windowing method that can sidestep the batch size limitation problem of the MI estimation problem.
We defer all the proofs to the Appendix.

\paragraph{Regularized representations}
We stabilize the two existing representations $D_\text{DV}$ and $D_\text{NWJ}$ by regularizing the second term.
We introduce two novel representations: Regularized DV ($D_\text{ReDV}$) and Regularized NWJ ($D_\text{ReNWJ}$),
\begin{align}\begin{split}
    D_\text{ReDV}(X,Y) := \sup_{T:\Omega \to \mathbb{R}} & \mathbb{E}_\mathbb{P}(T) - \log(\mathbb{E}_\mathbb{Q}(e^T)) \\ 
& - d(\log(\mathbb{E}_\mathbb{Q}(e^T)), C^*),
\end{split}\end{align}
\begin{align}\begin{split}
    D_\text{ReNWJ}(X,Y) := \sup_{T:\Omega \to \mathbb{R}} & \mathbb{E}_\mathbb{P}(T) - \mathbb{E}_\mathbb{Q}(e^{T-1})) \\ 
& - d(\mathbb{E}_\mathbb{Q}(e^{T-1})), 1),
\end{split}\end{align}
where $C^* \in \mathbb{R}$ is any constant and $d(*,*)$ is a distance function on $\mathbb{R}$.

\begin{thm} \label{thm:redv_renwj} $D_\text{ReDV}$ and $D_\text{ReNWJ}$ is a dual representation for $D_\text{KL}$ such that 
\begin{align}
D_\text{KL} (\mathbb{P} || \mathbb{Q}) &= D_\text{ReDV}(X, Y), \\
D_\text{KL} (\mathbb{P} || \mathbb{Q}) &= D_\text{ReNWJ}(X, Y).    
\end{align}
\end{thm}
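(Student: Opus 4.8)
The plan is to establish each of the two identities by the same two-step argument: a one-line upper bound coming from the nonnegativity of the distance $d$, and then a matching lower bound obtained by reparametrizing an arbitrary statistics function so that the added regularizer vanishes. Throughout I will treat $D_\text{ReDV}$ and $D_\text{ReNWJ}$ as functions of $\mathbb{P}$ and $\mathbb{Q}$ and specialize to $\mathbb{P}=\mathbb{P}_{XY}$, $\mathbb{Q}=\mathbb{P}_X\otimes\mathbb{P}_Y$ only at the end, exactly as in \eqref{DV} and \eqref{NWJ}.

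First I would treat $D_\text{ReDV}$. Since $d(\cdot,\cdot)\ge 0$, the regularized objective is pointwise dominated by the Donsker--Varadhan objective, so $D_\text{ReDV}(X,Y)\le D_\text{DV}(X,Y)=D_\text{KL}(\mathbb{P}||\mathbb{Q})$ by \eqref{DV}. For the reverse inequality I would use the shift-invariance of the DV objective: given any admissible $T$ (one for which $\mathbb{E}_\mathbb{P}(T)$ and $\mathbb{E}_\mathbb{Q}(e^T)$ are finite), set $\tilde T := T - \log\mathbb{E}_\mathbb{Q}(e^T) + C^*$. A direct computation gives $\mathbb{E}_\mathbb{Q}(e^{\tilde T}) = e^{C^*}$, so the penalty term equals $d(\log\mathbb{E}_\mathbb{Q}(e^{\tilde T}), C^*) = d(C^*, C^*) = 0$, while the additive constant cancels in $\mathbb{E}_\mathbb{P}(\tilde T) - \log\mathbb{E}_\mathbb{Q}(e^{\tilde T}) = \mathbb{E}_\mathbb{P}(T) - \log\mathbb{E}_\mathbb{Q}(e^T)$. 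Taking the supremum over $T$ yields $D_\text{ReDV}(X,Y) \ge D_\text{DV}(X,Y) = D_\text{KL}(\mathbb{P}||\mathbb{Q})$, closing the first identity. Equivalently one may substitute the explicit DV optimizer $T^* = \log\frac{d\mathbb{P}}{d\mathbb{Q}} + C^*$, for which $\mathbb{E}_\mathbb{Q}(e^{T^*}) = e^{C^*}$; I prefer the reparametrization form because it does not require the DV supremum to be attained.

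For $D_\text{ReNWJ}$ the upper bound is identical: $d\ge 0$ gives $D_\text{ReNWJ}(X,Y)\le D_\text{NWJ}(X,Y)=D_\text{KL}(\mathbb{P}||\mathbb{Q})$ by \eqref{NWJ}. The lower bound needs slightly more care because the NWJ objective is not shift-invariant. Given admissible $T$, set $\tilde T := T - \log\mathbb{E}_\mathbb{Q}(e^{T-1})$, so that $\mathbb{E}_\mathbb{Q}(e^{\tilde T - 1}) = 1$ and hence $d(\mathbb{E}_\mathbb{Q}(e^{\tilde T - 1}), 1) = 0$; then, using the elementary bound $x \ge 1 + \log x$ for $x>0$ with $x = \mathbb{E}_\mathbb{Q}(e^{T-1})$, one checks $\mathbb{E}_\mathbb{P}(\tilde T) - \mathbb{E}_\mathbb{Q}(e^{\tilde T-1}) = \mathbb{E}_\mathbb{P}(T) - \log\mathbb{E}_\mathbb{Q}(e^{T-1}) - 1 \ge \mathbb{E}_\mathbb{P}(T) - \mathbb{E}_\mathbb{Q}(e^{T-1})$, i.e., the reparametrization never decreases the objective. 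Supremizing over $T$ gives $D_\text{ReNWJ}(X,Y) \ge D_\text{NWJ}(X,Y) = D_\text{KL}(\mathbb{P}||\mathbb{Q})$, completing the second identity. One could instead plug in the unique NWJ optimizer $T^* = \log\frac{d\mathbb{P}}{d\mathbb{Q}} + 1$, which already satisfies $\mathbb{E}_\mathbb{Q}(e^{T^*-1}) = 1$.

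The only subtle point, and the place I would be most careful, is feasibility: each reparametrized $\tilde T$ must remain admissible for the underlying DV / NWJ variational problem, i.e., all relevant expectations must stay finite. This is automatic, since shifting $T$ by the finite constant $\log\mathbb{E}_\mathbb{Q}(e^T)$ (resp.\ $\log\mathbb{E}_\mathbb{Q}(e^{T-1})$) preserves finiteness of $\mathbb{E}_\mathbb{P}(T)$ and only rescales $\mathbb{E}_\mathbb{Q}(e^T)$ (resp.\ $\mathbb{E}_\mathbb{Q}(e^{T-1})$) by a positive finite factor; moreover $\mathbb{P} \ll \mathbb{Q}$, needed for the explicit optimizers to be well defined, is guaranteed by the standing assumption that $D_\text{KL}(\mathbb{P}||\mathbb{Q})$ is finite. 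Beyond this bookkeeping the argument rests entirely on the classical representations \eqref{DV} and \eqref{NWJ}, so no genuinely new analytic estimate is required.
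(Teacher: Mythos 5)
Your proof is correct and follows the same two-step skeleton as the paper: the upper bound comes from nonnegativity of the distance $d$, and the lower bound comes from exhibiting test functions for which the penalty vanishes. The one genuine difference is in how you obtain the lower bound. The paper first proves an auxiliary lemma that every $T=\log\frac{d\mathbb{P}}{d\mathbb{Q}}+C^*$ is a DV optimizer, plugs $T^*=\log\frac{d\mathbb{P}}{d\mathbb{Q}}+C^*$ (resp. $T^*=\log\frac{d\mathbb{P}}{d\mathbb{Q}}+1$) into the regularized objective, and observes that the penalty is zero there. You instead reparametrize an \emph{arbitrary} admissible $T$ by a constant shift $\tilde T = T - \log\mathbb{E}_\mathbb{Q}(e^{T})+C^*$ (resp. $\tilde T = T - \log\mathbb{E}_\mathbb{Q}(e^{T-1})$), checking that the penalty vanishes at $\tilde T$ and that the unregularized objective does not decrease (trivially for DV by shift invariance; for NWJ via $x\ge 1+\log x$), then take suprema. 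Your route is slightly more robust: it does not need the DV/NWJ supremum to be attained, nor the explicit density-ratio optimizer to lie in the class $T:\Omega\to\mathbb{R}$ (which can be delicate where $\tfrac{d\mathbb{P}}{d\mathbb{Q}}=0$), and it sidesteps the paper's auxiliary lemma entirely. You also correctly flag admissibility of $\tilde T$ as the only subtle point and resolve it. The paper's version is shorter once the lemma is in hand; yours is marginally more self-contained and avoids an implicit attainment assumption. Both are valid and lead to the same conclusion.
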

We emphasize that both representations are not MI-specific but dual representations of $D_\text{KL}$, which can be easily extended to numerous variational MI bounds based on $D_\text{DV}$ and $D_\text{NWJ}$.
Especially, the newly added regularizer grants $D_\text{ReDV}$ the normalizing property, effectively solving the drifting problem of $D_\text{DV}$.

\paragraph{Regularizing $I_\text{MINE}$ and $I_\text{NWJ}$}
Based on $D_\text{ReDV}$ and $D_\text{ReNWJ}$, we propose a novel neural network-driven variational MI bound $I_\text{ReMINE}$ and $I_\text{ReNWJ}$ by choosing the Euclidean distance $d(x, y) = (x-y)^2$ and the log-Euclidean distance $d(x, y) = (\log x - \log y)^2$, respectively.
\begin{align}\begin{split}
    I_{\text{ReMINE}}(X, & Y) := \mathbb{E}_{\mathbb{P}_{X Y}^{(n)}}(T_\theta(x, y)) \\
     & - \log(\mathbb{E}_{\mathbb{P}_X^{(n)} \otimes \mathbb{P}_Y^{(n)}}(e^{T_\theta(x, y)})) \\
     & - \lambda (\log(\mathbb{E}_{\mathbb{P}_X^{(n)} \otimes \mathbb{P}_Y^{(n)}}(e^{T_\theta(x, y)})) - C^*)^2,
\end{split}\end{align}
\begin{align}\begin{split}
    I_{\text{ReNWJ}}(X, & Y) := \mathbb{E}_{\mathbb{P}_{X Y}^{(n)}}(T_\theta(x, y)) \\
     & - \mathbb{E}_{\mathbb{P}_X^{(n)} \otimes \mathbb{P}_Y^{(n)}}(e^{T_\theta(x, y)-1}) \\
     & - \lambda (\log(\mathbb{E}_{\mathbb{P}_X^{(n)} \otimes \mathbb{P}_Y^{(n)}}(e^{T_\theta(x, y)-1})))^2,
\end{split}\end{align}
where $C^* \in \mathbb{R}$ is any constant and $\lambda$ is a hyperparameter that controls the degree of regularization.
We can also easily regularize other losses such as $I_\text{InfoNCE}$, $I_\text{SMILE}$, $I_\text{TUBA}$, and $I_\text{JS}$ in a plug-and-play manner.
See \cref{table:loss_setting} for more details on its regularized counterparts.

\paragraph{Solving the drifting problem}
Due to the self-regularizing nature of $D_\text{NWJ}$, we must fix $C^*=1$ for $I_\text{ReNWJ}$.
We also set $C^*=0$ for $I_\text{ReMINE}$ on future experiments, but to demonstrate the ability of the regularizer term to stop the drifting, we experiment with various $C^*$ in \cref{figs:remine_different_c}.
Comparing to $I_\text{MINE}$ in \cref{figs:onehot_toy_stable_loss}, we can observe that $I_\text{ReMINE}$ successfully solves the drifting problem by regularizing the second term to have a single value.

\begin{figure}[t] 
\centering
\subfloat{\includegraphics[width=0.49\columnwidth]{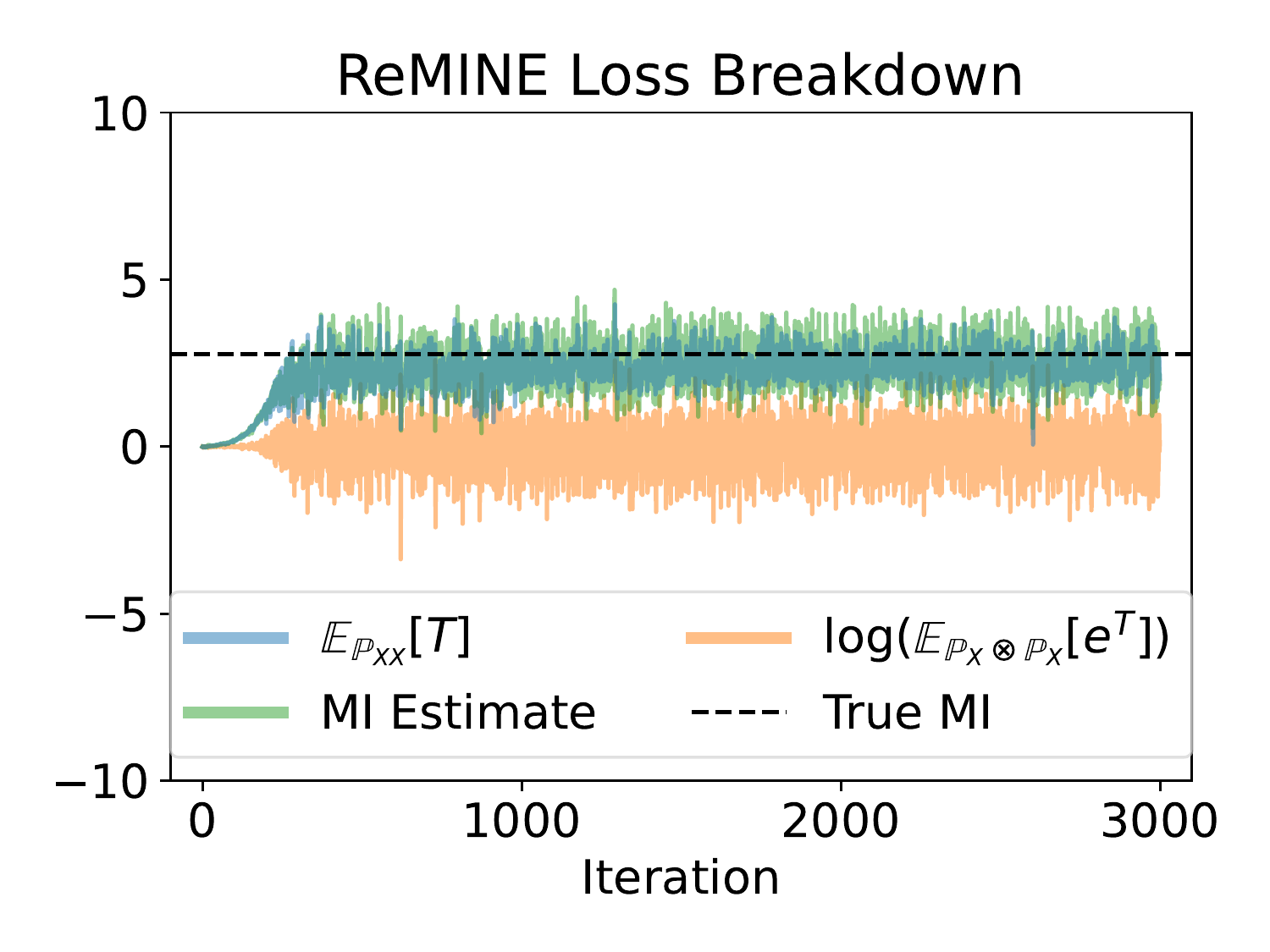}}
\subfloat{\includegraphics[width=0.49\columnwidth]{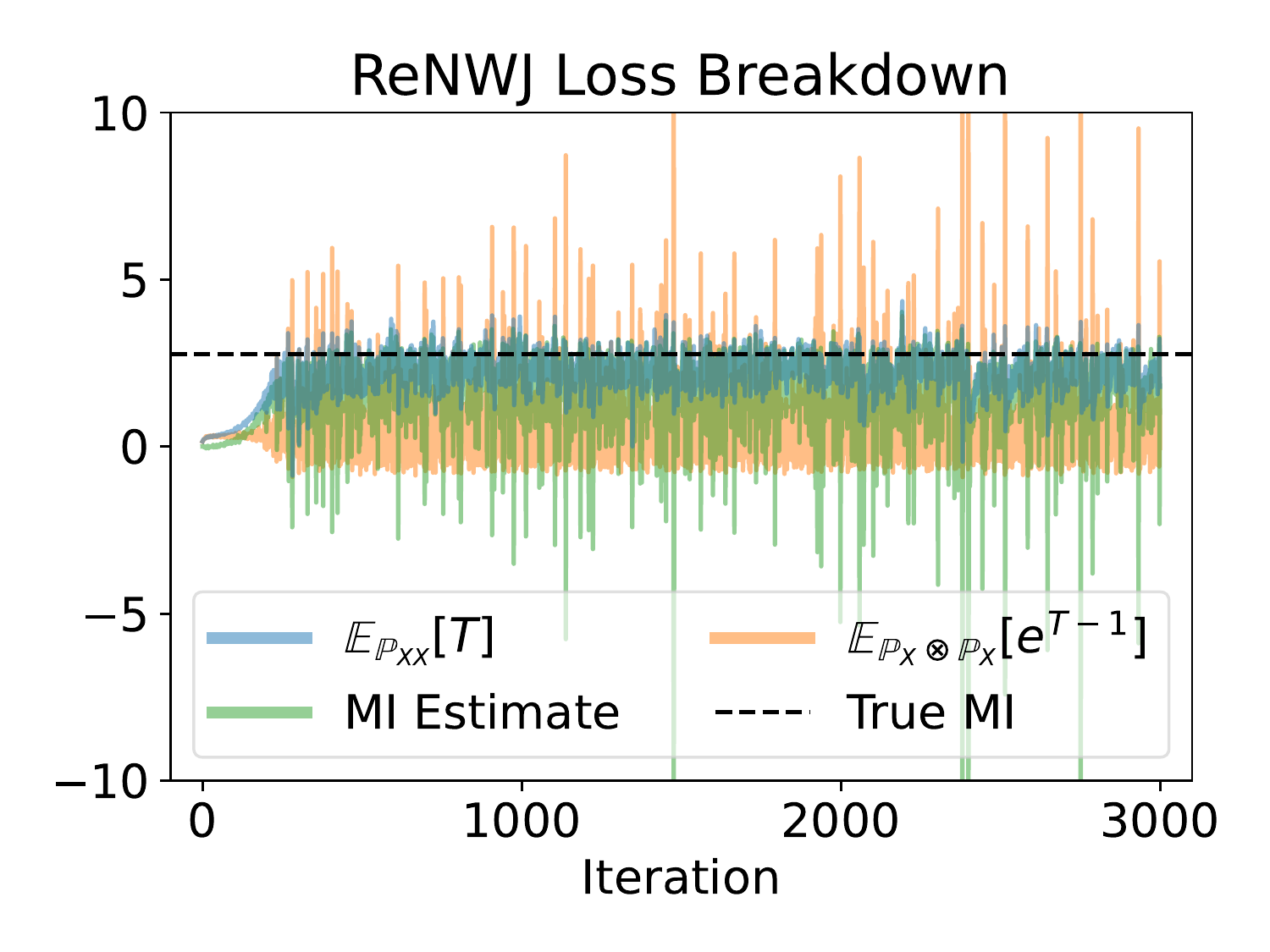}}\\
\subfloat{\includegraphics[width=0.49\columnwidth]{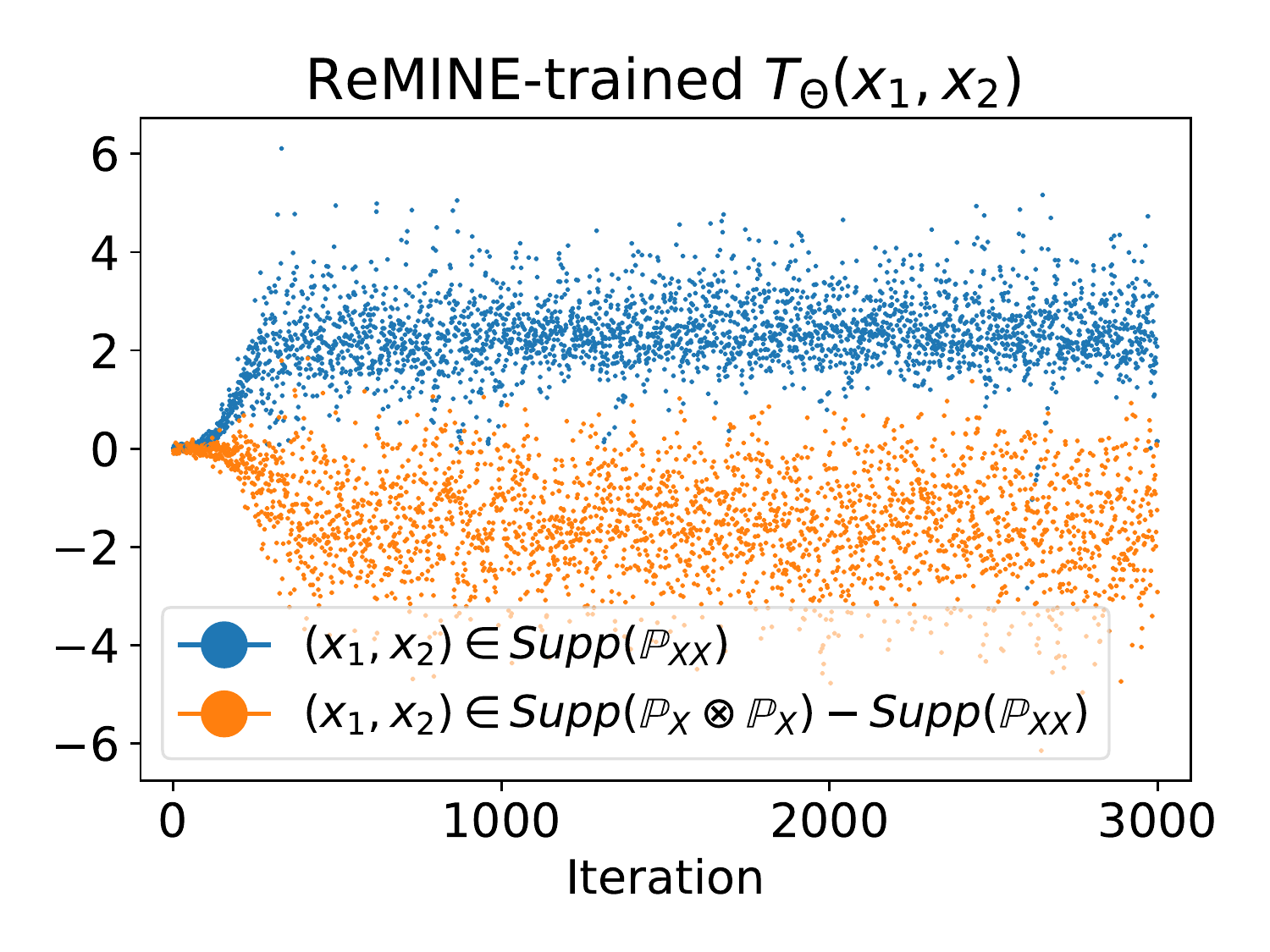}}
\subfloat{\includegraphics[width=0.49\columnwidth]{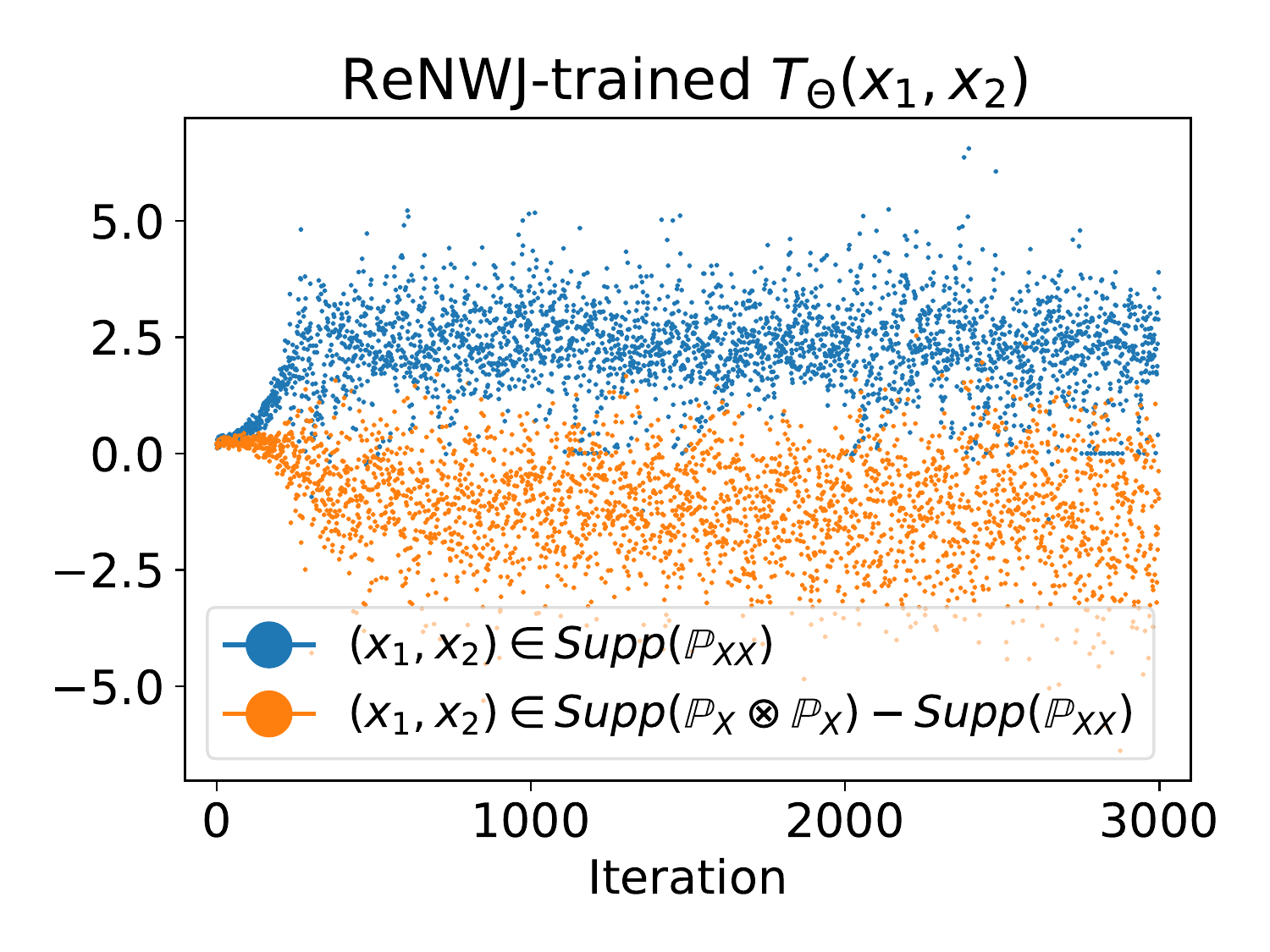}}
\caption{
    Training $T_\theta$ using the regularized counterparts, $I_\text{ReMINE}$ and $I_\text{ReNWJ}$, with the same small batch settings from \cref{figs:onehot_toy_unstable}.
    Regularization effectively mitigates both instability symptoms, shifting and exploding.
}
\label{figs:onehot_toy_remine}
\end{figure}

\paragraph{Solving the explosion problem}
We previously observed the instability of $I_\text{MINE}$ and $I_\text{NWJ}$ when using a small batch in \cref{figs:onehot_toy_unstable}.
We apply the same setting to $I_\text{ReMINE}$ and $I_\text{ReNWJ}$ to observe if the regularizer mitigates the instability problem.
Both regularized losses successfully avoid the explosion problem and limit the statistics network outputs $T_\theta(x_1, x_2)$ within a certain boundary.
As discussed in \cref{sec:motivating_experiment}, the second term was the culprit of the variance in MI estimation.
The newly added term directly regularizes it to stabilize training, giving the statistics network $T_\theta$ additional hints for the second term to converge to a specific value $C^*$ successfully.
Furthermore, we empirically found that our regularization works well with $I_\text{SMILE}$’s strategy of clipping $T_\theta$.
Gradient zeros out for the original $I_\text{SMILE}$ if $T_\theta(x, y)$ exceeds a certain threshold.
This behavior makes $T_\theta$ act as if it were frozen, failing to further optimize during training.
However, with the regularizer term, we can clip $T_\theta(x, y)$ only on first and second term, i.e., on the original loss.
Now, clipping filters out the noisy gradients while the gradients calculated from the regularizer avoid freezing $T_\theta$ entirely.

\paragraph{Mathematical properties of $I_\text{ReMINE}$ and $I_\text{ReNWJ}$}
Following \cite{belghazi2018mutual}, we show the soundness of $I_\text{ReMINE}$ and $I_\text{ReNWJ}$ in two perspectives, strong consistency and sample complexity.
These properties relate to whether the trained $T_\theta$ can be sufficiently similar to the optimal $T^*$.

\begin{thm}\label{thm:remine_strong_consistency} $I_\text{ReMINE}$ and $I_\text{ReNWJ}$ are strongly consistent. \end{thm}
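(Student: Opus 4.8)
The plan is to follow the two-part template of \cite{belghazi2018mutual} for strong consistency: for every $\varepsilon>0$ exhibit a family of statistics networks and an $N$ such that, almost surely, $|D_\text{KL}(\mathbb{P}\|\mathbb{Q}) - \widehat{I}_n| \le \varepsilon$ for all $n \ge N$, where $\widehat{I}_n$ denotes $I_\text{ReMINE}$ or $I_\text{ReNWJ}$ evaluated on $n$ samples. I would split the error into an \emph{approximation} part (the best network in the chosen family does not exactly realize the population bound) and an \emph{estimation} part (the empirical objective deviates from its population counterpart), and bound each by $\varepsilon/2$.

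For the approximation part, I would invoke \cref{thm:redv_renwj}: the population regularized objectives still equal $D_\text{KL}(\mathbb{P}\|\mathbb{Q})$, and the supremum is approached by the same $T^*$ as for $D_\text{DV}$ and $D_\text{NWJ}$, for which the added regularizer vanishes. Indeed, choosing the free constant $C = C^*$ in $T^* = \log\frac{d\mathbb{P}}{d\mathbb{Q}} + C$ gives $\log(\mathbb{E}_\mathbb{Q}(e^{T^*})) = C^*$, and the self-normalization of $D_\text{NWJ}$ forces $\mathbb{E}_\mathbb{Q}(e^{T^*-1}) = 1$, so in both cases $d(\cdot,\cdot) = 0$ at $T^*$. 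By the universal approximation theorem there is a network $T_\theta$ with $\|T_\theta - T^*\|_\infty$ small on the compact $\Omega$; since $T \mapsto \mathbb{E}_\mathbb{P}(T)$ and $T \mapsto \mathbb{E}_\mathbb{Q}(e^{T})$ are continuous in $\|\cdot\|_\infty$, $e^{T_\theta}$ is bounded away from $0$ and $\infty$, and $d$ is the (log-)Euclidean distance, the regularized population objective at $T_\theta$ lies within $\varepsilon/2$ of $D_\text{KL}(\mathbb{P}\|\mathbb{Q})$.

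For the estimation part, I would extend the uniform law of large numbers argument of \cite{belghazi2018mutual}: over a compact parameter set $\Theta$, the maps $\theta \mapsto T_\theta$ and $\theta \mapsto e^{T_\theta}$ are uniformly bounded on $\Omega$ and Lipschitz in $\theta$, so $\sup_{\theta\in\Theta}|\mathbb{E}_{\mathbb{P}^{(n)}}(T_\theta) - \mathbb{E}_{\mathbb{P}}(T_\theta)| \to 0$ and $\sup_{\theta\in\Theta}|\mathbb{E}_{\mathbb{Q}^{(n)}}(e^{T_\theta}) - \mathbb{E}_{\mathbb{Q}}(e^{T_\theta})| \to 0$ almost surely. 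The new regularizer is a fixed continuous function of these empirical averages, hence it also converges uniformly a.s.; summing the pieces shows the empirical regularized objective is within $\varepsilon/2$ of its population value for all $n \ge N$, almost surely. A triangle inequality then closes the argument for both $I_\text{ReMINE}$ and $I_\text{ReNWJ}$.

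The main obstacle is ensuring the regularizer spoils neither half. In the approximation half this is the observation above that the regularizer vanishes at the optimizer, via the choice of the free constant. In the estimation half it is that the compositions $r \mapsto (\log r - C^*)^2$ and $r \mapsto (\log r)^2$ applied to $r = \mathbb{E}_{\mathbb{Q}^{(n)}}(e^{T_\theta-\cdot})$ are uniformly continuous over the relevant range: compactness of $\Omega$ and $\Theta$ confines $r$ to a fixed interval $[a,b] \subset (0,\infty)$ on which $\log$ and squaring are Lipschitz, so uniform convergence of $r$ transfers to uniform convergence of the regularizer. The remaining steps are a routine adaptation of the MINE consistency proof.
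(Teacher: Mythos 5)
Your proposal is correct and follows essentially the same route as the paper: split the error into an approximation part (controlled by the universal approximation theorem, exploiting that the regularizer vanishes at the optimal $T^*$ by choosing the free constant equal to $C^*$ for ReMINE and by self-normalization for ReNWJ) and an estimation part (controlled by the uniform law of large numbers over a compact parameter set, with Lipschitz-on-compacta arguments transferring uniform convergence of the empirical averages to the regularizer term), then close with the triangle inequality. The paper organizes this as a pair of approximation/estimation lemmas for each of $I_\text{ReMINE}$ and $I_\text{ReNWJ}$, but the underlying argument is the one you describe.
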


For the two losses, we also provide the mathematical bound on the number of samples required for the empirical MI estimation at a given accuracy and with high confidence.
Similar to \cite{belghazi2018mutual}, let $T_\theta$ satisfy $L$-Lipschitz with respect to the parameter $\theta$ such that $|\theta|<K$ and $d$ is dimension of the parameter space of $T_\theta$.
\begin{thm}\label{thm:sample_complexity} Assume that $T_\theta$ is bounded above by $M$. Let $k$ be the number of sample means. Given any $\epsilon, \delta$ of the desired accuracy and confidence parameters, we have
\begin{equation}
\mathcal{P}(|I_{\text{ReMINE}}(X;Y) - I(X, Y)| \leq \epsilon) \geq 1 - \delta,
\end{equation}
whenever the number $n$ of samples satisfies
\begin{equation}
n \geq \frac{d\log(24KL\sqrt{d}/\epsilon) + 2dM + log(2/\delta)}{\epsilon^2k/(2M^2)}.
\end{equation}
\end{thm}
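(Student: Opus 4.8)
The plan is to follow the template of the sample-complexity argument for $I_\text{MINE}$ in \cite{belghazi2018mutual}, adapting it to the extra regularizer and to the $k$-sample-mean (windowing) estimator. Write $\widehat L_n(\theta)$ for the empirical $I_\text{ReMINE}$ functional at parameter $\theta$ and $L(\theta)$ for its population version, with $\mathbb P=\mathbb P_{XY}$ and $\mathbb Q=\mathbb P_X\otimes\mathbb P_Y$, so that $I_\text{ReMINE}(X;Y)=\sup_{|\theta|<K}\widehat L_n(\theta)$. By \cref{thm:redv_renwj} the population supremum equals $D_\text{ReDV}=D_\text{KL}(\mathbb P\|\mathbb Q)=I(X,Y)$ (under the same expressiveness hypothesis used for \cref{thm:remine_strong_consistency}), and the regularizer vanishes at the maximizer because the optimal $T^\ast$ of $D_\text{DV}$ may be shifted so that $\log\mathbb E_\mathbb Q(e^{T^\ast})=C^\ast$. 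Hence it suffices to establish the uniform deviation bound $\mathcal P\big(\sup_{|\theta|<K}|\widehat L_n(\theta)-L(\theta)|\le\epsilon\big)\ge 1-\delta$ under the stated lower bound on $n$; a triangle inequality then yields the claim.

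\textbf{Concentration at a fixed $\theta$.} Fix $\theta$ and decompose $\widehat L_n(\theta)=A-\log B-\lambda(\log B-C^\ast)^2$, where $A=\mathbb E_{\mathbb P_{XY}^{(n)}}(T_\theta)$ and $B=\mathbb E_{\mathbb P_X^{(n)}\otimes\mathbb P_Y^{(n)}}(e^{T_\theta})$. Boundedness of $T_\theta$ — above by $M$ by hypothesis, and below by a constant of the same order using $|\theta|<K$, $L$-Lipschitzness in $\theta$, and compactness of $\Omega$, which I also absorb into $M$ — gives $T_\theta\in[-M,M]$, $e^{T_\theta}\in[e^{-M},e^M]$, and by Jensen $B\ge e^{-M}$, so $\log B\in[-M,M]$. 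Applying Hoeffding's inequality to the $k$ independent sample means composing $A$ and $B$ (this is where windowing helps: averaging $k$ batches of size $n$ sharpens concentration to an effective sample size $nk$), then propagating the resulting $O(M\sqrt{\log(1/\delta')/(nk)})$ deviations through $x\mapsto\log x$, which is $e^M$-Lipschitz on $[e^{-M},e^M]$, and through $(\log B-C^\ast)^2$, which is Lipschitz on the bounded range of $\log B$, controls $|\widehat L_n(\theta)-L(\theta)|$ by a multiple of $\epsilon$ once $nk\gtrsim M^2\log(1/\delta')/\epsilon^2$.

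\textbf{Covering and union bound.} Cover $\{\theta:|\theta|<K\}\subset\mathbb R^d$ by an $\eta$-net of cardinality at most $(3K\sqrt d/\eta)^d$. Since $T_\theta$ is $L$-Lipschitz in $\theta$, the map $\theta\mapsto e^{T_\theta}$ is $Le^M$-Lipschitz, and both $\widehat L_n(\cdot)$ and $L(\cdot)$ are Lipschitz in $\theta$ with constant $O(Le^M)$ (the $e^M$ arising from the derivative of the exponential and from $1/B\le e^M$ in the $\log$ term). Choosing $\eta\asymp\epsilon/(Le^M\sqrt d)$ makes the net-transfer error negligible relative to $\epsilon$, so $\log(\text{net size})\lesssim d\log(KL\sqrt d/\epsilon)+dM$, where $dM=\log(e^M)^d$ enters at the $\log B$ term and again at the regularizer built from it, contributing the additive $2dM$. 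Combining the per-$\theta$ bound with a union bound over the net at level $\delta'=\delta/(\text{net size})$ gives $\mathcal P\big(\sup_{|\theta|<K}|\widehat L_n(\theta)-L(\theta)|\le\epsilon\big)\ge 1-\delta$ as soon as $\tfrac{\epsilon^2 kn}{2M^2}\ge d\log(24KL\sqrt d/\epsilon)+2dM+\log(2/\delta)$, which is exactly the stated bound; the argument for $I_\text{ReNWJ}$ is identical after replacing $\log B$ by $B$ and $C^\ast$ by $1$.

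\textbf{Main obstacle.} The difficulty is quantitative bookkeeping rather than a conceptual gap: the nonlinearities $\log(\cdot)$ and $(\log(\cdot)-C^\ast)^2$ are not globally Lipschitz, so one must first pin down the explicit a priori ranges ($T_\theta\in[-M,M]$, $B\in[e^{-M},e^M]$, $\log B\in[-M,M]$) and only then invoke Lipschitz estimates, taking care that the $e^M$ factors propagate through the covering argument so as to land precisely as the additive $2dM$ and as the absolute constant inside the logarithm, rather than as an uncontrolled blow-up. The only genuinely new ingredient beyond the unregularized MINE proof is verifying that the added term neither inflates the overall Lipschitz constant beyond $O(Le^M)$ nor shifts the population optimum away from $I(X,Y)$ — both handled by \cref{thm:redv_renwj} — while the windowing merely replaces $n$ by $nk$ in every Hoeffding step.
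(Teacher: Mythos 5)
Your proposal is correct and follows essentially the same route as the paper's own proof, which is a one-paragraph pointer to MINE's Theorem~6: replace $n$ by the effective sample size $nk$ in each Hoeffding step, split $\epsilon$ into sixths because the objective now has three terms (joint, marginal, regularizer), note that the optimal $T^\ast$ is unchanged so the population supremum is still $I(X,Y)$, and propagate through an $\eta$-net covering of $\Theta$ to get the $d\log(24KL\sqrt d/\epsilon)+2dM$ term. You have merely reconstructed in full detail the argument the paper invokes by reference; the bookkeeping (Lipschitz constants $O(Le^M)$, a~priori bounds on $\log B$ so the squared regularizer stays Lipschitz, the origin of the additive $2dM$ from $\eta\asymp\epsilon e^{-2M}/L$) matches what MINE's proof does and what the paper's shorthand intends.
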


\begin{thm}\label{thm:nwj_sample_complexity} Assume that $1 \leq |T_\theta| < M$ and $d(x, 1) \leq |x - 1|$. Let $k$ be the number of sample means. Given any $\epsilon, \delta$ of the desired accuracy and confidence parameters, we have 
\begin{equation}
\mathcal{P}(|I_{\text{ReNWJ}}(X;Y) - I(X, Y)| \leq \epsilon) \geq 1 - \delta,
\end{equation}
whenever the number $n$ of samples satisfies
\begin{equation}
n \geq \frac{d\log(24KL\sqrt{d}/\epsilon) + 2dM + \log(2/\delta)}{\epsilon^2k/(2M^2)}.
\end{equation}
\end{thm}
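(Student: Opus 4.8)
The plan is to mirror the sample-complexity argument of \cite{belghazi2018mutual} (and the proof of \cref{thm:sample_complexity} for $I_\text{ReMINE}$), adapting it to the self-normalized objective. Write $V(\theta) := \mathbb{E}_{\mathbb{P}_{XY}}(T_\theta) - \mathbb{E}_{\mathbb{P}_X \otimes \mathbb{P}_Y}(e^{T_\theta - 1}) - \lambda\, d(\mathbb{E}_{\mathbb{P}_X \otimes \mathbb{P}_Y}(e^{T_\theta - 1}), 1)$ for the population objective and $\widehat V_n(\theta)$ for its empirical counterpart built from the $n$ samples (equivalently, $k$ independent batch means). Since $d(1,1)=0$ and the optimal $T^\ast$ is realized, or approximated to within $\epsilon/2$, in the class (by \cref{thm:remine_strong_consistency} / the universal-approximation hypothesis), $\sup_\theta V(\theta) = I(X,Y)$, so by $|\sup f - \sup g|\le\sup|f-g|$ it suffices to show $\sup_{|\theta|<K}|\widehat V_n(\theta) - V(\theta)| \le \epsilon$ with probability at least $1-\delta$.

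Next I would bound, for a fixed $\theta$, the three pieces of $\widehat V_n(\theta) - V(\theta)$. Under $1 \le |T_\theta| < M$ the random variable $T_\theta$ is bounded by $M$ and $e^{T_\theta - 1}$ lies in a bounded interval of width at most $e^{M-1}$; in particular $\mathbb{E}_{\mathbb{P}_X\otimes\mathbb{P}_Y}(e^{T_\theta-1})$ and its empirical version are bounded away from $0$, so their $\log$ is well defined and $O(M)$. Hoeffding's inequality applied to the $k$ sample means then controls $|\mathbb{E}_{\mathbb{P}_{XY}^{(n)}}(T_\theta) - \mathbb{E}_{\mathbb{P}_{XY}}(T_\theta)|$ and $|\mathbb{E}^{(n)}(e^{T_\theta-1}) - \mathbb{E}(e^{T_\theta-1})|$, each with sub-Gaussian tail $\exp(-\Theta(k\epsilon^2/M^2))$. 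For the regularizer I would use that $x \mapsto d(x,1) = (\log x)^2$ is Lipschitz on the range $[e^{-M-1}, e^{M-1}]$ of the (empirical) expectation, with constant $O(M e^{M})$; composing this map with the concentration of $\mathbb{E}^{(n)}(e^{T_\theta-1})$ and invoking $d(x,1)\le|x-1|$ to keep the population value controlled shows the regularizer term concentrates at the same rate, up to constants.

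Then I would upgrade the per-$\theta$ bound to a uniform one over $\{|\theta|<K\}$ by a covering argument. The $L$-Lipschitzness of $\theta \mapsto T_\theta$ gives that $\theta \mapsto e^{T_\theta - 1}$ is $Le^{M}$-Lipschitz and $\theta \mapsto V(\theta)$ is $\Theta(\lambda M e^{M} L)$-Lipschitz on the $K$-ball; covering that ball with an $\eta$-net of cardinality at most $(3K/\eta)^d$, taking $\eta = \Theta(\epsilon/(L\sqrt d\, e^{M}))$, and union-bounding Hoeffding over the net, the failure probability is at most $(3K/\eta)^d \exp(-\Theta(k\epsilon^2/M^2))$. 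Requiring this to be $\le \delta$ and taking logarithms yields $\Theta(k\epsilon^2/M^2) \ge d\log(KL\sqrt d\, e^{M}/\epsilon) + \log(1/\delta) = d\log(KL\sqrt d/\epsilon) + dM + \log(1/\delta)$; tracking the constants — the exponential map appears both in the NWJ term and inside the regularizer's $\log$, which is what produces the coefficient $2dM$ together with the $24KL\sqrt d/\epsilon$ and $\log(2/\delta)$, exactly as in \cref{thm:sample_complexity} — and solving for $n$ gives the stated bound.

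The main obstacle I anticipate is the regularizer term, since $\lambda(\log\mathbb{E}_{\mathbb{P}_X\otimes\mathbb{P}_Y}(e^{T-1}))^2$ is a nonlinear functional of an empirical mean rather than an empirical mean itself, so Hoeffding does not apply to it directly; the fix is the Lipschitz-composition step above, which needs a uniform lower bound on $\mathbb{E}(e^{T-1})$ (hence the hypothesis bounding $T_\theta$, equivalently $e^{T_\theta-1}$, away from $0$) and the hypothesis $d(x,1)\le|x-1|$ so that the extra term enlarges neither the effective Hoeffding range nor the approximation gap beyond constant factors. The remaining steps — the $\sup$-difference reduction, Hoeffding on $k$ means, and the $\eta$-net union bound — are routine and identical in form to the $I_\text{ReMINE}$ case, which is precisely why the two sample-complexity bounds coincide.
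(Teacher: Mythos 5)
Your overall skeleton --- reduce to a uniform concentration bound $\sup_{\theta}|\widehat V_n(\theta) - V(\theta)|\le\epsilon$, apply Hoeffding to the bounded statistics over the $n\cdot k$ samples, cover the parameter ball $\{\|\theta\|\le K\}$ with $(2K\sqrt d/\eta)^d$ balls, union bound, pick $\eta$ to absorb the $L$-Lipschitz covering error, and solve for $n$ --- is exactly the paper's route, which in turn mirrors the $I_\text{ReMINE}$ and MINE Theorem~6 arguments; so the high-level structure matches.

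Where you go off track is in the handling of the regularizer term. You identify, correctly, that $d(\mathbb{E}_{\mathbb{Q}^{(n)}}(e^{T_\theta-1}),1)$ is a nonlinear function of an empirical mean and Hoeffding does not apply to it directly, but you propose to fix this by arguing that $x\mapsto(\log x)^2$ is Lipschitz on $[e^{-M-1},e^{M-1}]$ with constant $O(Me^M)$ and composing. That step is not only unnecessary, it would actually inflate the bound by an extra $\Theta(Me^M)$ factor, and would not reproduce the stated constant (which is literally identical to the $I_\text{ReMINE}$ bound). The paper's treatment is simpler and is precisely what the hypothesis $d(x,1)\le|x-1|$ is there for: by the triangle inequality for $d$ and then this hypothesis, the regularizer's fluctuation satisfies
\begin{equation}
\bigl|d(\mathbb{E}_{\mathbb{Q}}(e^{T_\theta-1}),1)-d(\mathbb{E}_{\hat{\mathbb{Q}}}(e^{T_\theta-1}),1)\bigr|
\le d\bigl(\mathbb{E}_{\mathbb{Q}}(e^{T_\theta-1}),\mathbb{E}_{\hat{\mathbb{Q}}}(e^{T_\theta-1})\bigr)
\le \bigl|\mathbb{E}_{\mathbb{Q}}(e^{T_\theta-1})-\mathbb{E}_{\hat{\mathbb{Q}}}(e^{T_\theta-1})\bigr|,
\end{equation}
i.e.\ it is dominated by the very same empirical-mean deviation that already controls the second NWJ term. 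So no new concentration argument and no Lipschitz-of-$(\log)^2$ step are needed; the regularizer merely doubles the budget charged to that deviation, which is why the paper splits its Hoeffding threshold into $\epsilon/6$ rather than $\epsilon/4$ and ends with exactly the same $n$ as for $I_\text{ReMINE}$. Your reading of $d(x,1)\le|x-1|$ as a device ``to keep the population value controlled'' misses that it is the whole mechanism of the proof, not an auxiliary fact. Relatedly, your attribution of the $2dM$ term to the $\log$ inside the regularizer is imprecise: it comes from the choice $\eta=\tfrac{\epsilon}{12L}e^{-2M}$ needed so that $e^{ML}\eta$ is small enough (the $\exp$ inside the NWJ term), not from Lipschitzness of the regularizer.
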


\paragraph{Drifting may lead to noisy MI estimate}
We prove that the variance of the second term on the empirical distributions is affected by the constant term $C^*$.
\begin{thm}\label{thm:remine_variance} 
Let $Q^{(n)}$ be the empirical distributions of $n$ i.i.d. samples from $\mathbb{Q}$. For the optimal $T_{1} = \log\frac{dp}{dq} + C_{1}$ and $T_{2} = \log\frac{dp}{dq} + C_{2}$ where $C_1 \geq C_2$,
\begin{equation}
\text{Var}_{\mathbb{Q}}(\mathbb{E}_{\mathbb{Q}^{(n)}}(e^{T_1})) \geq \text{Var}_{\mathbb{Q}}(\mathbb{E}_{\mathbb{Q}^{(n)}}(e^{T_2})) 
\end{equation}
\end{thm}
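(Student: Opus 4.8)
The plan is to exploit the fact that the two candidate optimizers differ only by an additive constant, which under exponentiation becomes a multiplicative constant, and that scaling a random variable scales its variance quadratically. First I would write out $e^{T_1} = e^{C_1}\frac{dp}{dq}$ and $e^{T_2} = e^{C_2}\frac{dp}{dq}$, so that pointwise on $\Omega$ we have the identity $e^{T_1} = e^{C_1 - C_2}\, e^{T_2}$. Setting $\alpha := e^{C_1 - C_2}$, the hypothesis $C_1 \geq C_2$ gives $\alpha \geq 1$.

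Next I would push this through the empirical mean. Since the sample-average operator is linear, for any draw $x_1,\dots,x_n$ (i.i.d.\ from $\mathbb{Q}$, though independence is not even needed here) we get
\[
\mathbb{E}_{\mathbb{Q}^{(n)}}(e^{T_1}) \;=\; \frac{1}{n}\sum_{i=1}^n e^{T_1(x_i)} \;=\; \alpha\cdot\frac{1}{n}\sum_{i=1}^n e^{T_2(x_i)} \;=\; \alpha\,\mathbb{E}_{\mathbb{Q}^{(n)}}(e^{T_2}),
\]
an identity that holds sample-path-wise. Applying $\mathrm{Var}(\alpha Z) = \alpha^2\,\mathrm{Var}(Z)$ with $Z = \mathbb{E}_{\mathbb{Q}^{(n)}}(e^{T_2})$ and the constant $\alpha$, I obtain
\[
\mathrm{Var}_{\mathbb{Q}}\big(\mathbb{E}_{\mathbb{Q}^{(n)}}(e^{T_1})\big) \;=\; \alpha^2\,\mathrm{Var}_{\mathbb{Q}}\big(\mathbb{E}_{\mathbb{Q}^{(n)}}(e^{T_2})\big).
\]
Since $\alpha^2 \geq 1$ and the variance on the right is nonnegative, the right-hand side is at least $\mathrm{Var}_{\mathbb{Q}}(\mathbb{E}_{\mathbb{Q}^{(n)}}(e^{T_2}))$, which is exactly the claimed inequality.

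There is no genuine obstacle in this argument; the only point that merits a sentence is well-definedness of the variances. The representation $D_\text{DV}$ already assumes $\mathbb{E}_{\mathbb{Q}}(e^T)$ finite, and on the compact domain $\Omega$ with the (essentially bounded) optimizer $\frac{dp}{dq}$, the variable $e^{T_2}$ is square-integrable, so both variances are finite. Even allowing an infinite variance, the displayed identity shows $\mathrm{Var}_{\mathbb{Q}}(\mathbb{E}_{\mathbb{Q}^{(n)}}(e^{T_1}))$ is infinite precisely when $\mathrm{Var}_{\mathbb{Q}}(\mathbb{E}_{\mathbb{Q}^{(n)}}(e^{T_2}))$ is, so the inequality continues to hold in the extended reals. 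The whole statement is thus a one-line consequence of the multiplicative reparametrization $e^{T_1} = e^{C_1 - C_2}e^{T_2}$ together with the quadratic scaling of variance.
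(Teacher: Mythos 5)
Your proof is correct and is essentially the same as the paper's: both reduce the claim to the observation that the additive shift $C_1 - C_2 \geq 0$ becomes, after exponentiation, a multiplicative factor $e^{C_1-C_2} \geq 1$, and that variance scales quadratically under scalar multiplication. The paper routes through the explicit formula $\mathrm{Var}_{\mathbb{Q}}(e^{T_i}) = e^{2C_i}\bigl(\mathbb{E}_{\mathbb{Q}}[(\tfrac{d\mathbb{P}}{d\mathbb{Q}})^2] - (\mathbb{E}_{\mathbb{Q}}[\tfrac{d\mathbb{P}}{d\mathbb{Q}}])^2\bigr)$ and the i.i.d.\ identity $\mathrm{Var}(\bar Z) = \mathrm{Var}(Z)/n$, whereas you pass directly through linearity of the sample mean and $\mathrm{Var}(\alpha Z) = \alpha^2\mathrm{Var}(Z)$, which is marginally cleaner and, as you note, does not even use independence; but the substance is identical.
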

This implies that unregulated $C^*$ may lead to worse MI estimation quality, as the source of the estimate variance are mainly due to the second term.

\paragraph{Increasing the effective sample size for MI estimation}
The drifting problem caused by the unnormalized constant term $C^*$ raises more issues when estimating MI.
\cite{poole2019variational} use a simple macro-averaging technique, i.e., averaging the estimated MI from each batch.
We can also consider a slight modification to the technique, where we call it the micro-averaging technique, by saving all the statistics network outputs $T_\theta(x, y)$ for each batch and producing a single estimate based on all the outputs.
However, we proved that both averaging techniques yield wrong final estimates for biased estimators like $I_\text{MINE}$ \citep{belghazi2018mutual}, $I_\text{SMILE}$ \citep{Song2020Understanding}, $I_\text{CLUB}$ \citep{pmlr-v119-cheng20b}, and $I_\text{InfoNCE}$ \citep{oord2018representation}.
\begin{thm}(Estimation bias caused by drifting) \label{thm-estim-bias}
Both macro- and micro-averaging strategies produce a biased MI estimate when the drifting problem occurs.
\end{thm}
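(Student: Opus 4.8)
}
The plan is to make the drifting phenomenon of \cref{sec:motivating_experiment} precise and then read off the limit of each averaging scheme. Suppose we record the statistics network over $B$ batches (equivalently, at $B$ late-training iterations), each batch drawing $n$ fresh i.i.d.\ samples from $\mathbb{P}_{XY}$ and from $\mathbb{P}_X\otimes\mathbb{P}_Y$. By ``drifting'' I mean the network has reached the canonical optimizer up to a batch-dependent additive constant, so I model it as $T_\theta^{(b)} = T^* + C_b$ with $T^* = \log\frac{d\mathbb{P}_{XY}}{d(\mathbb{P}_X\otimes\mathbb{P}_Y)}$ (so $\mathbb{E}_{\mathbb{P}_{XY}}(T^*) = I(X,Y)$ and $\mathbb{E}_{\mathbb{P}_X\otimes\mathbb{P}_Y}(e^{T^*}) = 1$), where the constants $C_1,\dots,C_B$ are not all equal --- this non-degeneracy is exactly the observed pathology. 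Throughout I keep $n$ fixed and let $B\to\infty$, since the stated purpose of averaging is to emulate a single estimate on $Bn$ samples; boundedness of $T_\theta$ (the paper's standing assumption) makes the laws of large numbers below applicable.

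\emph{Macro-averaging} $\frac1B\sum_b I^{(b)}$. For $I_\text{MINE}$ and $I_\text{InfoNCE}$ the additive constant $C_b$ cancels inside the per-batch value $I^{(b)}$ (exactly for $I_\text{MINE}$, inside the softmax ratio for $I_\text{InfoNCE}$), so the summands are i.i.d.\ copies of the fixed-$n$ estimator and the strong law gives $\frac1B\sum_b I^{(b)}\to\mathbb{E}[\widehat I_n]$ almost surely. It then remains to invoke the finite-sample bias of these estimators: Jensen's inequality for the concave logarithm applied to the empirical mean $\mathbb{E}_{\mathbb{Q}^{(n)}}(e^{T^*})$ gives $\mathbb{E}[\widehat I_n^{\mathrm{MINE}}] > I(X,Y)$ whenever $I(X,Y)>0$; $I_\text{InfoNCE}$ is a lower bound capped at $\log n$, hence biased low; $I_\text{CLUB}$ is a strict upper bound; and for $I_\text{SMILE}$ the clipped second term is not shift-transparent, so $C_b$ does not even cancel and the bias is only more pronounced. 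In all four cases the macro-averaged estimate converges to a value $\ne I(X,Y)$, regardless of whether drifting is present.

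\emph{Micro-averaging}. Pooling all $Bn$ recorded outputs, the DV-type estimate equals $\frac1{Bn}\sum_{b,i}\bigl(T^*(x_i^{(b)}) + C_b\bigr) - \log\bigl(\frac1{Bn}\sum_{b,j} e^{C_b}e^{T^*(\tilde x_j^{(b)})}\bigr)$. The first term is (the empirical mean of $T^*$ over all $Bn$ joint samples) plus $\frac1B\sum_b C_b$, while the second is $\log\bigl(\frac1B\sum_b e^{C_b}\widehat G_b\bigr)$ with $\widehat G_b$ the within-batch empirical mean of $e^{T^*}$ under $\mathbb{P}_X\otimes\mathbb{P}_Y$ (so $\mathbb{E}[\widehat G_b]=1$). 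Assuming the empirical distribution of the drift sequence converges --- e.g.\ for an i.i.d.\ drift, or a conditioned random walk, with a finite exponential moment --- and using independence of $C_b$ from the fresh batch samples, letting $B\to\infty$ yields the limit $I(X,Y) + \overline{C} - \log\overline{e^{C}}$, where $\overline{C}$ and $\overline{e^{C}}$ are the limiting averages of $C_b$ and $e^{C_b}$. Strict convexity of $x\mapsto e^x$ and non-degeneracy of the $C_b$ give $\overline{C} < \log\overline{e^{C}}$, so micro-averaging converges to $I(X,Y) - \bigl(\log\overline{e^{C}} - \overline{C}\bigr) < I(X,Y)$, a strictly negative asymptotic bias; note that if the drift were constant this correction would vanish and micro-averaging would be consistent, which isolates drift as the culprit. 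The same computation shows the drift constants fail to cancel in the pooled $I_\text{NWJ}$-type and $I_\text{CLUB}$ expressions as well, completing the claim.

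The step I expect to be the main obstacle is the micro-averaging limit: I need an honest model of the drift sequence so that $\frac1B\sum_b C_b$ and $\frac1B\sum_b e^{C_b}\widehat G_b$ have genuine limits, so that the Jensen gap is strictly positive, and so that no residual cross term between the drift and the within-batch fluctuations survives the limit --- a random-walk drift in particular forces one to condition on its path and to control $\frac1B\sum_b e^{C_b}$, which is where an exponential-moment assumption enters. A lesser subtlety is stating the macro step uniformly across the four estimators (Jensen for $I_\text{MINE}$, the $\log n$ cap for $I_\text{InfoNCE}$, the upper-bound property for $I_\text{CLUB}$, and for $I_\text{SMILE}$ either assuming the clip window contains the drift or noting that cancellation already fails otherwise).
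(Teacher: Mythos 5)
Your argument is correct but takes a genuinely different route from the paper's. The paper's proof is a purely algebraic, finite-sample computation: it writes $T_{ij} = T^*_{ij} + C_i$, expands the macro and micro averages in closed form, and observes that neither reduces to the pooled-sample expression $\frac{1}{NB}\sum_{ij}T^{*(J)}_{ij} - \log\bigl(\frac{1}{NB}\sum_{ij}e^{T^{*(M)}_{ij}}\bigr)$ --- for macro-averaging because the batch-level $\log$ sits inside the outer average rather than outside it, and for micro-averaging because the $C_i$ fail to factor through the pooled $\log\text{-}\sum\exp$. No asymptotics, no model of the drift process, just a two-line mismatch of identities. Your approach instead fixes the batch size $n$, lets the number of batches $B\to\infty$, and invokes the strong law to locate the exact limit of each averaging scheme, yielding the more quantitative micro-averaging bias $-\bigl(\log\overline{e^C} - \overline{C}\bigr)$ and a Jensen-based strict negativity that truly hinges on non-degeneracy of the drift. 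That extra precision buys something the paper does not state: you correctly observe that for macro-averaging the drift constants cancel entirely and the residual bias is just the finite-$n$ bias of the base estimator (Jensen gap for MINE, the $\log n$ cap for InfoNCE, the upper bound for CLUB) --- so drift is not the mechanism there, which is a sharper reading of what the theorem actually proves. The price is that you need side conditions the paper avoids entirely (a limiting empirical distribution of the $C_b$ with a finite exponential moment, independence of $C_b$ from the batch draw, and shift-transparency holding within the clip window for SMILE). Both arguments are valid; the paper's is simpler and assumption-free, yours is more informative about where the bias comes from and how large it is.

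One small flag: your macro step concludes the estimate is biased ``regardless of whether drifting is present,'' which is true, but the theorem as stated attributes the bias to drifting. You are right on the mathematics and arguably more honest than the theorem's phrasing, but you should make explicit that for macro-averaging you are proving a stronger statement (bias always holds) rather than the statement ``drift causes bias,'' so a reader does not think you have gone off-target. For micro-averaging your isolation of drift as the culprit is exactly the intended content and is cleanly argued.
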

To the contrary, self-normalizing or regularized MI estimators have the upper hand in this perspective.
By utilizing all the samples from multiple batches, they can effectively sidestep the batch size limitation problem \citep{mcallester2018formal, Song2020Understanding}.

\begin{table*}[ht]
\centering
\begin{tabular}{c|cc}
\Xhline{1.2pt}
Loss & Loss settings & Regularizer settings \\ \hline
\hline
$I_{\text{MINE}}$    & No gradient moving average   & Euclidean distance \\
$I_{\text{SMILE}}$   & Clipping $(\tau = 10)$        & Euclidean distance \\
$I_{\text{InfoNCE}}$ & -                            & Euclidean distance \\
$I_{\text{NWJ}}$     & -                            & Log-Euclidean distance, Clipping $(\tau = 10)$ \\
$I_{\text{TUBA}}$    & $a(y) = 1$                   & Log-Euclidean distance, Clipping $(\tau = 10)$ \\
$I_{\text{JS}}$      & Estimate with $I_\text{NWJ}$ & Euclidean distance \\ \hline
\Xhline{1.2pt}
\end{tabular}
\caption{List of MI estimators with its hyperparameters}
\label{table:loss_setting}
\end{table*}

\section{Experiments} \label{sec}
\subsection{MI Estimation vs. Downstream Task Performance} \label{sec:benchmark}
\subparagraph{Benchmark Design}
To measure the performance of the MI estimators, one must design the target task to have the ground truth MI.
This constraint led previous works to evaluate the estimators only on artificial toy problems \citep{belghazi2018mutual, poole2019variational}, where its connection to actual problems is fairly limited.
We design the two types of MI estimation tasks with de facto image datasets to improve the existing benchmarks to reflect on the real-world tasks.
We defer all the proofs to the Appendix.

\begin{thm}\label{thm_slb} (Supervised learning) Given a dataset $D = (X, Y)$ where $X$ is an sample, $Y$ is the label for $X$, and $H(Y)$ is the entropy of the label set, $I(X, Y) = H(Y)$.
\end{thm}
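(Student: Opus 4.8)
The plan is to reduce the claim to the elementary information-theoretic identity relating mutual information, entropy, and conditional entropy. First I would rewrite $I(X, Y)$, defined in \cref{MIDef} as $D_\text{KL}(\mathbb{P}_{XY} \| \mathbb{P}_X \otimes \mathbb{P}_Y)$, in the form $I(X, Y) = H(Y) - H(Y \mid X)$. Since the label set is discrete, $H(Y)$ and $H(Y \mid X) := \mathbb{E}_{X}[H(Y \mid X = x)]$ are both well-defined even if $X$ ranges over a continuous image domain, so this identity is legitimate in the setting of the benchmark. The whole proof then hinges on showing $H(Y \mid X) = 0$.

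The core observation is that in a supervised dataset the label is, by construction, a deterministic function of the sample: there is a labeling map $f$ with $Y = f(X)$ almost surely under $\mathbb{P}_{XY}$, where $\mathbb{P}_{XY}$ is the empirical distribution over the labeled dataset. Equivalently, the conditional law $\mathbb{P}_{Y \mid X = x}$ is the point mass at $f(x)$ for $\mathbb{P}_X$-almost every $x$, so $H(Y \mid X = x) = 0$ and hence $H(Y \mid X) = 0$. Substituting into the identity yields $I(X, Y) = H(Y)$. (An equivalent route avoiding the chain rule: $\mathbb{P}_{XY}$ is concentrated on the graph of $f$, on which the Radon--Nikodym derivative $d\mathbb{P}_{XY}/d(\mathbb{P}_X \otimes \mathbb{P}_Y)$ equals $1/\mathbb{P}_Y(f(x))$, and integrating its logarithm against $\mathbb{P}_{XY}$ returns $-\mathbb{E}[\log \mathbb{P}_Y(Y)] = H(Y)$.)

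The only real obstacle is making precise the modeling assumption that the dataset induces a deterministic labeling, i.e., that no sample appears with two distinct labels; once stated, everything reduces to the standard bound $I(X,Y) \le H(Y)$ together with its equality condition $H(Y\mid X)=0$. I would therefore state the deterministic-labeling assumption explicitly at the outset and then invoke the identity. A secondary remark worth including is that $H(Y)$ is maximized, at $\log N$ for $N$ classes, exactly when the label distribution is uniform; this ties the proposition back to the One-hot construction of \cref{sec:motivating_experiment} and motivates class-balanced sampling so that the ground-truth MI value in the benchmark is known in closed form.
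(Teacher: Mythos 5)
Your proof is correct, and your primary route is genuinely different from the paper's. You invoke the standard identity $I(X,Y)=H(Y)-H(Y\mid X)$ and reduce the claim to $H(Y\mid X)=0$, which follows immediately once the deterministic-labeling (single-label) assumption $Y=f(X)$ a.s.\@ is stated. The paper instead evaluates the KL integral directly: it writes $I(X;Y)=\int_x P(x)\int_y P(y\mid x)\log\tfrac{P(y\mid x)}{P(y)}\,dy\,dx$, collapses the inner integral to $\log\tfrac{1}{P(y^*)}$ using $P(y^*\mid x)=1$ on the support, partitions the domain into class regions $R_c$, and marginalizes to obtain $\sum_c P(c)\log\tfrac{1}{P(c)}$. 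Your chain-rule argument is shorter and pushes the measure-theoretic care into one well-understood identity; the paper's computation is more self-contained and makes visible exactly where the single-label assumption enters the integrand. Your parenthetical remark about the Radon--Nikodym derivative $d\mathbb{P}_{XY}/d(\mathbb{P}_X\otimes\mathbb{P}_Y)=1/\mathbb{P}_Y(f(x))$ on the graph of $f$ is in fact a compact restatement of the paper's direct computation, so you have effectively covered both routes. One thing you do slightly better than the paper: you explicitly flag that the deterministic-labeling assumption must be stated (no sample with two labels), whereas the paper relegates this to a sentence before the theorem ($p(y|x)=1$) and it is easy to miss.
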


Similarly, the true MI between images from the same class is also tractable based on the same assumption.
\begin{thm}\label{thm_clb} (Contrastive learning) Consider the dataset $D = (X, Y)$. Let $X_1$ be a sample drawn from the dataset and $X_2$ be another sample drawn from the subset with the same label $Y$ to $X_1$.
Then, $I(X_1, X_2) = I(X_1, Y) = I(X_2, Y) = H(Y)$.
\end{thm}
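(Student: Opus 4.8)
The plan is to reduce everything to the supervised case (\cref{thm_slb}) by isolating two structural facts about the sampling scheme. First I would record the modeling assumption inherited from \cref{thm_slb}: each sample determines its label, i.e.\ there is a deterministic map $f$ with $Y = f(X_1) = f(X_2)$, so that $H(Y\mid X_1) = H(Y\mid X_2) = 0$ and, by \cref{thm_slb} applied to either sample, $I(X_1;Y) = I(X_2;Y) = H(Y)$. Second I would formalize the draw of $X_2$: it is a sample carrying the label $Y$ of $X_1$, chosen independently of $X_1$ once that label is fixed. This is exactly the statement that $X_1 \perp X_2 \mid Y$, i.e.\ $X_1 - Y - X_2$ is a Markov chain, so $I(X_1; X_2 \mid Y) = 0$.

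Given these two ingredients, the identity $I(X_1;X_2)=H(Y)$ follows from the chain rule for mutual information applied to $I\bigl(X_1;(X_2,Y)\bigr)$ in two different orders. Expanding on $Y$ first gives $I(X_1;X_2,Y) = I(X_1;Y) + I(X_1;X_2\mid Y) = I(X_1;Y) = H(Y)$, using the Markov property and then \cref{thm_slb}. Expanding on $X_2$ first gives $I(X_1;X_2,Y) = I(X_1;X_2) + I(X_1;Y\mid X_2)$; since $Y$ is a deterministic function of $X_2$, $0 \le I(X_1;Y\mid X_2) \le H(Y\mid X_2) = 0$, so $I(X_1;X_2,Y) = I(X_1;X_2)$. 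Comparing the two expansions yields $I(X_1;X_2) = H(Y)$, and together with the two instances of \cref{thm_slb} noted above this gives the full chain of equalities.

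I expect the only real subtlety to lie in the precise formulation of the two hypotheses rather than in the information-theoretic manipulation, which is routine once they are in place. In particular, the deterministic-labeling hypothesis (no sample occurs with two distinct labels) must be stated exactly as in \cref{thm_slb} so that the conditional entropies vanish, and the phrase \emph{``another sample drawn from the subset with the same label''} must be read as conditional independence given $Y$ (e.g.\ a draw from the same-label subset made independently of the draw of $X_1$), which is what licenses $I(X_1;X_2\mid Y)=0$. I would make both explicit at the start of the proof; after that the chain-rule computation above closes the argument in a few lines.
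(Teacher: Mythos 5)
Your proof is correct, and it takes a genuinely different route from the paper's. The paper proves the identity by an explicit density-level computation: it marginalizes $P(X_1,X_2)=\sum_i P(y_i)P(X_1\mid y_i)P(X_2\mid y_i)$ using the conditional independence of $X_1$ and $X_2$ given $Y$, forms the ratio $P(X_1,X_2)/(P(X_1)P(X_2))$, observes that under the single-label assumption this ratio collapses to $1/P(y_i)$ on the region where $y_i$ is the common label, and then integrates region by region to recover $H(Y)$. You instead lift the argument one level of abstraction: you decompose $I(X_1;X_2,Y)$ by the chain rule in two orders, kill $I(X_1;X_2\mid Y)$ via the Markov chain $X_1 - Y - X_2$, and kill $I(X_1;Y\mid X_2)$ via $H(Y\mid X_2)=0$, then equate. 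The two proofs rest on exactly the same two structural hypotheses (determinism of the label given the sample, and conditional independence of the two draws given the label), but yours makes it transparent that \emph{only} these two facts are used, whereas the paper's integral manipulation obscures that somewhat; conversely, the paper's computation dovetails more closely with its companion proof of \cref{thm_slb}, which is also an explicit integral. Your version is shorter, more modular, and would generalize more readily, e.g.\ to a soft-label setting where one would keep $H(Y\mid X_i)$ as an explicit correction term rather than setting it to zero. The one thing to retain from the paper if you write this up is to state the single-label assumption and the conditional-independence reading of the sampling scheme explicitly up front, as you already flag; the paper buries the latter inside the factorization step as a parenthetical remark.
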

Note that we assume statistical dependence between the image $X$ and label $Y$ from the point of view of information bottleneck \citep{tishby2015deep}.
We derive the theorems above based on the assumption, where $Y$ implicitly determines $X$.

Based on the above theorems, we use the two MI estimation problems as benchmarks that evaluate the performance of estimators.
We intentionally design \cref{thm_slb} and \cref{thm_clb} to mimic the existing tasks closely, namely, supervised and contrastive learning.
For \cref{thm_slb}, we can set the statistics network $T_\theta(X, Y) = f_\theta(X) \cdot o(Y)$ where $f_\theta(X)$ is the logits obtained from feeding the image $X$ to the classification neural networks and $o(Y)$ is the one-hot representation of the label $Y$.
If we use the InfoNCE estimator, this formulation becomes identical to solving the classification problem using negative log loss with the Softmax function, hence the name being the supervised learning benchmark (SLB).
Similarly, for \cref{thm_clb}, we can set $T_\theta(X_1, X_2)=f_\theta(X_1) \cdot f_\theta(X_2)$ and use the InfoNCE estimator to yield a commonly used contrastive loss \citep{oord2018representation, chen2020simple}.

Due to the strict assumption of statistical dependence, the theorems above cannot be used on standard datasets like ImageNet dataset \citep{DBLP:conf/cvpr/DengDSLL009}, as its samples often violate the single-label assumption.
However, we can still empirically compare the MI estimators by the relative size of their final MI estimation.
We conduct a demo experiment on ImageNet in the Appendix.

\paragraph{Evaluation}
\begin{table*}[t]
\centering

\begin{tabular}{c|c|c|cc|cc}
\Xhline{1.2pt}
\multicolumn{2}{c|}{\multirow{2}{*}{Task}}  & \multirow{2}{*}{Loss} & \multicolumn{2}{c|}{MI Estimation} & \multicolumn{2}{c}{Test Accuracy} \\ 
\multicolumn{2}{c|}{}& & Original & Regularized & Original & Regularized \\
\hline

\multicolumn{1}{c|}{\multirow{12}{*}{
\rotatebox[origin=c]{90}{Supervised Learning Benchmark}
}} & \multirow{6}{*}{
\rotatebox[origin=c]{90}{CIFAR-10}}
 & MINE & \textbf{2.300 \footnotesize $\pm$ 0.003} & 2.298 \footnotesize $\pm$ 0.005 & 0.850 \footnotesize $\pm$ 0.009 & \textbf{0.856 \footnotesize $\pm$ 0.004} \\
& \multicolumn{1}{c|}{} & SMILE & 2.297 \footnotesize $\pm$ 0.009 & \textbf{2.300 \footnotesize $\pm$ 0.003} & \textbf{0.854 \footnotesize $\pm$ 0.008} & 0.853 \footnotesize $\pm$ 0.009 \\
& \multicolumn{1}{c|}{} & InfoNCE & 2.301 \footnotesize $\pm$ 0.002 & \textbf{2.302 \footnotesize $\pm$ 0.001} & 0.845 \footnotesize $\pm$ 0.006 & 0.845 \footnotesize $\pm$ 0.005 \\
& \multicolumn{1}{c|}{} & NWJ & \textbf{2.297 \footnotesize $\pm$ 0.009} & 2.294 \footnotesize $\pm$ 0.013 & 0.859 \footnotesize $\pm$ 0.003 & \textbf{0.862 \footnotesize $\pm$ 0.004} \\
& \multicolumn{1}{c|}{} & TUBA & 2.297 \footnotesize $\pm$ 0.008 & \textbf{2.300 \footnotesize $\pm$ 0.003} & \textbf{0.862 \footnotesize $\pm$ 0.008} & 0.859 \footnotesize $\pm$ 0.003 \\
& \multicolumn{1}{c|}{} & JS & 1.944 \footnotesize $\pm$ 0.039 & \textbf{2.000 \footnotesize $\pm$ 0.049} & 0.838 \footnotesize $\pm$ 0.012 & \textbf{0.842 \footnotesize $\pm$ 0.004} \\
\cline{2-7}

& \multicolumn{1}{c|}{\multirow{6}{*}{
\rotatebox[origin=c]{90}{CIFAR-100}
}} & MINE & 4.597 \footnotesize $\pm$ 0.011 & \textbf{4.603 \footnotesize $\pm$ 0.001} & 0.610 \footnotesize $\pm$ 0.007 & 0.610 \footnotesize $\pm$ 0.006 \\
& \multicolumn{1}{c|}{} & SMILE & 4.595 \footnotesize $\pm$ 0.015 & \textbf{4.602 \footnotesize $\pm$ 0.002} & 0.601 \footnotesize $\pm$ 0.015 & \textbf{0.606 \footnotesize $\pm$ 0.007} \\
& \multicolumn{1}{c|}{} & InfoNCE & 4.594 \footnotesize $\pm$ 0.017 & \textbf{4.599 \footnotesize $\pm$ 0.005} & 0.589 \footnotesize $\pm$ 0.010 & \textbf{0.593 \footnotesize $\pm$ 0.005} \\
& \multicolumn{1}{c|}{} & NWJ & 4.572 \footnotesize $\pm$ 0.055 & \textbf{4.586 \footnotesize $\pm$ 0.034} & 0.558 \footnotesize $\pm$ 0.042 & \textbf{0.599 \footnotesize $\pm$ 0.009} \\
& \multicolumn{1}{c|}{} & TUBA & 4.495 \footnotesize $\pm$ 0.207 & \textbf{4.603 \footnotesize $\pm$ 0.002} & 0.543 \footnotesize $\pm$ 0.055 & \textcolor{blue}{\textbf{0.611 \footnotesize $\pm$ 0.007}} \\
& \multicolumn{1}{c|}{} & JS & 4.088 \footnotesize $\pm$ 0.430 & \textbf{4.240 \footnotesize $\pm$ 0.116} & 0.591 \footnotesize $\pm$ 0.026 & \textbf{0.598 \footnotesize $\pm$ 0.010} \\

\hline
\multicolumn{1}{c|}{\multirow{12}{*}{
\rotatebox[origin=c]{90}{Contrastive Learning Benchmark}
}} & \multicolumn{1}{c|}{\multirow{6}{*}{
\rotatebox[origin=c]{90}{CIFAR-10}
}} & MINE & 2.233 \footnotesize $\pm$ 0.674 & \textbf{2.240 \footnotesize $\pm$ 0.657} & 0.812 \footnotesize $\pm$ 0.026 & \textbf{0.823 \footnotesize $\pm$ 0.012} \\
& \multicolumn{1}{c|}{} & SMILE & 0.000 \footnotesize $\pm$ 0.000 & \textcolor{blue}{\textbf{2.065 \footnotesize $\pm$ 0.842}} & 0.100 \footnotesize $\pm$ 0.001 & \textcolor{blue}{\textbf{0.830 \footnotesize $\pm$ 0.008}} \\
& \multicolumn{1}{c|}{} & InfoNCE & 1.705 \footnotesize $\pm$ 0.462 & \textbf{1.739 \footnotesize $\pm$ 0.431} & \textbf{0.830 \footnotesize $\pm$ 0.008} & 0.826 \footnotesize $\pm$ 0.006 \\
& \multicolumn{1}{c|}{} & NWJ & 0.000 \footnotesize $\pm$ 0.000 & \textcolor{blue}{\textbf{1.910 \footnotesize $\pm$ 0.662}} & 0.100 \footnotesize $\pm$ 0.000 & \textcolor{blue}{\textbf{0.831 \footnotesize $\pm$ 0.005}} \\
& \multicolumn{1}{c|}{} & TUBA & 0.000 \footnotesize $\pm$ 0.000 & \textcolor{blue}{\textbf{1.358 \footnotesize $\pm$ 0.590}} & 0.100 \footnotesize $\pm$ 0.000 & \textcolor{blue}{\textbf{0.830 \footnotesize $\pm$ 0.009}} \\
& \multicolumn{1}{c|}{} & JS & 1.552 \footnotesize $\pm$ 0.485 & \textbf{1.556 \footnotesize $\pm$ 0.546} & \textbf{0.837 \footnotesize $\pm$ 0.003} & 0.832 \footnotesize $\pm$ 0.009 \\
\cline{2-7}
& \multicolumn{1}{c|}{\multirow{6}{*}{
\rotatebox[origin=c]{90}{CIFAR-100}}
} & MINE & \textbf{4.634 \footnotesize $\pm$ 0.186} & 4.563 \footnotesize $\pm$ 0.162 & 0.522 \footnotesize $\pm$ 0.026 & \textbf{0.540 \footnotesize $\pm$ 0.020} \\
& \multicolumn{1}{c|}{} & SMILE & 0.000 \footnotesize $\pm$ 0.000 & \textcolor{blue}{\textbf{4.677 \footnotesize $\pm$ 0.162}} & 0.012 \footnotesize $\pm$ 0.003 & \textcolor{blue}{\textbf{0.585 \footnotesize $\pm$ 0.007}} \\
& \multicolumn{1}{c|}{} & InfoNCE & 4.112 \footnotesize $\pm$ 0.147 & \textbf{4.115 \footnotesize $\pm$ 0.145} & 0.576 \footnotesize $\pm$ 0.019 & \textcolor{blue}{\textbf{0.585 \footnotesize $\pm$ 0.014}} \\
& \multicolumn{1}{c|}{} & NWJ & 0.000 \footnotesize $\pm$ 0.000 & \textcolor{blue}{\textbf{4.065 \footnotesize $\pm$ 0.255}} & 0.010 \footnotesize $\pm$ 0.000 & \textcolor{blue}{\textbf{0.521 \footnotesize $\pm$ 0.025}} \\
& \multicolumn{1}{c|}{} & TUBA & 0.000 \footnotesize $\pm$ 0.000 & \textcolor{blue}{\textbf{2.731 \footnotesize $\pm$ 0.786}} & 0.010 \footnotesize $\pm$ 0.000 & \textcolor{blue}{\textbf{0.490 \footnotesize $\pm$ 0.023}} \\
& \multicolumn{1}{c|}{} & JS & 3.253 \footnotesize $\pm$ 0.368 & \textbf{3.393 \footnotesize $\pm$ 0.124} & 0.451 \footnotesize $\pm$ 0.020 & \textbf{0.463 \footnotesize $\pm$ 0.031} \\
\hline\Xhline{1.2pt}
\end{tabular}
\caption{Our supervised and contrastive learning benchmark results.
We provide the 95\% confidence interval of 5 runs for both MI estimation and test accuracy, where we clip the negative MI estimations to 0.
We compare the performance of original and regularized loss.
\textbf{Bold text} and \textcolor{blue}{\textbf{blue text}} indicates the better performance with overlapping and non-overlapping confidence interval, respectively.
}
\label{table:sota_benchmark}
\end{table*}
To verify the performance of MI estimators, we perform our benchmark tasks on the CIFAR10 and CIFAR100 dataset \citep{krizhevsky2009learning}. As both CIFAR10 and CIFAR100 have a uniform label distribution, ideal MI is $\log 10$ and $\log 100$, respectively. 
In addition, to check whether this MI estimate task is actually helpful for downstream tasks, we evaluate each estimator on both dimensions: MI estimation and test set accuracy.
Similar to the existing settings in the contrastive learning literature \citep{chen2020simple, he2020momentum}, we design the test accuracy of CLB by defining the label estimate $\hat{y}$ of each test set sample $x_\text{Test}$ to be the label of $x = \text{argmax}_{x\in X_{\text{Train}}} f(x) \cdot f(x_{\text{Test}})$ of the train dataset $X_\text{Train}$.
Similarly for SLB, we chose $\hat{y} = \text{argmax}_y f(x_\text{Test}) \cdot o(y)$ where $o(y)$ is the one-hot encoding of $y$.
We ran the same experiment $5$ times with different seeds to yield a 95\% confidence interval.

\subsection{Comparison with Our Benchmark}\label{subsec:comparison}
To demonstrate the effectiveness of our novel regularization term, we regularize the two representations, $D_\text{DV}$ and $D_\text{NWJ}$.
We test three realizations for each representation, $I_\text{MINE}$ \citep{belghazi2018mutual}, $I_\text{SMILE}$ \citep{Song2020Understanding}, and $I_\text{InfoNCE}$ \citep{oord2018representation} for $D_\text{DV}$, $I_\text{NWJ}$ \citep{nguyen2010estimating}, $I_\text{TUBA}$ \citep{poole2019variational}, and $I_\text{JS}$ \citep{hjelm2019learning} for $D_\text{NWJ}$.
We compare the original losses with its regularized counterparts, a total of $6\times2=12$.
We do not apply averaging scheme on any of the losses, and choose the regularization weight $\lambda \in \{0.1, 0.01, 0.001\}$ that shows the best MI estimation results.
See \cref{table:loss_setting} for more details.

We observe in \cref{table:sota_benchmark} that additional regularization generally induces better performance on both the MI estimation task and the downstream task (test accuracy).
Hence, adding the regularizer to a pre-existing supervised or contrastive learning loss seems to be a viable option to increase the performance further.
Even when the performance of the regularized loss slightly degrades, its negative impact is minimal.
This implies that even for the case where the regularizer is not greatly helpful, it does not greatly hinder optimization.
Especially, it is intriguing that many losses, $I_\text{MINE}$, $I_\text{ReMINE}$, and $I_\text{ReTUBA}$, are better than $I_\text{InfoNCE}$ in SLB, which is used as the de facto standard in classification.
Also, $I_\text{SMILE}$, $I_\text{NWJ}$, and $I_\text{TUBA}$ fail to converge in CLB, where simply adding a regularization term solves the issue altogether, yielding a competitive or even better performance than all the other losses.
Given the fact that numerous contrastive learning literature suffers from instability \citep{caron2021emerging,DBLP:journals/corr/abs-2105-04906,chen2020simple, he2020momentum, DBLP:journals/corr/abs-2105-04906}, we emphasize that adding our regularization term can be a simple yet effective method to stabilize training.

\begin{figure}[ht] 
\centering
    \includegraphics[width=0.95\columnwidth]{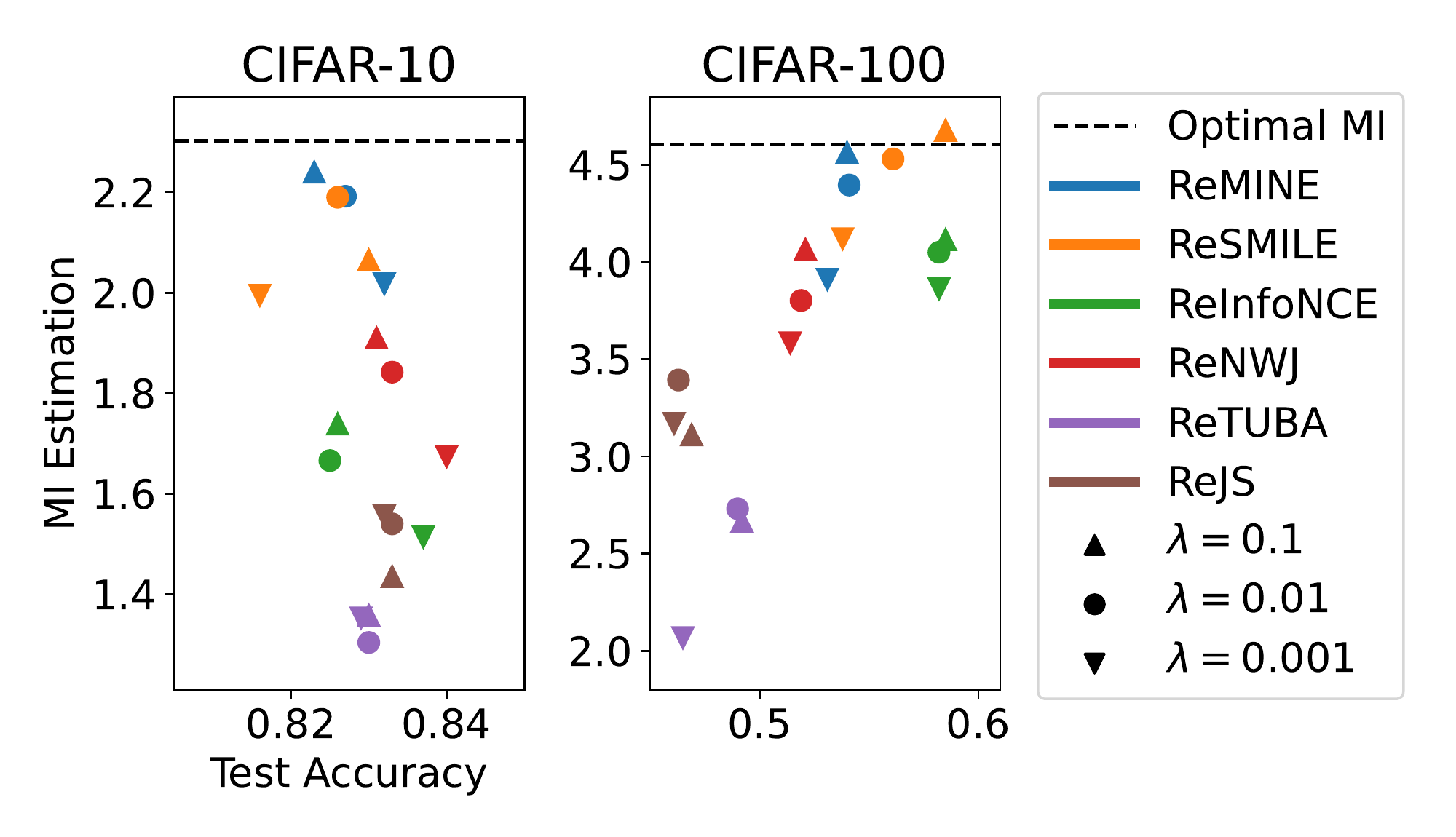} 
\caption{
    Ablation study on different $\lambda$s with CLB CIFAR-10 and CIFAR-100.
}
\label{figs:our_benchmark_ablation}
\end{figure}
Additionally, to observe the impact of regularization strength $\lambda$, we plot the benchmark performance for each $\lambda$ in \cref{figs:our_benchmark_ablation}.
We compare the losses on CLB as experimental results suggest that CLB is a more difficult task than SLB, showing significant performance differences between various losses.
On CIFAR-10, $\lambda$ acts as a trade-off parameter between test accuracy and MI estimation quality.
Performance trade-off has also been reported in other literature, where better MI estimation does not necessarily deliver better downstream performance \citep{DBLP:conf/iclr/TschannenDRGL20,DBLP:conf/nips/Tian0PKSI20}.
However, compared to CIFAR-100, test accuracy differences are minimal, where MI differences are apparent.
$I_\text{ReMINE}$ and $I_\text{ReSMILE}$ show excellent MI estimation quality in CIFAR-10 compared to other losses.
In contrast, test accuracy and MI estimation quality align well in the CIFAR-100 case.
$I_\text{ReSMILE}$ shows good overall performance, albeit its sensitivity towards regularization strength.
$I_\text{ReInfoNCE}$, on the other hand, shows stable performance in the downstream task, sacrificing the MI estimation quality.
This result is further supported by the prominence of $I_\text{InfoNCE}$ in the contrastive learning domain.
It is yet unclear where the difference between CIFAR-10 and CIFAR-100 comes from, whether it is due to the difference in the level of difficulty of the dataset or the batch size used throughout the training.
We leave further analysis as future work.

\subsection{Comparison with the Standard Toy Problem}\label{section5-sub2}
We provide the quality of MI-based losses on the 20D Correlated Gaussian task \citep{belghazi2018mutual, poole2019variational} where the true MI is increased $5$ times during optimization in \cref{figs:gaussian_benchmark}.
This experiment demonstrates how stable the MI-based losses estimate MI in a dynamically changing environment.
We apply the same settings from \cref{table:loss_setting}, where we fix the regularization strength $\lambda = 1.0$ for all the losses.
With the exception of $I_\text{InfoNCE}$, regularized losses show clear superiority over the original losses.
Regularization facilitates $I_\text{MINE}$ and $I_\text{SMILE}$ to avoid the instability which is mentioned in \cref{sec:motivating_experiment}.
Also, regularization greatly enhances the MI estimation quality of $I_\text{JS}$ and lessens the variance of both $I_\text{NWJ}$ and $I_\text{TUBA}$.

\section{Conclusion} \label{sec:conclusion}
In this paper, we identify the two symptoms behind the instability: The statistics network was not converging even after the loss seemed to converge, and its outputs from the product of marginal distribution explode during training.
We propose a novel regularization term to mitigate the instability during training by adding to various existing MI-based losses.
We theoretically and experimentally demonstrate that the added regularizer directly alleviates the two instability symptoms.
Finally, we present a benchmark that evaluates both the MI estimation power and its capability on the downstream tasks by imitating the supervised or contrastive learning settings.
We compare six different losses and their regularized counterparts on various benchmarks to show the method's effectiveness and broad applicability.
\begin{figure}[H] 
\centering
\includegraphics[width=0.95\columnwidth]{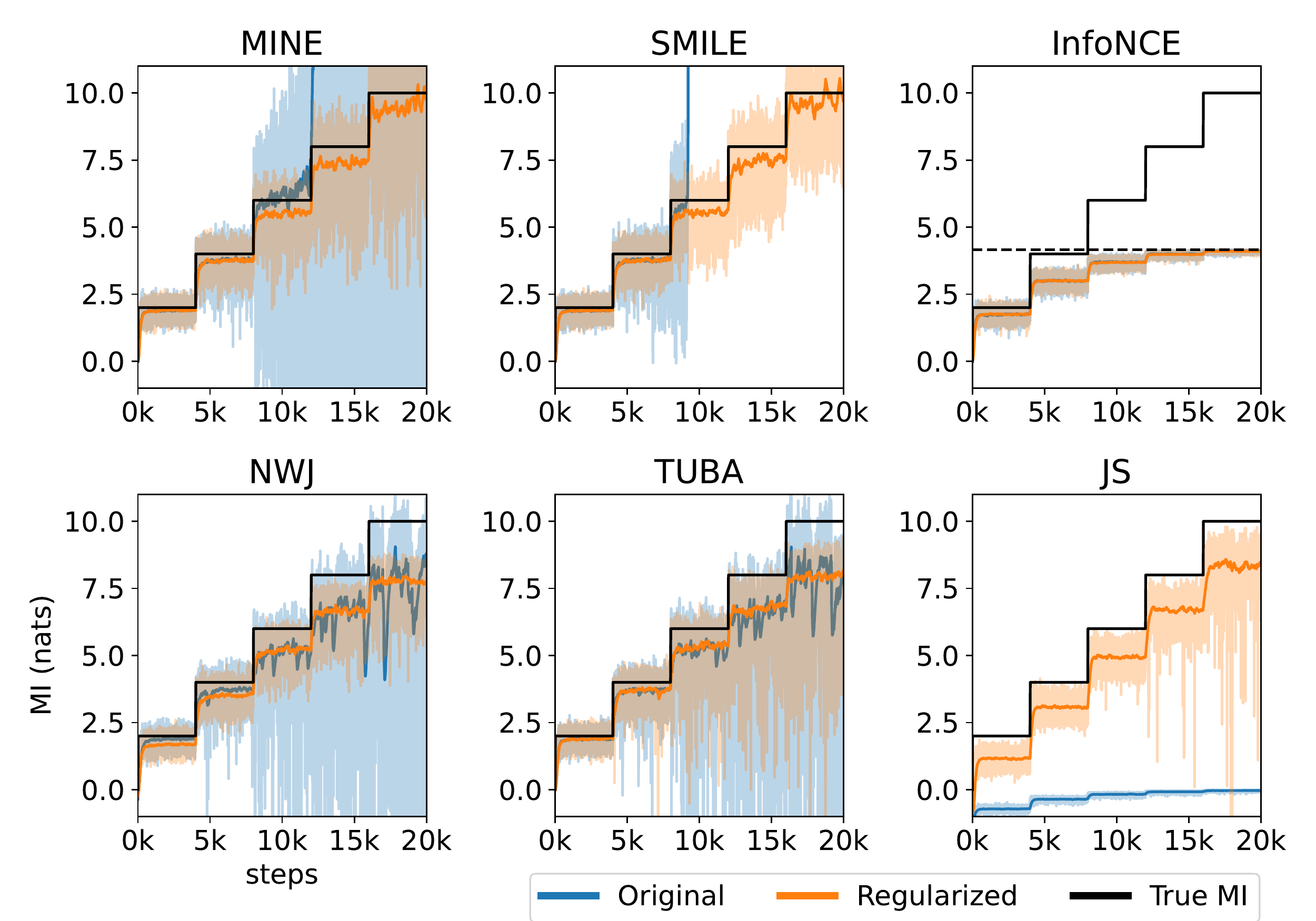}
\caption{
Estimation performance on 20-D Gaussian.
The estimated MI (light) and the smoothed estimation with exponential moving average (dark) are plotted for each methods with its regularized counterparts.
Black line represents the true MI.
Dotted line shows the bound of $I_\text{InfoNCE}$ due to the limited batch size of 64.
}
\label{figs:gaussian_benchmark}
\end{figure}

\section*{Limitations and Future works}\label{sec:limitations}
We suspect that the instability of MI estimators can also be related to the collapse problem \citep{DBLP:journals/corr/abs-2105-04906,caron2021emerging}.
Further loss-based approaches to combat this problem by regularizing the network outputs may be helpful.
We expect that extending our methods to various contrastive learning losses may yield fruitful results for self-supervised learning, notably for other domains such as text or audio.
Also, our mathematical analysis is mainly focused on the drifting problem of $I_\text{MINE}$, not the explosion problem of $I_\text{NWJ}$.
For $I_\text{NWJ}$, we suspect that the absence of the log function wrapping the exponential values makes the second term much more susceptible to output explosion due to its numerical instability.
The added regularizer gives additional hints for the second term to converge to a specific value.
However, we did not expand the discussion further in this paper.

\bibliography{choi_500}

\onecolumn
\title{Combating the Instability of Mutual Information-based Losses\\via Regularization (Supplementary material)}

\makeatletter         
\renewcommand{\@thanks}[0]{}
\makeatother

\maketitle

\appendix
\section{Proofs}
In this section, we provide proof of all theoretical results mentioned in the manuscript. 

\subsection{Proof of the \texorpdfstring{\textbf{$D_\text{ReDV}$}}{ReDV} representation}\label{supp:ReDV}
In this subsection, we consider two probability distributions $\mathbb{P}$ and $\mathbb{Q}$, with $\mathbb{P}$ absolutely continuous with respect to $\mathbb{Q}$.
In addition, assume that both distributions are absolutely continuous with respect to Lebesgue measure $\mu$ on some compact domain $\Omega$. 

We first show that there exists the family of optimal function for the DV representation \citep{donsker1975asymptotic}.

\begin{lem} All functions of the form $T=\log\frac{d\mathbb{P}}{d\mathbb{Q}}+C^*$ is optimal for the DV representation $D_{DV}$.
\end{lem}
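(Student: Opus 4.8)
The plan is to verify directly that plugging $T = \log\frac{d\mathbb{P}}{d\mathbb{Q}} + C^*$ into the DV functional yields exactly $D_\text{KL}(\mathbb{P}\,\|\,\mathbb{Q})$, and then to show separately that $D_\text{KL}(\mathbb{P}\,\|\,\mathbb{Q})$ is an upper bound for the functional over all admissible $T$. Together these give that every such $T$ attains the supremum, which is the claim.

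For the first part, write $r = \frac{d\mathbb{P}}{d\mathbb{Q}}$, which exists by the assumed absolute continuity $\mathbb{P} \ll \mathbb{Q}$. Then $\mathbb{E}_\mathbb{P}(T) = \mathbb{E}_\mathbb{P}(\log r) + C^* = D_\text{KL}(\mathbb{P}\,\|\,\mathbb{Q}) + C^*$, while $\mathbb{E}_\mathbb{Q}(e^T) = e^{C^*}\,\mathbb{E}_\mathbb{Q}(r) = e^{C^*}$ since $\mathbb{E}_\mathbb{Q}(r) = \int_\Omega d\mathbb{P} = 1$. Hence $\mathbb{E}_\mathbb{P}(T) - \log \mathbb{E}_\mathbb{Q}(e^T) = D_\text{KL}(\mathbb{P}\,\|\,\mathbb{Q}) + C^* - C^* = D_\text{KL}(\mathbb{P}\,\|\,\mathbb{Q})$, independent of $C^*$; in particular both expectations are finite, so $T$ is an admissible competitor in the supremum.

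For the upper bound (the Gibbs / Donsker--Varadhan inequality), fix any admissible $T$ with $0 < \mathbb{E}_\mathbb{Q}(e^T) < \infty$ and introduce the tilted probability measure $\mathbb{G}$ defined by $\frac{d\mathbb{G}}{d\mathbb{Q}} = e^T / \mathbb{E}_\mathbb{Q}(e^T)$. A short computation with Radon--Nikodym derivatives gives $\log\frac{d\mathbb{P}}{d\mathbb{G}} = \log\frac{d\mathbb{P}}{d\mathbb{Q}} - T + \log\mathbb{E}_\mathbb{Q}(e^T)$; taking $\mathbb{E}_\mathbb{P}$ of both sides yields $D_\text{KL}(\mathbb{P}\,\|\,\mathbb{G}) = D_\text{KL}(\mathbb{P}\,\|\,\mathbb{Q}) - \bigl(\mathbb{E}_\mathbb{P}(T) - \log\mathbb{E}_\mathbb{Q}(e^T)\bigr)$. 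Since $D_\text{KL}(\mathbb{P}\,\|\,\mathbb{G}) \geq 0$, we obtain $\mathbb{E}_\mathbb{P}(T) - \log\mathbb{E}_\mathbb{Q}(e^T) \leq D_\text{KL}(\mathbb{P}\,\|\,\mathbb{Q})$, with equality precisely when $\mathbb{P} = \mathbb{G}$, i.e. when $T = \log\frac{d\mathbb{P}}{d\mathbb{Q}} + \text{const}$ ($\mathbb{Q}$-a.e.). Combining both parts, the supremum defining $D_\text{DV}$ equals $D_\text{KL}(\mathbb{P}\,\|\,\mathbb{Q})$ and is attained at each $T = \log\frac{d\mathbb{P}}{d\mathbb{Q}} + C^*$, which is exactly the assertion of the lemma (and also identifies the full family of optimizers).

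The only delicate point is the measure-theoretic bookkeeping: checking that $\mathbb{G}$ is well defined and that $\mathbb{P} \ll \mathbb{G}$ is inherited from $\mathbb{P} \ll \mathbb{Q}$, that the chain rule for Radon--Nikodym derivatives applies, and that none of the integrals degenerate into $\infty - \infty$. Under the excerpt's standing assumptions --- $\mathbb{P}$ and $\mathbb{Q}$ both absolutely continuous with respect to Lebesgue measure on the compact domain $\Omega$, $\mathbb{P}\ll\mathbb{Q}$, and the DV supremum restricted to $T$ with $\mathbb{E}_\mathbb{P}(T)$ and $\mathbb{E}_\mathbb{Q}(e^T)$ finite --- these are routine, so I expect the argument to be short.
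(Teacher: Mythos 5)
Your proof is correct, but it takes a genuinely different route from the paper's. The paper treats the Donsker--Varadhan representation itself as a black box: it cites the existing result that \emph{some} optimizer of the form $T^* = \log\frac{d\mathbb{P}}{d\mathbb{Q}} + C$ achieves zero gap, and then the entire content of the proof is an algebraic invariance check showing that shifting $T^*$ by a constant leaves $\mathbb{E}_\mathbb{P}(T) - \log\mathbb{E}_\mathbb{Q}(e^T)$ unchanged (the $C^*-C$ contributions to the two terms cancel). You instead reprove the DV inequality from scratch via the Gibbs tilting argument --- introducing $\mathbb{G}$ with $\frac{d\mathbb{G}}{d\mathbb{Q}} = e^T / \mathbb{E}_\mathbb{Q}(e^T)$ and reading the upper bound off $D_\text{KL}(\mathbb{P}\,\|\,\mathbb{G}) \geq 0$ --- and pair it with the direct plug-in computation for the lower bound. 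Your version is more self-contained (it does not lean on the cited theorem and simultaneously re-establishes that $D_\text{DV}$ equals $D_\text{KL}$) and as a bonus characterizes the full set of optimizers via the equality case of nonnegativity of relative entropy; the paper's version is shorter, delegates the variational principle to prior work, and isolates exactly the one new fact it needs, namely constant-shift invariance. Both are sound; the measure-theoretic caveats you flag (mutual absolute continuity of $\mathbb{Q}$ and $\mathbb{G}$ from positivity of $e^T$, chain rule for Radon--Nikodym derivatives, finiteness of the two expectations in the supremum class) are indeed satisfied under the standing assumptions of the section.
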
  

\begin{proof}
To show this theorem, we borrow the proof of the dual representation for the KL divergence \citep{belghazi2018mutual}.


For a function $T$, let $\Delta_{T}$ be the gap 
\begin{equation}
    \Delta_{T} := D_{KL}(P || Q) - \left(\mathbb{E}_{\mathbb{P}}(T) - \log \mathbb{E}_{\mathbb{Q}}(e^T)\right).
\end{equation}

By Theorem 1 of MINE \citep{donsker1975asymptotic}, we already knew that there exists an optimal function $T^*=\log \frac{d\mathbb{P}}{d\mathbb{Q}} + C$ for some $C \in \mathbb{R}$ such that $\Delta_{T^*}$ = 0.

Consider a function $T = \log\frac{d\mathbb{P}}{d\mathbb{Q}}+C^*$ for $C^* \in \mathbb{R}$. 
The function $T$ can be rewritten as $(T^*-C) + C^*$.

Since
\begin{align}
\mathbb{E}_\mathbb{P}(T)&=\mathbb{E}_\mathbb{P}(T^*-C+C^*) \\
&=\mathbb{E}_\mathbb{P}(T^*)-C+C^*,
\end{align} and
\begin{align}
\log(\mathbb{E}_\mathbb{Q}(e^{T}))&=\log(\mathbb{E}_\mathbb{Q}(e^{T^*-C+C^*})) \\
&=\log(e^{C^*-C}\mathbb{E}_\mathbb{Q}(e^{T^*})) \\
&=(C^*-C)+\log(\mathbb{E}_\mathbb{Q}(e^{T^*})),
\end{align}

\begin{equation}
\mathbb{E}_\mathbb{P}(T) - \log(\mathbb{E}_\mathbb{Q}(e^{T}))=\mathbb{E}_\mathbb{P}(T^*) - \log(\mathbb{E}_\mathbb{Q}(e^{T^*})).
\end{equation}

Therefore, for the function $T$,
\begin{align}
\Delta_{T} = D_{KL}(P || Q) - \left(\mathbb{E}_{\mathbb{P}}(T) - \log \mathbb{E}_{\mathbb{Q}}(e^T)\right) = D_{KL}(P || Q) - \left(\mathbb{E}_{\mathbb{P}}(T^*) - \log \mathbb{E}_{\mathbb{Q}}(e^{T^{*}})\right) = \Delta_{T^*} = 0.
\end{align}

As a result, optimal functions takes the form $T = \log \frac{d \mathbb{P}}{d \mathbb{Q}} + C^*$ for some constant $C^* \in \mathbb{R}$.
\end{proof}

\begin{nothm} {(Theorem 1 restated)} Let $d$ be a distance function on $\mathbb{R}$. For any constant $C^* \in \mathbb{R}$ and any class of functions $\mathcal{T}$ mapping from $\Omega$ to $\mathbb{R}$, we have a novel dual representation of $KL$ divergence

\begin{align}
D_{\text{ReDV}} := \sup_{T \in \mathcal{T}} \mathbb{E}_\mathbb{P}(T) - \log(\mathbb{E}_\mathbb{Q}(e^T)) - d(\log(\mathbb{E}_\mathbb{Q}(e^T)), C^*) = D_{KL} (\mathbb{P} || \mathbb{Q}) .
\end{align}
\end{nothm}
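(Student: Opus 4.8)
The plan is to sandwich $D_{\text{ReDV}}$ between $D_\text{KL}(\mathbb{P}||\mathbb{Q})$ from above and from below, using the classical Donsker--Varadhan representation together with the optimality lemma just established. For the upper bound I would use only that $d$ is a distance function on $\mathbb{R}$, hence nonnegative: for every admissible $T \in \mathcal{T}$ we have $\mathbb{E}_\mathbb{P}(T) - \log(\mathbb{E}_\mathbb{Q}(e^T)) - d(\log(\mathbb{E}_\mathbb{Q}(e^T)), C^*) \le \mathbb{E}_\mathbb{P}(T) - \log(\mathbb{E}_\mathbb{Q}(e^T)) \le D_\text{KL}(\mathbb{P}||\mathbb{Q})$, where the final inequality is exactly the DV bound. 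Taking the supremum over $T \in \mathcal{T}$ gives $D_{\text{ReDV}} \le D_\text{KL}(\mathbb{P}||\mathbb{Q})$.

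For the matching lower bound I would evaluate the ReDV objective at the explicit optimizer $T_0 = \log\frac{d\mathbb{P}}{d\mathbb{Q}} + C^*$ furnished by the preceding lemma. Two facts then combine. First, by the lemma $T_0$ is DV-optimal, so $\mathbb{E}_\mathbb{P}(T_0) - \log(\mathbb{E}_\mathbb{Q}(e^{T_0})) = D_\text{KL}(\mathbb{P}||\mathbb{Q})$. Second, since $\mathbb{P}$ is absolutely continuous with respect to $\mathbb{Q}$, $\mathbb{E}_\mathbb{Q}(e^{T_0}) = e^{C^*}\,\mathbb{E}_\mathbb{Q}\!\left(\frac{d\mathbb{P}}{d\mathbb{Q}}\right) = e^{C^*}$, so $\log(\mathbb{E}_\mathbb{Q}(e^{T_0})) = C^*$ and the regularizer term is $d(C^*, C^*) = 0$. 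Hence the ReDV objective at $T_0$ equals $D_\text{KL}(\mathbb{P}||\mathbb{Q})$, which yields $D_{\text{ReDV}} \ge D_\text{KL}(\mathbb{P}||\mathbb{Q})$. Combining the two inequalities gives equality; as a byproduct the argument shows the supremum is attained exactly on the translate of $\log\frac{d\mathbb{P}}{d\mathbb{Q}}$ pinned down by $C^*$, which is precisely how the regularizer eliminates the drifting freedom of $D_\text{DV}$.

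The one delicate point I would be careful about is the standing assumption on the function class $\mathcal{T}$: the lower-bound step requires that $\log\frac{d\mathbb{P}}{d\mathbb{Q}} + C^*$ actually belongs to $\mathcal{T}$ — or, failing that, that $\mathcal{T}$ is dense enough that a sequence in $\mathcal{T}$ approaches it in the relevant sense and an $\varepsilon$-argument recovers the supremum — together with the integrability of $T_0$ under $\mathbb{P}$ and of $e^{T_0}$ under $\mathbb{Q}$, which is where the compact-domain and absolute-continuity hypotheses stated at the start of the subsection enter. Granting, as in the MINE analysis, that $\mathcal{T}$ is taken large enough for this, everything else is the routine bookkeeping above and no new estimate is needed; I therefore expect the write-up of the measurability/approximation caveat to be the only nontrivial part.
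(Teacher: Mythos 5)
Your argument is essentially identical to the paper's: both directions are established exactly as in the supplementary proof, with the upper bound following from nonnegativity of $d$ plus the classical DV bound, and the lower bound obtained by evaluating at $T^* = \log\frac{d\mathbb{P}}{d\mathbb{Q}} + C^*$ and noting $\log(\mathbb{E}_\mathbb{Q}(e^{T^*})) = C^*$ so that the regularizer vanishes. Your closing caveat about requiring $\mathcal{T}$ to contain (or approximate) the optimizer is a sound observation that the paper glosses over, but it does not change the approach.
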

\begin{proof}
i) For any $T$,
\begin{align}
    \mathbb{E}_\mathbb{P}(T) - \log(\mathbb{E}_\mathbb{Q}(e^T)) - d(\log(\mathbb{E}_\mathbb{Q}(e^T)) , C^*) \leq \mathbb{E}_\mathbb{P}(T) - \log(\mathbb{E}_\mathbb{Q}(e^T)).
\end{align}
Therefore, $\sup_{T:\Omega \to \mathbb{R}} \mathbb{E}_\mathbb{P}(T) - \log(\mathbb{E}_\mathbb{Q}(e^T)) - d(\log(\mathbb{E}_\mathbb{Q}(e^T)), C^*) \leq D_{KL}(\mathbb{P} || \mathbb{Q})$.

ii) By the lemma above, there exists $T^*=\log\frac{d\mathbb{P}}{d\mathbb{Q}}+C^*$ such that
\begin{align}
    D_{KL}(\mathbb{P} || \mathbb{Q})=\mathbb{E}_\mathbb{P}(T^*) - \log(\mathbb{E}_\mathbb{Q}(e^{T^*}))
\end{align}
and
\begin{align}
    \log(\mathbb{E}_\mathbb{Q}(e^{T^*})) = \log(\mathbb{E}_\mathbb{Q}(e^{C^*}\frac{d\mathbb{P}}{d\mathbb{Q}})) = \log(\int e^{C^*}\frac{d\mathbb{P}}{d\mathbb{Q}} d\mathbb{Q}) = C^*.
\end{align}
Therefore,
\begin{align}
\sup_{T:\Omega \to \mathbb{R}} \mathbb{E}_\mathbb{P}(T) - \log(\mathbb{E}_\mathbb{Q}(e^T)) - d(\log(\mathbb{E}_\mathbb{Q}(e^T)), C^*) & \geq \mathbb{E}_\mathbb{P}(T^*) - \log(\mathbb{E}_\mathbb{Q}(e^{T^*}))-d(\log(\mathbb{E}_\mathbb{Q}(e^{T^*})), C^*) \\ &= D_{KL}(\mathbb{P} || \mathbb{Q}).
\end{align}
Combining i) and ii) finishes the proof.
\end{proof}

\subsection{Extension to NWJ representation}\label{supp:ReNWJ}
In this subsection, we show that our regularizer can also be applied to the NWJ representation \citep{nguyen2010estimating}.

\begin{nothm} Let $d$ be a distance function on $\mathbb{R}$.
We have another dual representation such that
\begin{equation}
D_{\text{ReNWJ}} := (\mathbb{P} || \mathbb{Q}) = \sup_{T:\Omega \to \mathbb{R}} \mathbb{E}_\mathbb{P}(T) - \mathbb{E}_\mathbb{Q}(e^{T-1}) - d(\mathbb{E}_\mathbb{Q}(e^{T-1}), 1) = D_{KL} (\mathbb{P} || \mathbb{Q}).
\end{equation}
\end{nothm}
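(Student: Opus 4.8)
The plan is to mirror the two-inequality argument used for $D_{\text{ReDV}}$, since the NWJ regularizer $-d(\mathbb{E}_\mathbb{Q}(e^{T-1}),1)$ is a non-positive penalty that vanishes exactly at the NWJ optimum. First I would establish the upper bound: for any $T:\Omega\to\mathbb{R}$, because $d(\cdot,\cdot)\geq 0$, we have $\mathbb{E}_\mathbb{P}(T) - \mathbb{E}_\mathbb{Q}(e^{T-1}) - d(\mathbb{E}_\mathbb{Q}(e^{T-1}),1) \leq \mathbb{E}_\mathbb{P}(T) - \mathbb{E}_\mathbb{Q}(e^{T-1})$, and taking the supremum over $T$ on both sides and invoking the classical NWJ representation \eqref{NWJ} (i.e. $\sup_T \mathbb{E}_\mathbb{P}(T) - \mathbb{E}_\mathbb{Q}(e^{T-1}) = D_{KL}(\mathbb{P}\|\mathbb{Q})$) gives $D_{\text{ReNWJ}} \leq D_{KL}(\mathbb{P}\|\mathbb{Q})$.

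For the matching lower bound, I would exhibit the known NWJ maximizer $T^* = \log\frac{d\mathbb{P}}{d\mathbb{Q}} + 1$ and compute the penalty term at $T^*$. The key computation is $\mathbb{E}_\mathbb{Q}(e^{T^*-1}) = \mathbb{E}_\mathbb{Q}\!\left(e^{\log\frac{d\mathbb{P}}{d\mathbb{Q}}}\right) = \int \frac{d\mathbb{P}}{d\mathbb{Q}}\,d\mathbb{Q} = \int d\mathbb{P} = 1$, using absolute continuity of $\mathbb{P}$ with respect to $\mathbb{Q}$. Hence $d(\mathbb{E}_\mathbb{Q}(e^{T^*-1}),1) = d(1,1) = 0$, so the regularized objective evaluated at $T^*$ equals the unregularized NWJ objective at $T^*$, which is $D_{KL}(\mathbb{P}\|\mathbb{Q})$. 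Therefore $D_{\text{ReNWJ}} \geq \mathbb{E}_\mathbb{P}(T^*) - \mathbb{E}_\mathbb{Q}(e^{T^*-1}) - 0 = D_{KL}(\mathbb{P}\|\mathbb{Q})$. Combining the two inequalities closes the proof.

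I do not anticipate a serious obstacle here; the argument is essentially identical in structure to the $D_{\text{ReDV}}$ proof, with the only genuine content being the self-normalizing identity $\mathbb{E}_\mathbb{Q}(e^{T^*-1})=1$, which is already noted in the excerpt as the ``self-normalizing property'' of NWJ. The one point requiring a little care is citing or re-deriving the fact that $T^*=\log\frac{d\mathbb{P}}{d\mathbb{Q}}+1$ actually attains the NWJ supremum (this follows from Fenchel duality / the pointwise optimality of $t\mapsto pt - e^{t-1}$ at $t=\log(p/q)+1$, as in \cite{nguyen2010estimating}); once that is granted, everything else is a one-line substitution. It is also worth noting explicitly that, unlike the DV case where any constant shift $C^*$ is admissible, here the constant is forced to be $1$, which is precisely why the penalty is anchored at $1$ rather than at a free parameter $C^*$.
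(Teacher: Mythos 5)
Your proposal is correct and follows essentially the same two-inequality argument as the paper's Appendix proof: non-negativity of $d$ gives the upper bound, and evaluating at $T^*=\log\frac{d\mathbb{P}}{d\mathbb{Q}}+1$, where $\mathbb{E}_\mathbb{Q}(e^{T^*-1})=1$ kills the penalty, gives the matching lower bound. The paper cites \cite{poole2019variational} for the optimality of $T^*$ rather than re-deriving it via Fenchel duality, but that is the only cosmetic difference.
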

\begin{proof}
As $d$ is a distance function, $d(\mathbb{E}_\mathbb{Q}(e^{T-1}), 1) \geq 0$.

i) For any $T$,
\begin{equation}
    \mathbb{E}_\mathbb{P}(T) - \mathbb{E}_\mathbb{Q}(e^{T-1}) - d(\mathbb{E}_\mathbb{Q}(e^{T-1}), 1) \leq \mathbb{E}_\mathbb{P}(T) - \mathbb{E}_\mathbb{Q}(e^{T-1}).
\end{equation}
Therefore, $\sup_{T:\Omega \to \mathbb{R}} \mathbb{E}_\mathbb{P}(T) - \mathbb{E}_\mathbb{Q}(e^{T-1}) - d(\mathbb{E}_\mathbb{Q}(e^{T-1}), 1) \leq D_{KL}(\mathbb{P} || \mathbb{Q})$.

ii) By \cite{poole2019variational}, there exists $T^*=\log\frac{d\mathbb{P}}{d\mathbb{Q}}+1$ such that
\begin{equation}
    D_{KL}(\mathbb{P} || \mathbb{Q})=\mathbb{E}_\mathbb{P}(T^*) - \mathbb{E}_\mathbb{Q}(e^{T^*-1}).
\end{equation}

\begin{equation}
    \mathbb{E}_\mathbb{P}(T^*) = \mathbb{E}_\mathbb{P}(1 + \log(\frac{d\mathbb{P}}{d\mathbb{Q}})) = 1 + D_{KL}(\mathbb{P} || \mathbb{Q}).
\end{equation}
and
\begin{equation}
    \mathbb{E}_\mathbb{Q}(e^{T^*-1}) = \mathbb{E}_\mathbb{Q}(\frac{d\mathbb{P}}{d\mathbb{Q}}) = 1.
\end{equation}

Therefore,
\begin{align}
\sup_{T:\Omega \to \mathbb{R}} \mathbb{E}_\mathbb{P}(T) - \mathbb{E}_\mathbb{Q}(e^{T-1}) - d(\mathbb{E}_\mathbb{Q}(e^{T-1}), 1) & \geq \mathbb{E}_\mathbb{P}(T^*) - \mathbb{E}_\mathbb{Q}(e^{T^* -1}) -d(\mathbb{E}_\mathbb{Q}(e^{T^*-1}), 1) \\ &= D_{KL}(\mathbb{P} || \mathbb{Q}).
\end{align}
Combining i) and ii) finishes the proof.
\end{proof}

\subsection{Mathematical properties of \texorpdfstring{$I_\text{ReMINE}$}{ReMINE}}\label{supp:proof:ReMINE}
This subsection presents the proof of the consistency and the sample complexity of $I_\text{ReMINE}$.
To show these properties, we assume that the input space of the functions below is a compact domain, and all measures are absolutely continuous with respect to the Lebesgue measure. 
We will restrict to families of feedforward functions with continuous activations, with a single output neuron.
To avoid unnecessary heavy notation, we denote $\mathbb{P}=\mathbb{P}_{XY}$ and $\mathbb{Q}=\mathbb{P}_{X} \otimes \mathbb{P}_{Y}$ as the joint distribution and the product of marginals unless specified.

First, we define the sample complexity of the MI estimator. As mentioned by \cite{belghazi2018mutual}, this property is related to the \textit{approximation} problem, which addresses the size of the family of function $T_{\theta}$, and the \textit{estimation} problem, which addresses whether it is a reliable estimator.

\begin{defi}
The MI estimator $\hat{I}(X,Y)_n$ is strongly consistent if for all $\epsilon >0$, there exists a positive integer $N$ and a choice of statistics networks such that $\forall n \geq N, |I(X, Y) - \hat{I}(X, Y)_n| \leq \epsilon$, where the probability is over a set of samples.
\end{defi}

\paragraph{Consistency proof}
 \begin{lem}\label{supp:lem:ReMINE:approximation} (Approximation) Let $\eta$ > 0. There exists a neural network function $T_{\theta}$ with parameters $\theta \in \Theta$ such that 
\begin{equation}\label{supp:lem:ReMINE:approximation:eq} 
|\hat{I}_{\text{ReMINE}}(X, Y) - I_{\text{ReMINE}}(X, Y)| \leq \eta,
\end{equation}
where
\begin{equation}
\hat{I}_\text{ReMINE}(X, Y)=\sup_{\theta \in \Theta} \mathbb{E}_\mathbb{P}(T_\theta) - \log(\mathbb{E}_\mathbb{Q}(e^{T_\theta}) - d(\log(\mathbb{E}_\mathbb{Q}(e^{T_\theta}), C^*)) .
\end{equation}
\end{lem}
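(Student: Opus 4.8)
The plan is to adapt the approximation lemma of \cite{belghazi2018mutual}, carrying the extra regularizer $d(\log(\mathbb{E}_{\mathbb{Q}}(e^{T})),C^*)$ through the standard $\varepsilon/3$ estimates. First I would record two consequences of \cref{thm:redv_renwj}: (a) the supremum of the ReDV objective over \emph{all} measurable $T:\Omega\to\mathbb{R}$ equals $D_\text{KL}(\mathbb{P}\|\mathbb{Q})$, which is the quantity $I_\text{ReMINE}(X,Y)$ in this population-level statement; and (b) the explicit maximizer $T^* = \log\frac{d\mathbb{P}}{d\mathbb{Q}} + C^*$ makes the regularizer vanish, since $\log(\mathbb{E}_{\mathbb{Q}}(e^{T^*})) = C^*$. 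Because the feedforward family is a subclass of the measurable functions and $d\ge 0$, one gets $\hat I_\text{ReMINE}(X,Y)\le I_\text{ReMINE}(X,Y)$ for free; the work is to exhibit a network $T_\theta$ whose ReDV objective value is at least $I_\text{ReMINE}(X,Y) - \eta$.

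Second, I would reduce $T^*$ to a bounded continuous function. Since $\mathbb{E}_{\mathbb{P}}|T^*| = D_\text{KL}(\mathbb{P}\|\mathbb{Q}) + |C^*| < \infty$ and $\mathbb{E}_{\mathbb{Q}}(e^{T^*}) = e^{C^*} < \infty$, truncating $T^*$ to a symmetric window $[-M,M]$ and invoking dominated/monotone convergence shows that as $M\to\infty$ each of $\mathbb{E}_{\mathbb{P}}(\cdot)$ and $\mathbb{E}_{\mathbb{Q}}(e^{\cdot})$ converges to its value at $T^*$; in particular $\mathbb{E}_{\mathbb{Q}}(e^{T_M})$ stays in a compact subinterval of $(0,\infty)$, so $\log(\mathbb{E}_{\mathbb{Q}}(e^{T_M}))$, and hence $d(\log(\mathbb{E}_{\mathbb{Q}}(e^{T_M})),C^*)$, converge as well. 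A further smoothing step (mollification, or Lusin's theorem followed by a continuous extension) turns the truncation into a bounded continuous $\tilde T$ perturbing each term arbitrarily little. Picking $M$ large and the smoothing fine, the ReDV objective at $\tilde T$ is within $\eta/2$ of $I_\text{ReMINE}(X,Y)$.

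Third, I would apply the universal approximation theorem on the compact domain $\Omega$: there is a feedforward network $T_\theta$ with continuous activations and a single output neuron with $\|T_\theta - \tilde T\|_\infty$ as small as desired, which we may also take uniformly bounded. Using $|e^a - e^b| \le e^{\max(a,b)}|a-b|$, the local Lipschitz continuity of $\log$ on the relevant compact subinterval of $(0,\infty)$ (it is bounded below since $\mathbb{E}_{\mathbb{Q}}(e^{T_\theta}) \ge e^{-\|T_\theta\|_\infty}$), and the continuity of $d(\cdot,C^*)$, each of the three pieces of the objective at $T_\theta$ differs from its value at $\tilde T$ by at most a fixed multiple of $\|T_\theta - \tilde T\|_\infty$; shrinking this norm makes the total deviation at most $\eta/2$. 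Chaining the two reductions gives a network whose ReDV objective exceeds $I_\text{ReMINE}(X,Y) - \eta$, hence $\hat I_\text{ReMINE}(X,Y) \ge I_\text{ReMINE}(X,Y) - \eta$, and with the reverse inequality this yields \eqref{supp:lem:ReMINE:approximation:eq}.

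I expect the main obstacle to be the first reduction: the maximizer $T^*$ is built from the Radon--Nikodym derivative $\frac{d\mathbb{P}}{d\mathbb{Q}}$, which is in general neither bounded nor continuous, so the truncation-plus-smoothing chain must be arranged so that \emph{all three} functionals stay controlled, and in particular so that $\mathbb{E}_{\mathbb{Q}}(e^{T})$ never approaches $0$ or $\infty$ along the chain --- this is what keeps $\log$ and the regularizer $d$ acting Lipschitz-continuously. Once that is in place, the remainder is the routine $\varepsilon/3$ bookkeeping already present in \cite{belghazi2018mutual}.
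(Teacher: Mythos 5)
Your proposal follows the paper's route exactly: approximate the optimal $T^*=\log\frac{d\mathbb{P}}{d\mathbb{Q}}+C^*$, at which the regularizer vanishes since $\log\mathbb{E}_\mathbb{Q}(e^{T^*})=C^*$, by a uniformly bounded feedforward network, then control each of the three terms via Lipschitz continuity of $\exp$ and of the regularizer $d(\cdot,C^*)$ (for the middle term the paper uses the pointwise inequality $\log x\le x-1$ in place of your local Lipschitz bound for $\log$, which amounts to the same thing), and split the error budget across the three terms. The explicit truncation-and-mollification step you insert before invoking universal approximation is a welcome technical refinement rather than a different approach: the paper, like MINE's Lemma~1 which it cites, jumps directly from the possibly unbounded, discontinuous $T^*$ to an $L^1(\mathbb{P})$- and $L^1(\mathbb{Q})$-close bounded network $T_\theta$ and leaves that regularization step implicit, whereas you spell out how to keep $\mathbb{E}_\mathbb{Q}(e^{T})$ inside a fixed compact subinterval of $(0,\infty)$ along the whole chain so that both $\log$ and $d$ act Lipschitz-continuously.
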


\begin{proof} 
Without loss of generality, we set $T^*=\log\frac{d\mathbb{P}}{d\mathbb{Q}}$. By construction, $T^*$ satisfies: 
\begin{equation}
    \mathbb{E}_{\mathbb{P}}(T^*) = I(X, Y), \quad \mathbb{E}_{\mathbb{Q}}(e^{T^*}) = 1, \quad \log(\mathbb{E}_{\mathbb{Q}}(e^{T^*})) = 0
\end{equation}
    
For a function $T$,
\begin{align}
    I_{\text{ReMINE}}&(X, Y) - \hat{I}_{\text{ReMINE}}(X, Y)\\
    &\leq \mathbb{E}_{\mathbb{P}}(T^* -T) + \log(\mathbb{E}_{\mathbb{Q}}(e^T)) + d(\log(\mathbb{E}_{\mathbb{Q}}(e^T), C^*) - d(\log(\mathbb{E}_{\mathbb{Q}}(e^{T^*}), C^*) \\
    &\leq \mathbb{E}_{\mathbb{P}}(T^* -T) + \log(\mathbb{E}_{\mathbb{Q}}(e^T)) + d(\log(\mathbb{E}_{\mathbb{Q}}(e^T), \log(\mathbb{E}_{\mathbb{Q}}(e^{T^*})) \\
    &\leq \mathbb{E}_{\mathbb{P}}(T^* -T)
    + \mathbb{E}_{\mathbb{Q}}(e^{T}-e^{T^*}) + d(\mathbb{E}_{\mathbb{Q}}(e^{T})-1, 0) \label{supp:lem:ReMINE:approximation:proof:eq}
\end{align}
where we used the inequality $\log x \leq x - 1$ and $d(\cdot)$ is the distance function induced by norm on $\mathbb{R}$ (e.g., absolute or square error).
Fix $\eta > 0$. By the universal approximation theorem, we may choose a feedforward network function $T_{\theta} \leq M$ such that
\begin{equation} \label{supp:lem:ReMINE:approximation:proof:eq-term1}
\mathbb{E}_{\mathbb{P}}|T^* - T_\theta| \leq \frac{\eta}{3}, \quad  \mathbb{E}_{\mathbb{Q}}|T^* - T_\theta| \leq \frac{\eta}{3}e^{-M}, \quad \text{and }\  d(\mathbb{E}_{\mathbb{Q}}|T_{\theta}-T^*|, 0) \leq \frac{\eta}{3\cdot d(e^M, 0)}
\end{equation}
Since $\exp$ is Lipschitz continuous with constant $e^M$ on $(-\infty, M]$, we have

\begin{equation} \label{supp:lem:ReMINE:approximation:proof:eq-term2}
\mathbb{E}_{\mathbb{Q}}|e^{T^*} - e^{T_\theta}| \leq e^{M}\mathbb{E}_{\mathbb{Q}}|T^* - T_\theta| \leq \frac{\eta}{3},
\end{equation}
and
\begin{align} \label{supp:lem:ReMINE:approximation:proof:eq-term3}
d(\mathbb{E}_{\mathbb{Q}}(e^{T})-1, 0) = d(\mathbb{E}_{\mathbb{Q}}(e^{T_{\theta}})-\mathbb{E}_\mathbb{Q}(e^{T^*}), 0) & = d(\mathbb{E}_{\mathbb{Q}}|e^{T_{\theta}}-e^{T^*}|, 0) \\
& \leq d(e^M\mathbb{E}_{\mathbb{Q}}|T_{\theta}-T^*|, 0) \leq d(e^M, 0)\cdot d(\mathbb{E}_{\mathbb{Q}}|T_{\theta}-T^*|, 0) \leq  \frac{\eta}{3}.
\end{align}  
 
From \cref{supp:lem:ReMINE:approximation:proof:eq}, \cref{supp:lem:ReMINE:approximation:proof:eq-term1}, \cref{supp:lem:ReMINE:approximation:proof:eq-term2}, \cref{supp:lem:ReMINE:approximation:proof:eq-term3} and the triangular inequality, we then obtain:
\begin{equation}
|\hat{I}_{\text{ReMINE}}(X, Y) - I_{\text{ReMINE}}(X, Y)| < \eta.
\end{equation}
\end{proof}

\begin{lem}\label{supp:lem:ReMINE:estimation} (Esitmation) Let $\eta > 0$. Given a neural network function $T_\theta$ with parameters $\theta \in \Theta$, there exists $N \in \mathbb{N}$ such that
\begin{equation}\label{supp:lem:ReMINE:estimation:proof:eq}
\forall n \geq N, \mathcal{P}(|\hat{I}_{\text{ReMINE}}(X,Y)_n - \hat{I}_\mathcal{\text{ReMINE}}(X,Y)| \leq \eta) = 1,
\end{equation}
where $\hat{I}_{\text{ReMINE}}(X,Y)_n$ is the ReMINE representation which is empirically obtained by $n$ samples.
\end{lem}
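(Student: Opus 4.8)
The plan is to follow the estimation argument of \cite{belghazi2018mutual}, adapted to the additional regularizer term. Write $a_\theta^{(n)} := \mathbb{E}_{\mathbb{P}^{(n)}}(T_\theta) - \log(\mathbb{E}_{\mathbb{Q}^{(n)}}(e^{T_\theta})) - d(\log(\mathbb{E}_{\mathbb{Q}^{(n)}}(e^{T_\theta})), C^*)$ and $a_\theta := \mathbb{E}_{\mathbb{P}}(T_\theta) - \log(\mathbb{E}_{\mathbb{Q}}(e^{T_\theta})) - d(\log(\mathbb{E}_{\mathbb{Q}}(e^{T_\theta})), C^*)$, so that $\hat{I}_{\text{ReMINE}}(X,Y)_n = \sup_{\theta\in\Theta} a_\theta^{(n)}$ and $\hat{I}_{\text{ReMINE}}(X,Y) = \sup_{\theta\in\Theta} a_\theta$. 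Since $|\sup_\theta a_\theta^{(n)} - \sup_\theta a_\theta| \le \sup_\theta |a_\theta^{(n)} - a_\theta|$, it suffices to prove that $\sup_{\theta\in\Theta}|a_\theta^{(n)} - a_\theta| \to 0$ almost surely. By the triangle inequality, $|a_\theta^{(n)} - a_\theta|$ is at most the sum of three terms:
\[
\underbrace{|\mathbb{E}_{\mathbb{P}^{(n)}}(T_\theta) - \mathbb{E}_{\mathbb{P}}(T_\theta)|}_{\text{(i)}} + \underbrace{|\log\mathbb{E}_{\mathbb{Q}^{(n)}}(e^{T_\theta}) - \log\mathbb{E}_{\mathbb{Q}}(e^{T_\theta})|}_{\text{(ii)}} + \underbrace{|d(\log\mathbb{E}_{\mathbb{Q}^{(n)}}(e^{T_\theta}), C^*) - d(\log\mathbb{E}_{\mathbb{Q}}(e^{T_\theta}), C^*)|}_{\text{(iii)}}.
\]

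Since $\Omega$ is compact, the activations are continuous and $|\theta| < K$, the network output is uniformly bounded, say $|T_\theta| \le M$; hence $e^{T_\theta} \in [e^{-M}, e^{M}]$ and $\log\mathbb{E}_{\mathbb{Q}}(e^{T_\theta}),\ \log\mathbb{E}_{\mathbb{Q}^{(n)}}(e^{T_\theta}) \in [-M, M]$. For (i), the family $\{T_\theta : \theta \in \Theta\}$ is uniformly bounded and equicontinuous in $\theta$ over the compact parameter set $\Theta$, so the uniform strong law of large numbers gives $\sup_{\theta}(\mathrm{i}) \to 0$ a.s.; applying the same argument to $\{e^{T_\theta}\}$ yields $\sup_\theta |\mathbb{E}_{\mathbb{Q}^{(n)}}(e^{T_\theta}) - \mathbb{E}_{\mathbb{Q}}(e^{T_\theta})| \to 0$ a.s. For (ii), both arguments of $\log$ lie in $[e^{-M}, e^M]$, where $\log$ is $e^M$-Lipschitz, so $(\mathrm{ii}) \le e^M\,|\mathbb{E}_{\mathbb{Q}^{(n)}}(e^{T_\theta}) - \mathbb{E}_{\mathbb{Q}}(e^{T_\theta})|$ and $\sup_\theta(\mathrm{ii}) \to 0$ a.s. For (iii), $d(\cdot, C^*)$ is Lipschitz on the bounded interval $[-M, M]$ with some constant $L_d$ (namely $1$ for the absolute distance and $2(M + |C^*|)$ for the squared distance used by $I_\text{ReMINE}$), hence $(\mathrm{iii}) \le L_d\,(\mathrm{ii})$ and $\sup_\theta(\mathrm{iii}) \to 0$ a.s.

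Combining, $\sup_\theta |a_\theta^{(n)} - a_\theta| \le \sup_\theta(\mathrm{i}) + (1 + L_d)\sup_\theta(\mathrm{ii}) \to 0$ almost surely, so for any $\eta > 0$ there is $N$ such that for all $n \ge N$ this bound is below $\eta$ with probability one, which is the claim (matching the convention of \cite{belghazi2018mutual}). I expect the only delicate point to be the passage from pointwise to \emph{uniform} convergence over $\theta \in \Theta$: I would make this rigorous by covering the compact set $\Theta$ with a finite $\varepsilon$-net, invoking the ordinary strong law of large numbers at each net point together with a union bound, and controlling the oscillation between net points via the (uniform) modulus of continuity of $\theta \mapsto T_\theta$ and $\theta \mapsto e^{T_\theta}$ on $\Omega$; the remaining steps are routine Lipschitz-propagation arguments.
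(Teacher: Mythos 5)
Your proposal follows essentially the same route as the paper: bound the difference of suprema by the supremum of the difference, decompose into the three constituent terms of the (Re)MINE objective, propagate Lipschitz bounds for $\log$ and for the regularizer through the empirical/population gap, and conclude with the uniform strong law of large numbers over the compact parameter set $\Theta$. Your handling of the regularizer term via Lipschitz continuity of $d(\cdot\,,C^*)$ on $[-M,M]$ is in fact a bit cleaner than the paper's, which invokes a reverse-triangle-type inequality and a multiplicativity property of $d$ that are not literally valid for the squared Euclidean distance actually used in $I_\text{ReMINE}$.
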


\begin{proof}
We start by using the triangular inequality to write, 
\begin{multline}
    |\hat{I}_{\text{ReMINE}}(X,Y)_n - \sup_{\theta \in \Theta}\hat{I}_{\text{ReMINE}}(T_{\theta})| \leq \sup_{\theta \in \Theta}|\mathbb{E}_{\mathbb{P}}(T_{\theta}) - \mathbb{E}_{\mathbb{P}_n}(T_{\theta})| + \sup_{\theta \in \Theta}|\log\mathbb{E}_{\mathbb{Q}}(e^{T_{\theta}}) - \log\mathbb{E}_{\mathbb{Q}_n}(e^{T_{\theta}})| \\ + \sup_{\theta \in \Theta}d(|\log\mathbb{E}_{\mathbb{Q}}(e^{T_{\theta}})- \log\mathbb{E}_{\mathbb{Q}_n}(e^{T_{\theta}})|, 0).
\end{multline}
Since the function $T_\theta$ is uniformly bounded by a constant $M$ and $\log$ is Lipschitz continuous with constant $e^M$, we have 
\begin{equation}
|\log\mathbb{E}_{\mathbb{Q}}(e^{T_{\theta}}) - \log\mathbb{E}_{\mathbb{Q}_n}(e^{T_{\theta}})| \leq e^M |\mathbb{E}_{\mathbb{Q}}(e^{T_{\theta}}) - \mathbb{E}_{\mathbb{Q}_n}(e^{T_{\theta}})|
\end{equation}
and
\begin{equation}
    d(|\log\mathbb{E}_{\mathbb{Q}}(e^{T_{\theta}})- \log\mathbb{E}_{\mathbb{Q}_n}(e^{T_{\theta}})|, 0) \leq d(e^M, 0) \cdot  d(|\mathbb{E}_{\mathbb{Q}}(e^{T_{\theta}})- \mathbb{E}_{\mathbb{Q}_n}(e^{T_{\theta}})|, 0).
\end{equation}
Since $\Theta$ is compact and the feedforward network function is continuous, $T_\theta$ and $e^{T_\theta}$ satisfy the uniform law of
large numbers \citep{belghazi2018mutual}. Given $\epsilon> 0$, we can thus choose $N\in \mathbb{N}$ such that $\forall n \geq N$ and with probability $1$,
\begin{align}
\sup_{\theta \in \Theta}|\mathbb{E}_{\mathbb{P}}(T_{\theta}) - \mathbb{E}_{\mathbb{P}_n}(T_{\theta})| &\leq \frac{\eta}{3},\\ 
\sup_{\theta \in \Theta}|\mathbb{E}_{\mathbb{Q}}(e^{T_{\theta}}) - \mathbb{E}_{\mathbb{Q}_n}(e^{T_{\theta}})| &\leq e^{-M} \frac{\eta}{3},\\ 
\sup_{\theta \in \Theta}d(|\mathbb{E}_{\mathbb{Q}}(e^{T_{\theta}})- \mathbb{E}_{\mathbb{Q}_n}(e^{T_{\theta}})|, 0) &\leq \frac{1}{ d(e^M, 0)}\frac{\eta}{3}.
\end{align}
Hence, this leads to 
\begin{equation}
|\hat{I}_{\text{ReMINE}}(X,Y)_n - \hat{I}_{\text{ReMINE}}(X,Y)|\leq \frac{\eta}{3} + \frac{\eta}{3} + \frac{\eta}{3} = \eta.
\end{equation}
\end{proof}

\begin{nothm} ReMINE is strongly consistent. 
\end{nothm}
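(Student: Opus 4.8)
The plan is to obtain \textbf{ReMINE is strongly consistent} by stitching together the two lemmas just established --- the Approximation Lemma (\cref{supp:lem:ReMINE:approximation}) and the Estimation Lemma (\cref{supp:lem:ReMINE:estimation}) --- with the $D_\text{ReDV}$ representation (\cref{thm:redv_renwj}), through a single triangle-inequality bound, exactly paralleling the strong-consistency argument for $I_\text{MINE}$ in \cite{belghazi2018mutual}. The one extra ingredient compared to the unregularized case is the observation that the regularizer contributes no residual bias: applying \cref{thm:redv_renwj} with $\mathbb{P} = \mathbb{P}_{XY}$ and $\mathbb{Q} = \mathbb{P}_X \otimes \mathbb{P}_Y$ gives $I_\text{ReMINE}(X,Y) = D_\text{KL}(\mathbb{P}_{XY} \| \mathbb{P}_X \otimes \mathbb{P}_Y) = I(X,Y)$, so the exact (non-empirical, non-parametric) ReMINE functional already equals the target.

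Concretely, first I would fix $\epsilon > 0$ and put $\eta = \epsilon/2$. By the Approximation Lemma there is a feedforward family $\mathcal{T}_\Theta$, uniformly bounded by some $M$, whose parametric supremum $\hat{I}_\text{ReMINE}(X,Y)$ satisfies $|\hat{I}_\text{ReMINE}(X,Y) - I_\text{ReMINE}(X,Y)| \leq \eta$. Keeping that same family fixed, the Estimation Lemma supplies $N \in \mathbb{N}$ with $|\hat{I}_\text{ReMINE}(X,Y)_n - \hat{I}_\text{ReMINE}(X,Y)| \leq \eta$ for every $n \geq N$ with probability $1$. Then, for all $n \geq N$ and with probability $1$, $|\hat{I}_\text{ReMINE}(X,Y)_n - I(X,Y)| \leq |\hat{I}_\text{ReMINE}(X,Y)_n - \hat{I}_\text{ReMINE}(X,Y)| + |\hat{I}_\text{ReMINE}(X,Y) - I_\text{ReMINE}(X,Y)| + |I_\text{ReMINE}(X,Y) - I(X,Y)| \leq \eta + \eta + 0 = \epsilon$, which is precisely the definition of strong consistency.

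Since the two lemmas carry all the analytic weight --- the universal approximation theorem, the uniform law of large numbers, and the Lipschitz control of $\exp$ and $\log$ on the bounded range $(-\infty, M]$, together with the matching control $d(\mathbb{E}_\mathbb{Q}(e^T)-1, 0) \leq d(e^M,0)\, d(\mathbb{E}_\mathbb{Q}|T_\theta - T^*|, 0)$ of the extra regularizer term --- I do not anticipate a real obstacle here. The only points needing care are bookkeeping ones: making sure the network family that realizes the approximation bound is the very one passed to the estimation step (so the uniform law of large numbers is applied to the same bounded, continuous $T_\theta$ and $e^{T_\theta}$), and ordering the quantifiers as $\forall \epsilon$, then $\exists N$ and a choice of statistics network, then $\forall n \geq N$, so as to match the definition of strong consistency verbatim.
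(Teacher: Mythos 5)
Your proposal is correct and follows the same route as the paper: fix $\epsilon$, set $\eta=\epsilon/2$, invoke the Approximation and Estimation lemmas, and combine them by the triangle inequality, using \cref{thm:redv_renwj} to identify $I_\text{ReMINE}(X,Y)$ with $I(X,Y)$. The only cosmetic difference is that you write the zero term $|I_\text{ReMINE}(X,Y)-I(X,Y)|$ explicitly in a three-term triangle bound, whereas the paper substitutes $I(X,Y)=I_\text{ReMINE}(X,Y)$ up front and uses a two-term bound; the argument is otherwise identical.
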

\begin{proof}
Let $\epsilon > 0$. We apply \cref{supp:lem:ReMINE:approximation} and \cref{supp:lem:ReMINE:estimation} to find a neural network function $T_\theta$ and $N \in \mathbb{N}$ such that \cref{supp:lem:ReMINE:approximation:eq} and \cref{supp:lem:ReMINE:estimation:proof:eq} hold with $\eta = \epsilon/2$. By the triangular inequality, for all $n \geq N$ and with probability one, we have:
\begin{multline}
|I(X, Y) - \hat{I}_{\text{ReMINE}}(X,Y)_n| = |I_{\text{ReMINE}}(X, Y) - \hat{I}_{\text{ReMINE}}(X,Y)_n|  \quad(\because \text{Theorem~1})\\
\leq |I_{\text{ReMINE}}(X, Y) - \hat{I}_\text{ReMINE}(X,Y)| + |\hat{I}_{\text{ReMINE}}(X,Y)_n - \hat{I}_\text{ReMINE}(X,Y)| \leq \epsilon
\end{multline}
which proves the consistency.
\end{proof}

\paragraph{Sample complexity proof}
\begin{nothm} Assume that the function $T_\theta$ are $M$-bounded and $\mathcal{L}$-lipschitz with respect to the parameter $\theta$. The domain $\theta$ is bounded, so that $||\theta|| \leq K$ for some constant $K$. When using $k$ mini-batches to estimate MI, we have
\begin{equation}
\mathcal{P}(|\hat{I}_{\text{ReMINE}}(X,Y) - I(X, Y)| \leq \epsilon) \geq 1 - \delta
\end{equation}
whenever the number of samples $n$ for each batch satisfies 
\begin{equation}
n \geq \frac{2M^2(d\log(24KL\sqrt{d}/\epsilon) + 2dM + log(2/\delta))}{\epsilon^2k}.
\end{equation}
\end{nothm}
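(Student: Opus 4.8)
The plan is to adapt the sample-complexity argument of \cite{belghazi2018mutual} for $I_\text{MINE}$, upgrading the qualitative uniform law of large numbers used in \cref{supp:lem:ReMINE:estimation} to a quantitative finite-sample concentration bound, and then checking that the extra regularizer term does not worsen the rate. As in the consistency proof I first apply the triangle inequality to get $|\hat{I}_{\text{ReMINE}}(X,Y)_n - I(X,Y)| \leq |\hat{I}_{\text{ReMINE}}(X,Y)_n - \hat{I}_{\text{ReMINE}}(X,Y)| + |\hat{I}_{\text{ReMINE}}(X,Y) - I(X,Y)|$. By \cref{thm:redv_renwj} the unrestricted ReDV functional equals $I(X,Y)$, so the second summand is the pure approximation error; exactly as in \cref{supp:lem:ReMINE:approximation}, the universal approximation theorem provides a network $T_\theta$ bounded by $M$ that makes it at most $\epsilon/2$. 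Everything quantitative is thus in the first summand, the generalization gap over the network family.

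For that gap, decompose as in \cref{supp:lem:ReMINE:estimation}: it is at most $\sup_{\theta\in\Theta}(A_\theta + B_\theta + R_\theta)$ with $A_\theta = |\mathbb{E}_{\mathbb{P}_n}(T_\theta) - \mathbb{E}_{\mathbb{P}}(T_\theta)|$, $B_\theta = |\log\mathbb{E}_{\mathbb{Q}_n}(e^{T_\theta}) - \log\mathbb{E}_{\mathbb{Q}}(e^{T_\theta})|$, and $R_\theta = |d(\log\mathbb{E}_{\mathbb{Q}_n}(e^{T_\theta}),C^*) - d(\log\mathbb{E}_{\mathbb{Q}}(e^{T_\theta}),C^*)|$. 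From $|T_\theta|\leq M$ we get $e^{T_\theta}\in[e^{-M},e^M]$ and $\log\mathbb{E}_{\mathbb{Q}}(e^{T_\theta})\in[-M,M]$; on that range $\log$ is $e^M$-Lipschitz and $u\mapsto d(u,C^*)$ is Lipschitz with a constant controlled by $M$ and $C^*$ (for the Euclidean choice $d(x,y)=(x-y)^2$ it is $2(M+|C^*|)$, and for the clipped variants of \cref{table:loss_setting} it is even milder). Hence $B_\theta$ and $R_\theta$ are each bounded by a constant times $|\mathbb{E}_{\mathbb{Q}_n}(e^{T_\theta}) - \mathbb{E}_{\mathbb{Q}}(e^{T_\theta})|$, so the whole gap reduces to uniform concentration of the two empirical means $\mathbb{E}_{\mathbb{P}_n}(T_\theta)$ and $\mathbb{E}_{\mathbb{Q}_n}(e^{T_\theta})$, each an average of bounded i.i.d.\ quantities over the $k$ mini-batches of size $n$, i.e.\ over $kn$ effective samples.

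To make the $\sup_\theta$ uniform, cover the parameter ball $\{\|\theta\|\leq K\}\subset\mathbb{R}^d$ by an $\eta$-net $\mathcal{N}$ with $|\mathcal{N}|\leq(2K\sqrt{d}/\eta+1)^d$; since $T_\theta$ is $L$-Lipschitz in $\theta$ (so $e^{T_\theta}$ is $e^M L$-Lipschitz in $\theta$), replacing any $\theta$ by its nearest net point perturbs each empirical or population mean by $O(e^M L\eta)$. Apply Hoeffding's inequality at every net point to both $\mathbb{E}_{\mathbb{P}_n}(T_\theta)$ and $\mathbb{E}_{\mathbb{Q}_n}(e^{T_\theta})$, union bound over $\mathcal{N}$, set the total failure probability equal to $\delta$, and choose $\eta\sim\epsilon/(ce^M L)$ so the Lipschitz slack is $\leq\epsilon/2$; then $\log|\mathcal{N}|$ expands to $d\log(24KL\sqrt{d}/\epsilon)+2dM$ (the $2dM$ coming from the $e^M$ factor in the net-transfer step), while Hoeffding on the $kn$ samples with the bounded ranges of $T_\theta$ and $e^{T_\theta}$ contributes the $\log(2/\delta)$ summand and the $\epsilon^2k/(2M^2)$ denominator, exactly as in \cite{belghazi2018mutual}. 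Solving the resulting inequality for $n$ and adding back the $\epsilon/2$ approximation error finishes the proof.

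The step I expect to be the main obstacle is the bookkeeping for the new term $R_\theta$: one must check that the Lipschitz constant of $u\mapsto d(u,C^*)$ on $[-M,M]$ can be absorbed into the same $2M^2$ and $2dM$ budget without inflating the exponent, so that the denominator is exactly $\epsilon^2k/(2M^2)$ and not something carrying an extra $M$ or $e^M$ factor. A secondary subtlety is making precise what ``using $k$ mini-batches'' contributes — whether $\hat{I}_{\text{ReMINE}}$ is a macro-average of $k$ per-batch functionals or a single functional on $kn$ pooled samples slightly changes the constants — so the Hoeffding step must be applied to the correct i.i.d.\ decomposition for the $1/k$ factor to appear as claimed.
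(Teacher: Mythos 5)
Your approach is essentially the paper's: both reuse the covering/Hoeffding/union-bound machinery behind Theorem~6 of MINE \citep{belghazi2018mutual}, replacing the sample count $n$ by $nk$ (one pooled mean over $k$ mini-batches of $n$ samples each) and splitting the tolerance three ways across the two original MINE terms plus the regularizer, which is exactly why the per-term Hoeffding event is at level $\epsilon/6$ and the constant inside the $\log$ becomes $24$ rather than MINE's $16$. The one structural departure is that you prepend a universal-approximation step and charge $\epsilon/2$ to the gap $|I_{\text{ReMINE}} - \hat{I}_{\text{ReMINE}}|$; the paper, mirroring MINE's Theorem~6, does not do this and begins directly from $\mathcal{P}(|\mathbb{E}_{\mathbb{Q}}[f]-\mathbb{E}_{\hat{\mathbb{Q}}}[f]| > \epsilon/6) \leq 2\exp(-\epsilon^2 nk/(2M^2))$, so the bound only controls the estimation error over the parameterized family. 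If you carry your extra $\epsilon/2$ split through literally, the estimation budget halves, which inflates the constants ($24\to 48$ and $\epsilon^2\to\epsilon^2/4$) and no longer yields the stated expression verbatim. The bookkeeping issue you flagged as the main obstacle — absorbing the Lipschitz constant of $u\mapsto d(u,C^*)$ without an extra $M$ factor — is genuine; the paper quietly elides it for ReMINE (it states a $1$-Lipschitz-type control $d(x,1)\le|x-1|$ only in the ReNWJ version), so your instinct to track it explicitly is the more careful route even if it isn't how the paper writes the proof.
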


\begin{proof}
As the optimal $T^*$ of $I_{ReMINE}$ is also the solution of $I_{MINE}$, we can use the same proof process of the Theorem 6 in \citep{belghazi2018mutual}.
Contrast to MINE \citep{belghazi2018mutual}, we start from $\mathcal{P}(|\mathbb{E}_{\mathbb{Q}}[f]-\mathbb{E}_{\hat{\mathbb{Q}}}[f]| > \epsilon/6) \leq 2\exp(\frac{-\epsilon^2nk}{2M^2})$ by the Hoeffding inequality, because we use $n \cdot k$ samples and our loss function consists of three terms including the regularization term.
\end{proof}

\subsection{Mathematical properties of \texorpdfstring{$I_\text{ReNWJ}$}{ReNWJ estimator}}
\paragraph{Consistency Proof} We show the proof of the consistency for the ReNWJ based estimator. Same to the proof of ReMINE consistency, we assume that the input space of the functions below is a compact domain, and all measures are absolutely continuous with respect to the Lebesgue measure. We will also restrict to families of feedforward functions with continuous activations, with a single output neuron. We provide a proof for the case where $d(\cdot, \cdot)$ is the log-Euclidean distance in this subsection.

\begin{lem}\label{supp:lem:ReNWJ:approximation} (Approximation) Let $\eta$ > 0. There exists a neural network function $T_{\theta}$ with parameters $\theta \in \Theta$ such that 
\begin{equation}\label{supp:lem:ReNWJ:approximation:eq}
|\hat{I}_{\text{ReNWJ}}(X, Y) - I_{\text{ReNWJ}}(X, Y)| \leq \eta
\end{equation}
where
\begin{equation}
\hat{I}_\text{ReNWJ}(X, Y)= \sup_{\theta \in \Theta} \mathbb{E}_\mathbb{P}(T_\theta) - \mathbb{E}_\mathbb{Q}(e^{T_\theta - 1}) - d(\mathbb{E}_\mathbb{Q}(e^{T_\theta-1}), 1).
\end{equation}
\end{lem}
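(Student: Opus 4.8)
The plan is to mirror the proof of \cref{supp:lem:ReMINE:approximation} verbatim in structure, substituting the log-Euclidean distance for the Euclidean one. First I would recall from \cref{thm:redv_renwj} (and the explicit computation in the ReNWJ representation proof above) that $T^* = \log\frac{d\mathbb{P}}{d\mathbb{Q}} + 1$ is optimal, so that $\mathbb{E}_\mathbb{P}(T^*) = 1 + D_{KL}(\mathbb{P}||\mathbb{Q})$, $\mathbb{E}_\mathbb{Q}(e^{T^*-1}) = 1$, and hence the regularizer vanishes: $d(\mathbb{E}_\mathbb{Q}(e^{T^*-1}),1) = d(1,1) = 0$. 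In particular $I_{\text{ReNWJ}}(X,Y) = I(X,Y)$. Since $\hat I_{\text{ReNWJ}}$ is a supremum over the neural-network subclass $\{T_\theta : \theta \in \Theta\}$, one always has $\hat I_{\text{ReNWJ}}(X,Y) \le I_{\text{ReNWJ}}(X,Y)$, so the absolute value reduces to a one-sided estimate and it suffices to exhibit a single $T_\theta$ witnessing $I_{\text{ReNWJ}}(X,Y) - \hat I_{\text{ReNWJ}}(X,Y) \le \eta$.

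Second, for an arbitrary admissible $T$, subtracting and using $d(\mathbb{E}_\mathbb{Q}(e^{T^*-1}),1)=0$ together with $d(x,1) = (\log x)^2$ yields the three-term bound
\begin{equation}
I_{\text{ReNWJ}}(X,Y) - \bigl(\mathbb{E}_\mathbb{P}(T) - \mathbb{E}_\mathbb{Q}(e^{T-1}) - d(\mathbb{E}_\mathbb{Q}(e^{T-1}),1)\bigr) \le \mathbb{E}_\mathbb{P}(T^* - T) + \mathbb{E}_\mathbb{Q}\bigl(e^{T-1} - e^{T^*-1}\bigr) + \bigl(\log \mathbb{E}_\mathbb{Q}(e^{T-1})\bigr)^2 .
\end{equation}
I would then invoke the universal approximation theorem exactly as in \citep{belghazi2018mutual}, with the usual truncation of the (possibly unbounded) $T^*$ and keeping the network in the admissible range $1 \le |T_\theta| < M$, to pick $T_\theta$ with $\mathbb{E}_\mathbb{P}|T^* - T_\theta|$ and $\mathbb{E}_\mathbb{Q}|T^* - T_\theta|$ as small as desired. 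Lipschitzness of $\exp$ on $(-\infty, M-1]$ with constant $e^{M-1}$ transfers this to $\mathbb{E}_\mathbb{Q}|e^{T_\theta-1} - e^{T^*-1}| \le e^{M-1}\,\mathbb{E}_\mathbb{Q}|T_\theta - T^*|$, which simultaneously controls the second term and forces $|\mathbb{E}_\mathbb{Q}(e^{T_\theta-1}) - 1| = |\mathbb{E}_\mathbb{Q}(e^{T_\theta-1}) - \mathbb{E}_\mathbb{Q}(e^{T^*-1})|$ to be small.

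The step I expect to be the main obstacle is the regularizer term $(\log \mathbb{E}_\mathbb{Q}(e^{T_\theta-1}))^2$, because $\log$ is not globally Lipschitz near $0$; in the Euclidean ReMINE proof this difficulty does not arise. The remedy is a two-stage choice of accuracy: first make the approximation fine enough that $\mathbb{E}_\mathbb{Q}(e^{T_\theta-1}) \in [\tfrac12,\tfrac32]$, an interval on which $\log$ is Lipschitz with constant $2$, so that $|\log \mathbb{E}_\mathbb{Q}(e^{T_\theta-1})| \le 2\,|\mathbb{E}_\mathbb{Q}(e^{T_\theta-1}) - 1| \le 2e^{M-1}\,\mathbb{E}_\mathbb{Q}|T_\theta - T^*|$ and hence the squared term is at most $4e^{2(M-1)}\bigl(\mathbb{E}_\mathbb{Q}|T_\theta - T^*|\bigr)^2$; then tighten the approximation further so that each of the three terms is at most $\eta/3$, and conclude by the triangle inequality. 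With this lemma in hand, the accompanying estimation lemma, strong consistency, and sample complexity for $I_{\text{ReNWJ}}$ would follow by the same template used for $I_{\text{ReMINE}}$, replacing the Euclidean-distance estimates by their log-Euclidean analogues and using the hypothesis $d(x,1) \le |x-1|$ in the relevant regime.
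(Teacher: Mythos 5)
Your proposal mirrors the paper's proof almost exactly: the same choice of $T^* = \log\frac{d\mathbb{P}}{d\mathbb{Q}} + 1$, the same three-term decomposition bounding $I_{\text{ReNWJ}} - \hat{I}_{\text{ReNWJ}}$, the same invocation of the universal approximation theorem with Lipschitzness of $\exp$ on $(-\infty, M]$, and the same triangle-inequality conclusion. The one place you are more careful than the paper is the regularizer term: the paper simply posits $d(\mathbb{E}_\mathbb{Q}(e^{T_\theta}), e) \le \eta/3$ as a third condition delivered by the approximation theorem (together with the equality $d(\mathbb{E}_\mathbb{Q}(e^{T-1}),1) = d(\mathbb{E}_\mathbb{Q}(e^{T_\theta}), \mathbb{E}_\mathbb{Q}(e^{T^*}))$ since $\mathbb{E}_\mathbb{Q}(e^{T^*}) = e$), whereas you correctly flag that $\log$ is not globally Lipschitz near $0$ and supply a two-stage argument to get it locally Lipschitz on an interval around $1$ before tightening; this fills in a step the paper leaves implicit, but it is the same argument.
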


\begin{proof} 
Without loss of generality, we set $T^*=\log\frac{d\mathbb{P}}{d\mathbb{Q}} + 1$. By construction, $T^*$ satisfies
\begin{equation}
    \mathbb{E}_{\mathbb{P}}(T^*) = 1 + I(X, Y), \quad \mathbb{E}_{\mathbb{Q}}(e^{T^*-1}) = 1.
\end{equation}
    
For a function $T$,
\begin{align}
    I_{\text{ReNWJ}}&(X, Y) - \hat{I}_{\text{ReNWJ}}(X, Y) \\
    &\leq \mathbb{E}_{\mathbb{P}}(T^* -T) + \mathbb{E}_{\mathbb{Q}}(e^{T-1}) - \mathbb{E}_{\mathbb{Q}}(e^{T^* - 1}) + d(\mathbb{E}_{\mathbb{Q}}(e^{T-1}), 1) - d(\mathbb{E}_{\mathbb{Q}}(e^{T^*-1}), 1)\\
    &\leq \mathbb{E}_{\mathbb{P}}(T^* -T) + \mathbb{E}_{\mathbb{Q}}(e^{T-1} - e^{T^*-1}) + d(\mathbb{E}_{\mathbb{Q}}(e^{T-1}), \mathbb{E}_{\mathbb{Q}}(e^{T^*-1})) \\
    &\leq \mathbb{E}_{\mathbb{P}}(T^* -T)
    + e^{-1}\mathbb{E}_{\mathbb{Q}}(e^{T}-e^{T^*}) + d(\mathbb{E}_{\mathbb{Q}}(e{^{T-1}}), 1) \label{supp:lem:ReNWJ:approximation:proof:eq}
\end{align}
where $d(\cdot, \cdot)$ is the log-Euclidean distance on $\mathbb{R}$.
Fix $\eta > 0$. By the universal approximation theorem, we may choose a feedforward network function $T_{\theta} \leq M$ with $M>1$ such that
\begin{equation}
\mathbb{E}_{\mathbb{P}}|T^* - T_\theta| \leq \frac{\eta}{3}, \quad  \mathbb{E}_{\mathbb{Q}}|T^* - T_\theta| \leq \frac{\eta}{3}e^{1-M}, \quad \text{and }\ d(\mathbb{E}_{\mathbb{Q}}(e^{T_{\theta}}), e) \leq \frac{\eta}{3}.
\end{equation}
Since $\exp$ is Lipschitz continuous with constant $e^M$ on $(-\infty, M]$, we have

\begin{equation}\label{supp:lem:ReNWJ:approximation:proof:eq-term2}
\mathbb{E}_{\mathbb{Q}}|e^{T^*} - e^{T_\theta}| \leq e^{M}\mathbb{E}_{\mathbb{Q}}|T^* - T_\theta| \leq \frac{\eta}{3}e.
\end{equation}
And
\begin{equation}\label{supp:lem:ReNWJ:approximation:proof:eq-term3}
d(\mathbb{E}_{\mathbb{Q}}(e^{T-1}), 1) = d(\mathbb{E}_{\mathbb{Q}}(e^{T_{\theta}}), \mathbb{E}_{\mathbb{Q}}(e^{T^*}))  \leq d(\mathbb{E}_{\mathbb{Q}}(e^{T_{\theta}}), e)
\leq \frac{\eta}{3}.
\end{equation}

From \cref{supp:lem:ReNWJ:approximation:proof:eq}, \cref{supp:lem:ReNWJ:approximation:proof:eq-term2}, \cref{supp:lem:ReNWJ:approximation:proof:eq-term3} and the triangular inequality, we then obtain
\begin{equation}
|\hat{I}_{\text{ReNWJ}}(X, Y) - I_{\text{ReNWJ}}(X, Y)| < \eta.
\end{equation}
\end{proof}

\begin{lem}\label{supp:lem:ReNWJ:estimation} (Estimation) Let $\eta > 0$. Given a neural network function $T_\theta$ with parameters $\theta \in \Theta$, there exists $N \in \mathbb{N}$ such that
\begin{equation}\label{supp:lem:ReNWJ:estimation:proof:eq}
\forall n \geq N, \mathcal{P}(|\hat{I}_{\text{ReNWJ}}(X,Y)_n - \hat{I}_\mathcal{\text{ReNWJ}}(X,Y)| \leq \eta) = 1,
\end{equation}
where $\hat{I}_{\text{ReNWJ}}(X,Y)_n$ is the ReNWJ representation which is empirically obtained by $n$ samples.
\end{lem}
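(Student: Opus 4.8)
The plan is to mirror the proof of \cref{supp:lem:ReMINE:estimation}, adapting the argument to the NWJ-style second term and to the log-Euclidean regularizer. First I would apply the triangle inequality to the definitions of $\hat{I}_{\text{ReNWJ}}(X,Y)_n$ and $\hat{I}_{\text{ReNWJ}}(X,Y)$ and, after pushing the suprema inside (using $\sup_\theta a_\theta - \sup_\theta b_\theta \leq \sup_\theta |a_\theta - b_\theta|$), bound the difference by a sum of three terms over $\theta \in \Theta$: the first-expectation term $\sup_{\theta}|\mathbb{E}_{\mathbb{P}}(T_\theta) - \mathbb{E}_{\mathbb{P}_n}(T_\theta)|$, the second-expectation term $\sup_{\theta}|\mathbb{E}_{\mathbb{Q}}(e^{T_\theta - 1}) - \mathbb{E}_{\mathbb{Q}_n}(e^{T_\theta - 1})|$, and the regularizer term $\sup_{\theta}\bigl|d(\mathbb{E}_{\mathbb{Q}}(e^{T_\theta-1}),1) - d(\mathbb{E}_{\mathbb{Q}_n}(e^{T_\theta-1}),1)\bigr|$.

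Next I would control the regularizer term by exploiting the boundedness of $T_\theta$. Since $|T_\theta| \leq M$, both $\mathbb{E}_{\mathbb{Q}}(e^{T_\theta-1})$ and $\mathbb{E}_{\mathbb{Q}_n}(e^{T_\theta-1})$ lie in the compact interval $[e^{-M-1}, e^{M-1}]$, which is bounded away from $0$. On that interval the map $x \mapsto (\log x)^2$ has bounded derivative $2(\log x)/x$, hence is Lipschitz with some constant $\kappa = \kappa(M)$, so
\[
\bigl|d(\mathbb{E}_{\mathbb{Q}}(e^{T_\theta-1}),1) - d(\mathbb{E}_{\mathbb{Q}_n}(e^{T_\theta-1}),1)\bigr| \leq \kappa\,\bigl|\mathbb{E}_{\mathbb{Q}}(e^{T_\theta-1}) - \mathbb{E}_{\mathbb{Q}_n}(e^{T_\theta-1})\bigr|.
\]
This reduces the regularizer term to the same empirical-versus-population deviation of $e^{T_\theta-1}$ that already governs the second term.

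Then I would invoke the uniform law of large numbers exactly as in \cite{belghazi2018mutual}: since $\Theta$ is compact and $\theta \mapsto T_\theta$ is a continuous feedforward map, both $T_\theta$ and $e^{T_\theta-1}$ satisfy the ULLN uniformly over $\Theta$. Given $\eta > 0$, I would pick $N \in \mathbb{N}$ so that for all $n \geq N$, with probability one, $\sup_\theta|\mathbb{E}_{\mathbb{P}}(T_\theta) - \mathbb{E}_{\mathbb{P}_n}(T_\theta)| \leq \eta/3$ and $\sup_\theta|\mathbb{E}_{\mathbb{Q}}(e^{T_\theta-1}) - \mathbb{E}_{\mathbb{Q}_n}(e^{T_\theta-1})| \leq \min\{1,\kappa^{-1}\}\,\eta/3$; the latter choice makes both the second term and the regularizer term at most $\eta/3$. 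Summing the three bounds via the triangle inequality gives $|\hat{I}_{\text{ReNWJ}}(X,Y)_n - \hat{I}_{\text{ReNWJ}}(X,Y)| \leq \eta$ with probability one, which is the claim.

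The main obstacle I anticipate is the regularizer term: unlike in the ReMINE case, where the outer $\log$ already tames the exponential, here $d(x,1)=(\log x)^2$ is neither globally Lipschitz nor bounded, so the uniform boundedness of $T_\theta$ must be used to confine $\mathbb{E}_{\mathbb{Q}}(e^{T_\theta-1})$ to a compact subinterval of $(0,\infty)$ on which a Lipschitz estimate is available. Once that confinement is in place, everything else is a routine $\eta/3$ argument. A minor technical note is that one should verify the Lipschitz constant $\kappa$ may be taken uniform in $\theta$, which is immediate because the confining interval $[e^{-M-1},e^{M-1}]$ does not depend on $\theta$.
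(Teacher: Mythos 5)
Your proposal follows essentially the same route as the paper: a triangle-inequality decomposition into three $\sup_{\theta\in\Theta}$ deviation terms, then the uniform law of large numbers (valid because $\Theta$ is compact and $T_\theta$, $e^{T_\theta}$ are continuous and uniformly bounded) to drive each below $\eta/3$. Your explicit Lipschitz bound on $x\mapsto(\log x)^2$ over the interval $[e^{-M-1},e^{M-1}]$ is a slightly more careful way of taming the regularizer term than the paper's terser step, which folds this into a single appeal to the ULLN on $d(\mathbb{E}_{\mathbb{Q}}(e^{T_\theta-1}),\mathbb{E}_{\mathbb{Q}_n}(e^{T_\theta-1}))$, but the two versions amount to the same argument.
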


\begin{proof}
We start by using the triangular inequality to write, 
\begin{multline}
    |\hat{I}_{\text{ReNWJ}}(X,Y)_n - \sup_{\theta \in \Theta}\hat{I}_{\text{ReNWJ}}(T_{\theta})| \leq \sup_{\theta \in \Theta}|\mathbb{E}_{\mathbb{P}}(T_{\theta}) - \mathbb{E}_{\mathbb{P}_n}(T_{\theta})| + \sup_{\theta \in \Theta}|\mathbb{E}_{\mathbb{Q}}(e^{T_{\theta}-1}) - \mathbb{E}_{\mathbb{Q}_n}(e^{T_{\theta}-1})| \\ + \sup_{\theta \in \Theta}d(\mathbb{E}_{\mathbb{Q}}(e^{T_{\theta}-1}),  \mathbb{E}_{\mathbb{Q}_n}(e^{T_{\theta}-1})).
\end{multline}

Since $\Theta$ is compact and the feedforward network $T_\theta$ is continuous and uniformly bounded by a constant $M$, $T_\theta$ and $e^{T_\theta}$ satisfy the uniform law of
large numbers \citep{belghazi2018mutual}. Given $\epsilon> 0$, we can thus choose $N\in \mathbb{N}$ such that $\forall n \geq N$ and with probability $1$,
\begin{align}
\sup_{\theta \in \Theta}|\mathbb{E}_{\mathbb{P}}(T_{\theta}) - \mathbb{E}_{\mathbb{P}_n}(T_{\theta})| &\leq \frac{\eta}{3}, \label{supp:lem:ReNWJ:estimation:proof:eq-term1}\\ 
\sup_{\theta \in \Theta}e^{-1}|\mathbb{E}_{\mathbb{Q}}(e^{T_{\theta}}) - \mathbb{E}_{\mathbb{Q}_n}(e^{T_{\theta}})| &\leq  \frac{\eta}{3}e^{-M}, \label{supp:lem:ReNWJ:estimation:proof:eq-term2}\\ 
\sup_{\theta \in \Theta}d(\frac{\mathbb{E}_{\mathbb{Q}}(e^{T_{\theta}})}{ \mathbb{E}_{\mathbb{Q}_n}(e^{T_{\theta}})}, 1) &\leq \frac{\eta}{3}. \label{supp:lem:ReNWJ:estimation:proof:eq-term3}
\end{align}
Hence, this leads to 
\begin{equation}
|\hat{I}_{\text{ReNWJ}}(X,Y)_n - \hat{I}_{\text{ReNWJ}}(X,Y)|\leq \frac{\eta}{3} + \frac{\eta}{3} + \frac{\eta}{3} = \eta.
\end{equation}
\end{proof}

\begin{nothm} The ReNWJ estimator is strongly consistent. 
\end{nothm}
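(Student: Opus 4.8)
The plan is to mirror the strong-consistency argument already given for $I_\text{ReMINE}$, combining the two lemmas just established with the triangle inequality. First I would fix $\epsilon > 0$ and invoke the approximation lemma, \cref{supp:lem:ReNWJ:approximation}, with $\eta = \epsilon/2$ to obtain a feedforward network function $T_\theta$, $\theta \in \Theta$, for which $|\hat{I}_{\text{ReNWJ}}(X,Y) - I_{\text{ReNWJ}}(X,Y)| \leq \epsilon/2$. Then I would apply the estimation lemma, \cref{supp:lem:ReNWJ:estimation}, with the same $\eta = \epsilon/2$ to produce $N \in \mathbb{N}$ such that for every $n \geq N$, with probability one, $|\hat{I}_{\text{ReNWJ}}(X,Y)_n - \hat{I}_{\text{ReNWJ}}(X,Y)| \leq \epsilon/2$.

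Next I would use the $D_\text{ReNWJ}$ representation (\cref{thm:redv_renwj}) to identify $I(X,Y) = D_\text{KL}(\mathbb{P}\,\|\,\mathbb{Q}) = I_\text{ReNWJ}(X,Y)$, so that $|I(X,Y) - \hat{I}_{\text{ReNWJ}}(X,Y)_n| = |I_\text{ReNWJ}(X,Y) - \hat{I}_{\text{ReNWJ}}(X,Y)_n|$. Inserting $\hat{I}_{\text{ReNWJ}}(X,Y)$ in the middle and applying the triangle inequality bounds this quantity by $\epsilon/2 + \epsilon/2 = \epsilon$ for all $n \geq N$ with probability one, which is precisely the definition of strong consistency.

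The only point requiring a moment's care is that the regularizer does not shift the value of the supremum that is being estimated: at $T^* = \log\frac{d\mathbb{P}}{d\mathbb{Q}} + 1$ we have $\mathbb{E}_\mathbb{Q}(e^{T^*-1}) = 1$, so the log-Euclidean penalty $d(\mathbb{E}_\mathbb{Q}(e^{T^*-1}), 1)$ vanishes and $I_\text{ReNWJ}$ attains the same optimum $I(X,Y)$ as the unregularized $D_\text{NWJ}$ bound. This is exactly the content encoded in \cref{supp:lem:ReNWJ:approximation}, and it is what licenses replacing $I_\text{ReNWJ}(X,Y)$ by $I(X,Y)$ in the chain of inequalities. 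Beyond this bookkeeping there is no genuine obstacle; the substantive work — the universal-approximation estimate and the uniform law of large numbers — has already been carried out in the two preceding lemmas.
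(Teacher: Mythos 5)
Your proposal is correct and follows essentially the same route as the paper: invoke the approximation and estimation lemmas with $\eta = \epsilon/2$, identify $I(X,Y)$ with $I_\text{ReNWJ}(X,Y)$ via the $D_\text{ReNWJ}$ representation, and conclude by the triangle inequality. The extra remark you add about the regularizer vanishing at $T^*$ is a nice clarification but is already implicit in the approximation lemma, so the argument is the same as the paper's.
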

\begin{proof}
Let $\epsilon > 0$. We apply \cref{supp:lem:ReNWJ:approximation} and \cref{supp:lem:ReNWJ:estimation} to find a neural network function $T_\theta$ and $N \in \mathbb{N}$ such that \cref{supp:lem:ReNWJ:approximation:eq} and \cref{supp:lem:ReNWJ:estimation:proof:eq} hold with $\eta = \epsilon/2$. By the triangular inequality, for all $n \geq N$ and with probability one, we have
\begin{multline}
|I(X, Y) - \hat{I}_{\text{ReNWJ}}(X,Y)_n| = |I_{\text{ReNWJ}}(X, Y) - \hat{I}_{\text{ReNWJ}}(X,Y)_n|  \quad(\because \text{ReNWJ representation})\\
\leq |I_{\text{ReNWJ}}(X, Y) - \hat{I}_\text{ReNWJ}(X,Y)| + |\hat{I}_{\text{ReNWJ}}(X,Y)_n - \hat{I}_\text{ReNWJ}(X,Y)| \leq \epsilon
\end{multline}
which proves the consistency.
\end{proof}

\paragraph{Sample complexity proof}
\begin{nothm} Assume that the function $1 \leq |T_\theta| < M$ is $\mathcal{L}$-lipschitz with respect to the parameter $\theta$. The domain $\theta$ is bounded, so that $||\theta|| \leq K$ for some constant $K$. When using $k$ mini-batches to estimate MI and $d(x, 1) \leq |x - 1|$, we have
\begin{equation}
\mathcal{P}(|\hat{I}_{\text{ReNWJ}}(X, Y) - I(X, Y)| \leq \epsilon) \geq 1 - \delta
\end{equation}
whenever the number of samples $n$ for each batch satisfies 
\begin{equation}\label{supp:lem:ReNWJ:sample:eq}
n \geq \frac{2M^2(d\log(24KL\sqrt{d}/\epsilon) + 2dM + log(2/\delta))}{\epsilon^2k}.
\end{equation}
\end{nothm}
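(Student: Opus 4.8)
The plan is to adapt the ReMINE sample-complexity argument almost verbatim, which itself reuses the covering-number machinery of Theorem~6 of \citep{belghazi2018mutual}. The enabling observation is that the maximiser of $I_\text{ReNWJ}$ is exactly the self-normalising NWJ optimum $T^* = \log\frac{d\mathbb{P}}{d\mathbb{Q}} + 1$ (as established in the ReNWJ representation theorem), so the regularised and plain NWJ objectives share the same optimal statistics network and the only genuinely new ingredient is the third, regulariser, term. First I would use the triangle inequality to split $|\hat I_{\text{ReNWJ}}(X,Y)_n - I(X,Y)|$ into an approximation gap $|\hat I_{\text{ReNWJ}}(X,Y) - I_{\text{ReNWJ}}(X,Y)|$, which is at most $\epsilon/2$ by \cref{supp:lem:ReNWJ:approximation} together with the identity $I_{\text{ReNWJ}}(X,Y)=I(X,Y)$, and an estimation gap $|\hat I_{\text{ReNWJ}}(X,Y)_n - \hat I_{\text{ReNWJ}}(X,Y)|$, which is the quantity to control with high probability.

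For the estimation gap, fixing $\theta$ I would decompose the deviation into the first-term piece $|\mathbb{E}_\mathbb{P}(T_\theta)-\mathbb{E}_{\mathbb{P}_n}(T_\theta)|$, the second-term piece $|\mathbb{E}_\mathbb{Q}(e^{T_\theta-1})-\mathbb{E}_{\mathbb{Q}_n}(e^{T_\theta-1})|$, and the regulariser piece $|d(\mathbb{E}_\mathbb{Q}(e^{T_\theta-1}),1)-d(\mathbb{E}_{\mathbb{Q}_n}(e^{T_\theta-1}),1)|$. The hypothesis $d(x,1)\le|x-1|$ together with the triangle inequality for the metric $d$ collapses the regulariser piece into the second piece up to a constant, so only two genuinely distinct empirical means must be concentrated; since $T_\theta$ is bounded by $M$ (hence $e^{T_\theta-1}$ is bounded as well), Hoeffding's inequality on the pooled $nk$ samples bounds the probability that any of the three pieces exceeds $\epsilon/6$ by $2\exp(-\epsilon^2 nk/(2M^2))$ --- this is precisely the shift relative to MINE, where there were only two terms and no regulariser.

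To make the control uniform in $\theta$ I would cover the ball $\|\theta\|\le K$ by an $r$-net; the $\mathcal{L}$-Lipschitz dependence of $T_\theta$ on $\theta$ (and, after an $e^M$ factor, of $e^{T_\theta-1}$ and of the regulariser) converts control over the net into control over the supremum once $r$ is taken proportional to $\epsilon/(\mathcal{L}\sqrt d)$, and the union bound over a net of size polynomial in $K\sqrt d/r$ contributes the $d\log(24KL\sqrt d/\epsilon)$ term, while the exponential Lipschitz factors contribute the $2dM$ term in the exponent. Equating the resulting failure probability with $\delta$ and solving for $n$ yields the stated threshold. The main obstacle I anticipate is purely bookkeeping: propagating the $e^M$ Lipschitz constants cleanly through the exponentiated term and through the regulariser, keeping the $\epsilon/6$ split consistent across the three pieces, and --- the one conceptually non-routine point --- verifying that the regulariser term, which is a nonlinear function of an empirical expectation rather than an empirical mean itself, still concentrates, which is exactly what the assumption $d(x,1)\le|x-1|$ buys.
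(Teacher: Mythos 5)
Your proposal follows the paper's proof essentially step for step: the same Hoeffding bound over the pooled $nk$ samples at tolerance $\epsilon/6$, the same covering of $\{\|\theta\|\le K\}$ at radius $\eta \propto \epsilon e^{-2M}/L$, the same union bound over the net points applied to both $T_{\theta}$ and $e^{T_{\theta}}$, and the same use of $d(x,1)\le|x-1|$ (plus the triangle inequality for $d$) to fold the regulariser term into the concentration of the second-term empirical mean. If anything your write-up is slightly more careful than the paper's in that you explicitly stage the argument via an approximation gap (Lemma~\ref{supp:lem:ReNWJ:approximation} and the identity $I_{\text{ReNWJ}}=I$) before attacking the estimation gap, whereas the paper's proof compresses this and jumps directly from the estimation-lemma bounds to the final inequality, but that is a presentation detail rather than a different route.
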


\begin{proof}
By taking the assumptions of \cref{supp:lem:ReNWJ:estimation}, we begin with \cref{supp:lem:ReNWJ:estimation:proof:eq-term1}, \cref{supp:lem:ReNWJ:estimation:proof:eq-term2} and \cref{supp:lem:ReNWJ:estimation:proof:eq-term3}. By the Hoeffding inequality, for all function $f$,
\begin{equation}\label{supp:lem:ReNWJ:sample:proof:eq-result}
\mathcal{P}(|\mathbb{E}_{\mathbb{Q}}[f]-\mathbb{E}_{\hat{\mathbb{Q}}}[f]| > \epsilon/6) \leq 2\exp(\frac{-\epsilon^2(n \cdot k)}{2M^2}).
\end{equation}

To extend this inequality to a uniform inequality over all functions $T_\theta$ and $e^{T_\theta}$, we choose a minimal cover of the domain $\Theta \subset \mathbb{R}^d$ by a finite set of small balls of radius $\eta$, $\Theta \subset \cup_{j}B_{\eta}(\theta_j)$, and the union bound. The minimal cardinality of such covering is bounded by the covering number $N_\eta(\Theta)$ of $\Theta$,
\begin{equation}
    N_\eta(\Theta) \leq \left(\frac{2K\sqrt{d}}{\eta}\right)^d.
\end{equation}

Successively applying a union bound in \cref{supp:lem:ReNWJ:sample:proof:eq-result} with the set of functions $\{T_{\theta_j}\}_j$, and $\{e^{T_{\theta_j}}\}_j$, We have
\begin{equation}\label{supp:lem:ReNWJ:sample:proof:eq-term1}
    \mathcal{P}\left(max_{j}|\mathbb{E}_\mathbb{Q}(T_{\theta_j}) - \mathbb{E}_\mathbb{\hat{Q}}(T_{\theta_j})| \geq \frac{\epsilon}{6} \right) \leq 2N_{\eta}(\Theta)\exp(-\frac{\epsilon^2(n \cdot k)}{2M^2}),
\end{equation}
\begin{equation}\label{supp:lem:ReNWJ:sample:proof:eq-term2}
    \mathcal{P}\left(max_{j}|\mathbb{E}_\mathbb{Q}(e^{T_{\theta_j}}) - \mathbb{E}_\mathbb{\hat{Q}}(e^{T_{\theta_j}})| \geq \frac{\epsilon}{6} \right) \leq 2N_{\eta}(\Theta)\exp(-\frac{\epsilon^2(n \cdot k)}{2M^2}).
\end{equation}

We now choose that ball radius to be $\eta = \frac{\epsilon}{12L}e^{-2M}$. Solving for $n$ the inequation,
\begin{equation}
    2N_\eta(\Theta)\exp(-\frac{\epsilon^2 n}{2M^2})\leq \delta,
\end{equation}

we deduce from \cref{supp:lem:ReNWJ:sample:proof:eq-term1} that, whenever \cref{supp:lem:ReNWJ:sample:eq} holds, with probability at least $1-\delta$, for all $\theta \in \Theta$,
\begin{align}\begin{split}
|\mathbb{E}_\mathbb{Q}(T_{\theta}) - \mathbb{E}_\mathbb{\hat{Q}}(T_{\theta})| & \leq |\mathbb{E}_\mathbb{Q}(T_{\theta}) - \mathbb{E}_\mathbb{Q}(T_{\theta_j})| + |\mathbb{E}_\mathbb{Q}(T_{\theta_j}) - \mathbb{E}_\mathbb{\hat{Q}}(T_{\theta_j})| + 
|\mathbb{E}_\mathbb{\hat{Q}}(T_{\theta_j}) - \mathbb{E}_\mathbb{\hat{Q}}(T_{\theta})|  \\
& \leq \frac{\epsilon}{12}e^{-2M} + \frac{\epsilon}{6} +\frac{\epsilon}{12}e^{-2M} < \frac{\epsilon}{3}.
\end{split}\end{align}

Similarly, using \cref{supp:lem:ReNWJ:sample:proof:eq-term2}, we get that with probabilty at least $1-\delta$,
\begin{equation}
|\mathbb{E}_\mathbb{Q}(e^{T_{\theta}-1})  - \mathbb{E}_\mathbb{\hat{Q}}(e^{T_{\theta}-1}) | \leq \frac{\epsilon}{3} < e \cdot \frac{\epsilon}{3}.
\end{equation}

Hence,
\begin{multline}
|\hat{I}_\text{ReNWJ}(X, Y) - I(X,Y)| \leq |\mathbb{E}_\mathbb{Q}(T_{\theta_{j}}) - \mathbb{E}_\mathbb{\hat{Q}}(T_{\theta_{j}})| + | \mathbb{E}_\mathbb{Q}(e^{T_{\theta_{j}}-1}) - \mathbb{E}_\mathbb{\hat{Q}}(e^{T_{\theta_{j}}-1}) | + d(\mathbb{E}_\mathbb{Q}(e^{T_{\theta_{j}}}), \mathbb{E}_\mathbb{\hat{Q}}(e^{T_{\theta_{j}}})) \\
\leq |\mathbb{E}_\mathbb{Q}(T_{\theta_{j}}) - \mathbb{E}_\mathbb{\hat{Q}}(T_{\theta_{j}})| + e^{-1}| \mathbb{E}_\mathbb{Q}(e^{T_{\theta_{j}}}) - \mathbb{E}_\mathbb{\hat{Q}}(e^{T_{\theta_{j}}}) | + | \mathbb{E}_\mathbb{Q}(e^{T_{\theta_{j}}}) - \mathbb{E}_\mathbb{\hat{Q}}(e^{T_{\theta_{j}}}) |\leq \epsilon.
\end{multline}
\end{proof}

\subsection{The property of MI estimators}
\paragraph{The variance of the exponential value of the statistic network's output according to the bias of optimal functions on the distribution $\mathbb{Q}$.}

\begin{nothm}
Let $Q^{(n)}$ be the empirical distributions of $n$ i.i.d. samples from $\mathbb{Q}$. For the optimal $T_{1} = \log\frac{dp}{dq} + C_{1}$ and $T_{2} = \log\frac{dp}{dq} + C_{2}$ where $C_1 \geq C_2$,
\begin{equation}
\text{Var}_{\mathbb{Q}}(\mathbb{E}_{\mathbb{Q}^{(n)}}(e^{T_1})) \geq \text{Var}_{\mathbb{Q}}(\mathbb{E}_{\mathbb{Q}^{(n)}}(e^{T_2})).
\end{equation}
\end{nothm}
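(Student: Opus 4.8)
The plan is to reduce everything to the elementary scaling law $\text{Var}(cZ)=c^2\,\text{Var}(Z)$, exploiting that changing the additive constant in the optimal statistics network only rescales $e^{T}$ by a multiplicative constant. First I would use the explicit form of the optimizers: since $T_i=\log\frac{dp}{dq}+C_i$, we have the pointwise identity $e^{T_i(x)}=e^{C_i}\,\frac{dp}{dq}(x)$ for $\mathbb{Q}$-almost every $x$, and hence $e^{T_1}=e^{C_1-C_2}\,e^{T_2}$ as functions on $\Omega$.

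Next I would carry this scaling through the empirical expectation. Writing $x_1,\dots,x_n$ for the i.i.d. draws from $\mathbb{Q}$ underlying $\mathbb{Q}^{(n)}$, we have $\mathbb{E}_{\mathbb{Q}^{(n)}}(e^{T_i})=\tfrac1n\sum_{j=1}^n e^{T_i(x_j)}$, so $\mathbb{E}_{\mathbb{Q}^{(n)}}(e^{T_1})=e^{C_1-C_2}\,\mathbb{E}_{\mathbb{Q}^{(n)}}(e^{T_2})$ holds as an identity of random variables (functions of the sample). Setting $c:=e^{C_1-C_2}$, the hypothesis $C_1\ge C_2$ gives $c\ge 1$, hence $c^2\ge 1$. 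Applying the variance scaling law over the randomness of the sample then yields $\text{Var}_{\mathbb{Q}}\!\big(\mathbb{E}_{\mathbb{Q}^{(n)}}(e^{T_1})\big)=c^2\,\text{Var}_{\mathbb{Q}}\!\big(\mathbb{E}_{\mathbb{Q}^{(n)}}(e^{T_2})\big)\ge \text{Var}_{\mathbb{Q}}\!\big(\mathbb{E}_{\mathbb{Q}^{(n)}}(e^{T_2})\big)$, which is exactly the claim.

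There is no substantial obstacle; the only point I would be careful to state is well-definedness of the two variances. Under the standing assumptions (distributions on a compact domain $\Omega$, mutually absolutely continuous) one can take $\frac{dp}{dq}$ to be essentially bounded, so $e^{T_2}=e^{C_2}\frac{dp}{dq}$ has finite second moment under $\mathbb{Q}$ and both variances are finite; alternatively, if one permits an infinite second moment then both sides are $+\infty$ and the inequality is trivial. The conceptual content of the statement is precisely that the drifting constant enters the variance of the second term quadratically, through the factor $e^{2(C_1-C_2)}$, so an uncontrolled $C^*$ can only inflate this variance — which motivates pinning it down via the regularizer.
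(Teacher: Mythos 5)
Your proof is correct and follows essentially the same idea as the paper's: both hinge on the observation that shifting the constant in the optimal $T$ rescales $e^{T}$ by $e^{C_1-C_2}$, so the variance of the empirical mean picks up a factor $e^{2(C_1-C_2)}\ge 1$. Your version is marginally cleaner by applying $\text{Var}(cZ)=c^2\text{Var}(Z)$ directly to $\mathbb{E}_{\mathbb{Q}^{(n)}}(e^{T_i})$, whereas the paper first computes $\text{Var}_{\mathbb{Q}}(e^{T_i})$ explicitly in terms of $\tfrac{d\mathbb{P}}{d\mathbb{Q}}$ and then divides by $n$, but the substance is identical.
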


\begin{proof}
Consider that 
\begin{equation}
\text{Var}_{\mathbb{Q}}(e^{T_1}) = e^{2C_1}\left(\mathbb{E}_{\mathbb{Q}}((\frac{d \mathbb{P}}{d \mathbb{Q}})^2) - (\mathbb{E}_{\mathbb{Q}}(\frac{d \mathbb{P}}{d \mathbb{Q}}))^2 \right), 
\end{equation}
and
\begin{equation}
\text{Var}_{\mathbb{Q}}(e^{T_2}) = e^{2C_2}\left(\mathbb{E}_{\mathbb{Q}}((\frac{d \mathbb{P}}{d \mathbb{Q}})^2) - (\mathbb{E}_{\mathbb{Q}}(\frac{d \mathbb{P}}{d \mathbb{Q}}))^2 \right).
\end{equation}
 
By \cite{Song2020Understanding}, the variance of the mean of $n$ i.i.d. random variable then gives us
\begin{equation}
    \text{Var}_{\mathbb{Q}}(\mathbb{E}_{\mathbb{Q}^{(n)}}(e^{T_1})) = \frac{\text{Var}_{\mathbb{Q}}(e^{T_1})}{n} \text{, }    \text{Var}_{\mathbb{Q}}(\mathbb{E}_{\mathbb{Q}^{(n)}}(e^{T_2})) = \frac{\text{Var}_{\mathbb{Q}}(e^{T_2})}{n}.
\end{equation} 

Since $e^x \geq 1$ for all $x \geq 0$, 

\begin{equation}
    \frac{\text{Var}_{\mathbb{Q}}(\mathbb{E}_{\mathbb{Q}^{(n)}}(e^{T_1}))}{\text{Var}_{\mathbb{Q}}(\mathbb{E}_{\mathbb{Q}^{(n)}}(e^{T_2}))} = \frac{\frac{\text{Var}_{\mathbb{Q}}(e^{T_1})}{n}}{\frac{\text{Var}_{\mathbb{Q}}(e^{T_2})}{n}} = e^{2(C_1 - C_2)} \geq 1.
\end{equation} 

Therefore,  the variance of $T_1$ is equal to or less than the that of $T_2$ on $\mathbb{Q}$.
\end{proof}

\paragraph{Proof of estimation bias caused by drifting} \begin{nothm}
When used on DV representation, the two averaging strategies below produce a biased MI estimate if the drifting problem occurs.
    \begin{enumerate}
        \item Macro-averaging (similar to that of \citet{poole2019variational}): Establish a single estimate through the average of estimated MI from each batch.
        \item Micro-averaging: Calculate the DV representation using the average of the each individual network outputs.
    \end{enumerate}
\end{nothm}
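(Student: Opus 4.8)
The plan is to model the drifting phenomenon explicitly and then reduce both claims to Jensen's inequality. Suppose we accumulate statistics-network outputs over $k$ batches during training. By the lemma characterizing the optimizers of $D_\text{DV}$, the (near-)optimal function used on batch $i$ has the form $T_i = \log\frac{d\mathbb{P}}{d\mathbb{Q}} + C_i$ for some $C_i \in \mathbb{R}$, and ``drifting occurs'' means exactly that the $C_i$ are not all equal; write $\bar C = \frac1k\sum_{i=1}^k C_i$. I would first record the elementary identities valid for any such $T_i$: $\mathbb{E}_\mathbb{P}(T_i) = I(X,Y)+C_i$ and $\mathbb{E}_\mathbb{Q}(e^{T_i}) = e^{C_i}$, together with their empirical versions, in which the sample mean $\frac1n\sum_j \frac{d\mathbb{P}}{d\mathbb{Q}}$ appears. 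To keep the argument clean I would run it with the exact optimizers (legitimate by the cited lemma) and remark that every quantity below is continuous in the approximation error $T_{\theta_i}-T_i$.

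For macro-averaging, the per-batch estimate is $\hat I_i = \mathbb{E}_{\mathbb{P}^{(n)}_i}(T_i) - \log \mathbb{E}_{\mathbb{Q}^{(n)}_i}(e^{T_i})$; substituting $T_i = \log\frac{d\mathbb{P}}{d\mathbb{Q}}+C_i$ shows the constant $C_i$ cancels inside each batch, leaving $\hat I_i = \mathbb{E}_{\mathbb{P}^{(n)}_i}(\log\frac{d\mathbb{P}}{d\mathbb{Q}}) - \log\mathbb{E}_{\mathbb{Q}^{(n)}_i}(\frac{d\mathbb{P}}{d\mathbb{Q}})$. Taking expectation over fresh samples and applying Jensen's inequality to the concave $\log$ gives $\mathbb{E}[\hat I_i] = I(X,Y) - \mathbb{E}[\log\mathbb{E}_{\mathbb{Q}^{(n)}_i}(\frac{d\mathbb{P}}{d\mathbb{Q}})] > I(X,Y)$ whenever $\mathbb{P}\neq\mathbb{Q}$ and $n$ is finite, and averaging over $i$ keeps the strict inequality, so $\mathbb{E}[\hat I_\text{macro}] \ne I(X,Y)$. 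The point I want to stress is that this is the fixed small-batch ($n$-sample) bias of MINE — the same quantity that the variance theorem and \citet{Song2020Understanding,mcallester2018formal} tie to batch size — and it does not shrink with $k$; pooling at the macro level buys no effective sample size.

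For micro-averaging I would pool all $nk$ outputs into one DV representation: the first term $\frac{1}{nk}\sum_{i,j} T_i(x_{ij},y_{ij})$ has mean $\bar C + I(X,Y)$, and the second term is $\log\!\big(\frac{1}{nk}\sum_{i,j} e^{T_i(x'_{ij},y'_{ij})}\big) = \log\!\big(\frac1k\sum_i e^{C_i}\widehat Z_i\big)$ with $\widehat Z_i = \frac1n\sum_j \frac{d\mathbb{P}}{d\mathbb{Q}}(x'_{ij},y'_{ij})$ and $\mathbb{E}[\widehat Z_i]=1$. In the population/large-$n$ limit the $\widehat Z_i$ concentrate at $1$, so $\hat I_\text{micro}\to \bar C + I(X,Y) - \log\!\big(\frac1k\sum_i e^{C_i}\big) = I(X,Y) - \big(\log\tfrac1k\sum_i e^{C_i} - \tfrac1k\sum_i C_i\big)$, and strict convexity of $\exp$ makes the parenthesized gap strictly positive precisely when the $C_i$ are not all equal, i.e. exactly when drifting occurs; for finite $n$ the extra Jensen term from the fluctuation of the $\widehat Z_i$ only enlarges the discrepancy. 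Hence micro-averaging is (asymptotically, and generically also at finite $n$) biased downward by this gap. Combining the two cases proves the statement.

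The main obstacle I anticipate is pinning down the modeling conventions rather than the inequalities themselves: one must commit to an operational meaning of ``drifting'' (distinct per-batch constants $C_i$), decide whether the macro claim is a $k$-independent finite-$n$ bias and the micro claim an asymptotic-in-$n$ inconsistency (this is the cleanest split), and justify replacing $T_{\theta_i}$ by its ideal form $\log\frac{d\mathbb{P}}{d\mathbb{Q}}+C_i$ via the optimizer lemma plus a continuity remark. The Jensen steps and the constant-cancellation bookkeeping are routine once the setup is fixed.
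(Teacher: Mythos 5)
Your proof is correct in its main thrust and takes a genuinely different route from the paper. The paper's proof is purely algebraic: it writes $T_{ij} = T^*_{ij} + C_i$, carries out the cancellation of $C_i$ in each batch for macro-averaging and its failure to cancel for micro-averaging, and concludes with a $\neq$ against a reference expression (the pooled estimate built from the non-drifting $T^*_{ij}$), without ever invoking Jensen or characterizing the sign or magnitude of the discrepancy. You instead fix the exact optimizers $T_i = \log\tfrac{d\mathbb{P}}{d\mathbb{Q}} + C_i$ and apply Jensen twice: once on the concave $\log$ for macro-averaging to get the strict upward finite-$n$ bias that does not decay with $k$, and once on the strictly convex $\exp$ for micro-averaging to get a downward asymptotic gap $\log\tfrac1k\sum_i e^{C_i} - \tfrac1k\sum_i C_i > 0$ that vanishes exactly when the $C_i$ coincide. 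Your version buys a direction and a quantitative formula, and it surfaces a subtlety the paper's $\neq$ argument glosses over: the macro bias you exhibit is the fixed-$n$ DV bias and not, in fact, drifting-specific (the $C_i$ cancel batchwise regardless), whereas the micro bias is genuinely and exactly the drifting term. This is a useful observation, although it is worth noting it slightly undercuts the theorem as stated for the macro case.

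One minor slip: in the micro-averaging finite-$n$ discussion you assert that the fluctuation of the $\widehat Z_i$ ``only enlarges the discrepancy.'' By Jensen on the concave $\log$, $\mathbb{E}\bigl[\log\tfrac1k\sum_i e^{C_i}\widehat Z_i\bigr] \le \log\tfrac1k\sum_i e^{C_i}$, so the finite-$n$ fluctuation pushes the micro estimate \emph{upward}, while the drifting gap pushes it \emph{downward}; the two effects have opposite signs and can partially cancel, rather than compound. Your hedge (``generically also at finite $n$'') is still defensible since exact cancellation is non-generic, and the large-$n$ argument already establishes the bias, but the stated direction of the finite-$n$ correction is backwards.
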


\begin{proof}
We start from the definition of $I_{\text{DV}}$, where
\begin{equation}
    I_{\text{DV}}(X,Y) = \mathbb{E}_{\mathbb{P}}(T(x, y)) - \log(\mathbb{E}_{\mathbb{Q}}(e^{T(x, y)}))
\end{equation}
becomes the objective function to estimate MI, i.e. MINE.

Let $T_{ij}^{(J)}$ and  $T_{ij}^{(M)}$ denote the $ij$-th element of outputs for $\mathbb{P}_m$ and $\mathbb{Q}_n$ respectively, where $i$ is the index of batch and $j$ is the index of sample inside the batch, and the non-drifting output as $T^*_{ij}$, and the drifting constant for each batch $C_i$.
Then, $T_{ij} = T^*_{ij} + C_i$.

When the number of batch is $B$ and each batch size is $N$, 
\begin{enumerate}
    \item Macro averaging: 
\begin{align}
  &\frac{1}{B} \Sigma_i {[
    \frac{1}{N} \Sigma_j T_{ij}^{(J)}
    - \log (\frac{1}{N} \Sigma_j e^{{T_{ij}^{(M)}}})
]} \\
 = &\frac{1}{B} \Sigma_i {[
    \frac{1}{N} \Sigma_j (T_{ij}^{(J*)} + C_i)
    - \log (\frac{1}{N} \Sigma_j e^{{T_{ij}^{(M*)} + C_i}})
]} \\
 = & \frac{1}{B} \Sigma_i {[
    \frac{1}{N} \Sigma_j (T_{ij}^{(J*)} + C_i)
    - \log (\frac{1}{N} e^{C_i} \Sigma_j e^{{T_{ij}^{(M*)}}})
]} \\
 = & \frac{1}{B} \Sigma_i {[
    \frac{1}{N} \Sigma_j T_{ij}^{(J*)}
    - \log (e^{-C_i} \frac{1}{N} e^{C_i} \Sigma_j e^{T_{ij}^{(M*)}})
]} \\
= & \frac{1}{B} \Sigma_i {[
    \frac{1}{N} \Sigma_j T_{ij}^{(J*)}
    - \log (\frac{1}{N} \Sigma_j e^{T_{ij}^{(M*)}})
]} \\
= & \frac{1}{NB} \Sigma_{ij} T_{ij}^{(J*)}
- \frac{1}{B} \Sigma_{i}{[
    \log (\frac{1}{N} \Sigma_j e^{T_{ij}^{(M*)}})
]} \\
\neq & \frac{1}{NB} \Sigma_{ij} T_{ij}^{(J*)}
    - \log (\frac{1}{NB} \Sigma_{ij} e^{T_{ij}^{(M*)}})
\end{align}

    \item Micro averaging: 
\begin{align}
&\frac{1}{NB} \Sigma_{ij} T_{ij}^{(J)}
    - \log (\frac{1}{NB} \Sigma_{ij} e^{T_{ij}^{(M)}}) \\
= &\frac{1}{NB} \Sigma_{ij} (
        T_{ij}^{(J*)} + C_i
    )
    - \log (
        \frac{1}{NB} \Sigma_{ij} e^{
            (T_{ij}^{(M*)} + C_i )
        })
    \\
= &\frac{1}{NB} \Sigma_{ij} T_{ij}^{(J*)}
    - \log [(
        \frac{1}{NB} \Sigma_{ij} e^{
            (T_{ij}^{(M*)} + C_i)
        }) ^ {\frac{1}{B} \Sigma_i C_i}
        ] \\
\neq & \frac{1}{NB} \Sigma_{ij} T_{ij}^{(J*)}
    - \log (\frac{1}{NB} \Sigma_{ij} e^{T_{ij}^{(M*)}})
\end{align}
\end{enumerate}
\end{proof}
We emphasize that we have to stop the drifting via the regularization term of ReDV.

\paragraph{Wrong estimation derived from biased values}
According to the theorem above, the MI estimate derived from the average of the values estimated from the mini-batch in DV representation-based estimators will lead to erroneous results. 
However, the micro-averaging strategy is often used to measure the performance of MI estimators (MINE or InfoNCE), as shown in Fig. 6 of \cite{pmlr-v119-cheng20b}.

\subsection{The proof for the validity of our benchmark}
We assume that the dataset used for our benchmark satisfies the single label assumption where there exists exactly one label for every sample inside the dataset.
Note that the assumption implies that $p(y|x) = 1$.
In other words, we assume statistical dependence between $X$ and $Y$ \citep{tishby2015deep}. 

\begin{nothm} (Supervised Learning Benchmark) Consider a dataset $D = (X, Y)$ where $Y$ is the label for sample $X$, and $H(Y)$ is the entropy of $Y$. 
\begin{equation}
    I(X, Y) = H(Y)
\end{equation}
\end{nothm}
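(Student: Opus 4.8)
The plan is to unwind the definition of mutual information via the chain rule for entropy, $I(X,Y) = H(Y) - H(Y \mid X)$, and then show that the single-label assumption forces the conditional entropy term to vanish. The statement to prove is $I(X,Y) = H(Y)$, so it suffices to establish $H(Y \mid X) = 0$.

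First I would write $H(Y \mid X) = \mathbb{E}_{x \sim \mathbb{P}_X}\big[ H(Y \mid X = x) \big] = -\sum_{x} p(x) \sum_{y} p(y \mid x) \log p(y \mid x)$, using the discreteness of the label set $Y$ (and, for the benchmark, the sample set as well). The single-label assumption stated just before the theorem is precisely that for each $x$ there is a unique label, i.e. $p(y \mid x) \in \{0, 1\}$ for every $(x,y)$, with exactly one $y$ attaining the value $1$. Plugging this in, each inner sum $\sum_y p(y\mid x)\log p(y\mid x)$ has all terms equal to $0$: the term with $p(y\mid x) = 1$ contributes $1 \cdot \log 1 = 0$, and the terms with $p(y\mid x) = 0$ contribute $0$ by the usual convention $0 \log 0 = 0$. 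Hence $H(Y \mid X = x) = 0$ for every $x$ in the support of $\mathbb{P}_X$, so $H(Y \mid X) = 0$.

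Then I would conclude by the identity $I(X,Y) = H(Y) - H(Y\mid X) = H(Y) - 0 = H(Y)$. For completeness one could also note the symmetric route: since $I$ is symmetric, $I(X,Y) = H(X) - H(X\mid Y)$, but here $H(X \mid Y)$ need not vanish, so the chain-rule expansion in the order $H(Y) - H(Y\mid X)$ is the one to use. The argument is essentially a one-line computation once the right decomposition is chosen.

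The main obstacle is not technical difficulty but making the modeling assumption fully explicit and rigorous: one must be careful that "single label" is formalized as $p(y\mid x)$ being a point mass (equivalently, $Y$ is a deterministic function of $X$), that the $0\log 0 = 0$ convention is invoked, and that the entropies $H(Y)$ and $H(Y \mid X)$ are finite (guaranteed here since the label set is finite). I would also flag that this is stated for the benchmark datasets (CIFAR-10, CIFAR-100) where the assumption genuinely holds, and that it fails for multi-label datasets such as ImageNet — a caveat the paper already raises. Beyond that, there is no real mathematical content to grind through.
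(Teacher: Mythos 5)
Your argument is correct, and it takes a genuinely different route from the paper. You invoke the chain-rule decomposition $I(X,Y) = H(Y) - H(Y\mid X)$ and observe that the single-label assumption makes $Y$ a deterministic function of $X$, so $p(y\mid x)\in\{0,1\}$ and hence $H(Y\mid X)=0$; the theorem then falls out in one line. The paper instead works directly from the integral definition $I(X,Y) = \int P(x,y)\log\frac{P(x,y)}{P(x)P(y)}$, factors $P(x,y) = P(x)P(y\mid x)$, uses $P(y\mid x)=1$ on the support to collapse the inner integral, and then partitions the domain by class label so that the outer integral marginalizes to $\sum_c P(c)\log\frac{1}{P(c)}$. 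Both proofs hinge on exactly the same assumption, but yours abstracts it into the statement ``$Y$ is deterministic given $X$'' and lets a standard identity do the work, whereas the paper grinds through the measure-theoretic computation by hand, partitioning $\Omega$ into regions $R_c$. Your route is shorter and surfaces the conceptual content more clearly; the paper's route has the minor advantage of not presupposing the chain-rule identity and of mirroring, step for step, the structure of the contrastive-learning proof (Theorem 7) that follows it, which cannot be dispatched by a one-line entropy identity and genuinely requires the region-partition calculation. One small caveat: you frame everything in discrete sums, which is fine for the finite label set, but the paper's $X$ is treated as continuous (integrals $\int_x$), so strictly speaking $H(Y\mid X)$ should be written as $\int p(x)\,H(Y\mid X=x)\,dx$ rather than a sum over $x$; this does not affect the conclusion since the inner conditional entropy still vanishes pointwise.
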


\begin{proof}
\begin{align}
I(X;Y) &= \int_{X, Y} P(X, Y) \log {\frac{P(X, Y)}{P(X)P(Y)}} \\
&=\int_x \int_y P(x, y) \log {\frac{P(y|x)}{P(y)}} dy dx \\
&=\int_x \int_y P(x)P(y|x) \log {\frac{P(y|x)}{P(y)}} dy dx \\
&=\int_x P(x) \left(\int_y P(y|x) \log {\frac{P(y|x)}{P(y)}} dy \right) dx \\
&=\int_{R} P(x^*) \log \frac{1}{P(y^*)} \quad \text{(where $R$ is the region where $y^*$ is a correct label for the given $x^*$}) \\
&=\sum_c \int_{R_c} P(x^*, c) \log\frac{1}{P(c)} \quad \text{ (where $R$ is partitioned by the label $c$ to yield $R_c$}) \\
& =\sum_c \log\frac{1}{P(c)} \int_{R_c} P(x^*, c) \quad \text{ ($\because P(c)$ is constant inside the $R_c$}) \\
&=\sum_c\log\frac{1}{P(c)} P(c) \quad \text{ $(\because \int_{R_c} P(x^*, c)=P(c)$, i.e., marginalization}) \\
&=H(Y)
\end{align}
\end{proof}

\begin{nothm} (Contrastive Learning Benchmark) Consider a dataset $D = (X, Y)$. Let $X_1$ be a sample drawn from the dataset with the label $Y$ and $X_2$ be another sample drawn from the subset of $D$ where all the samples inside the subset are with the same label $Y$. Assume that $D$ also satisfies the single label assumption.
\begin{equation}
I(X_1, X_2) = I(X_1, Y) = I(X_2, Y) = H(Y)
\end{equation}
\end{nothm}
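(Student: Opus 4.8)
The plan is to reduce the statement to the Supervised Learning Benchmark (SLB) theorem just established, together with two structural properties of the contrastive sampling scheme. First I would dispose of the two equalities $I(X_1,Y)=H(Y)$ and $I(X_2,Y)=H(Y)$: the pair $(X_1,Y)$ is precisely a labelled sample from $D$, so the SLB theorem applies verbatim; and since $X_2$ is drawn from the label-$Y$ sub-population of $D$, the marginal of $Y$ is again the class prior and $X_2\mid Y$ is the class-conditional law, so $(X_2,Y)$ has the same joint distribution as a generic labelled pair and $I(X_2,Y)=H(Y)$ follows in the same way (the single-label assumption gives $p(y\mid x_2)=1$ on the support, which is all the SLB proof uses).

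For the remaining equality $I(X_1,X_2)=H(Y)$ I would isolate two facts. \textbf{(a)} By construction $X_2$ depends on $X_1$ only through the shared label $Y$, so $X_1$ and $X_2$ are conditionally independent given $Y$; equivalently $X_1\to Y\to X_2$ is a Markov chain, hence $I(X_1;X_2\mid Y)=0$. \textbf{(b)} The single-label assumption $p(y\mid x)=1$ means $Y$ is almost surely a deterministic function of $X_1$, so $H(Y\mid X_1)=0$ and therefore $I(Y;X_2\mid X_1)\le H(Y\mid X_1)=0$.

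Then the computation is a two-sided application of the chain rule for mutual information to $I(X_1,Y;X_2)$:
\begin{equation}
I(X_1;X_2) + I(Y;X_2\mid X_1) = I(X_1,Y;X_2) = I(Y;X_2) + I(X_1;X_2\mid Y).
\end{equation}
Fact (b) zeroes the second term on the left and fact (a) zeroes the second term on the right, leaving $I(X_1;X_2)=I(Y;X_2)=H(Y)$ by the first paragraph. Combining the three identities completes the proof.

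I expect the only real obstacle to be the rigorous bookkeeping behind facts (a) and (b): one must write the joint law of $(X_1,Y,X_2)$ on the underlying (possibly continuous) space explicitly enough to justify $I(X_1;X_2\mid Y)=0$ and $H(Y\mid X_1)=0$ as genuine identities rather than informal claims, and to confirm the chain-rule decomposition is legitimate when $Y$ is discrete while $X_1,X_2$ are continuous --- which is handled exactly by the measure-theoretic setup (all measures absolutely continuous w.r.t.\ Lebesgue measure on a compact domain) already adopted for the SLB proof. Given that setup, everything else is mechanical. An alternative route mirrors the SLB proof directly by writing $I(X_1;X_2)=\int\!\int P(x_1,x_2)\log\frac{P(x_1,x_2)}{P(x_1)P(x_2)}$, using $P(x_2\mid x_1)=P(x_2\mid y)$ and partitioning the integral by class, but the chain-rule argument is shorter.
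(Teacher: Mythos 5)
Your proof is correct, but it takes a genuinely different route from the paper's. The paper proves $I(X_1,X_2)=H(Y)$ by brute-force computation mirroring the SLB proof: it writes out $P(X_1,X_2)$, $P(X_1)$, $P(X_2)$ explicitly as sums over the label, forms the log-ratio, invokes the single-label assumption to collapse the mixture in the denominator to the single term with the shared label, and then partitions the double integral by class region exactly as in the SLB computation. Your approach instead isolates the two structural hypotheses as information-theoretic identities --- the Markov chain $X_1\to Y\to X_2$ giving $I(X_1;X_2\mid Y)=0$, and determinism of the label giving $H(Y\mid X_1)=0$ hence $I(Y;X_2\mid X_1)=0$ --- and feeds them into the two-sided chain-rule decomposition of $I(X_1,Y;X_2)$. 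Your argument is shorter, avoids rewriting the integral, and makes it visible exactly which assumptions buy which cancellations; the paper's argument is self-contained and does not presuppose familiarity with the chain rule for mutual information or the bound $I(Y;X_2\mid X_1)\le H(Y\mid X_1)$ (which, note, uses that $Y$ is discrete so the conditional entropy is nonnegative and finite). Both reduce $I(X_2,Y)=H(Y)$ to the SLB theorem in the same way, via the observation that $(X_2,Y)$ has the same joint law as a generic labelled pair. Your concern about the mixed discrete/continuous chain rule is the right thing to flag and is indeed covered by the paper's standing measure-theoretic assumptions.
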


\begin{proof}
\begin{align*}
P(X_1, X_2) &= \sum_{y_i} P(X_1, X_2, Y) \quad (\because \text{marginalization}) \\
&=\sum_{y_i} P(X_1)P(Y|X_1)P(X_2|Y, X_1) \quad (\because \text{factorization})\\
&=\sum_{y_i} P(Y)P(X_1|Y)P(X_2|Y) \quad (\because \text{$X_1$ and $X_2$ are independent for given $Y$}) \\
&=\sum_{i}P(y_i)P(X_1|y_i)P(X_2|y_i) \\
\end{align*}
\begin{align*}
P(X_1) &= \sum_{y_i} P(X_1, Y)= \sum_{y_i} P(Y)P(X_1|Y) = \sum_{i} P(y_i)P(X_1|y_i) \\
P(X_2) &= \sum_{y_i} P(X_2, Y)= \sum_{y_i} P(Y)P(X_2|Y) = \sum_{i} P(y_i)P(X_2|y_i) \\
\end{align*}
\begin{align*}
\frac{P(X_1, X_2)}{P(X_1)P(X_2)} &=\frac{\sum_{i}P(y_i)P(X_1|y_i)P(X_2|y_i)}{\sum_{i} P(y_i)P(X_1|y_i)\sum_{y_i} P(y_i)P(X_2|y_i)} \\
&=\frac{\sum_{i}P(y_i)P(X_1|y_i)P(X_2|y_i)}{\sum_{i}P(y_i)^2P(X_1|y_i)P(X_2|y_i)} \quad \text{ ($\because X_1$ and $X_2$ has the same label)}
\end{align*}

Let $R_i$ be the region where $(X, y_i)$ such as $i$-th class label $y_i$ is a correct label for the given $X_1$.
\begin{align*}
I(X_1, X_2) &= \int_{X_1, X_2} P(X_1, X_2) \log {\frac{P(X_1, X_2)}{P(X_1)P(X_2)}} \\
&= \int_{X_1, X_2}\left(\sum_{i}P(y_i)P(X_1|y_i)P(X_2|y_i)\right)\log\frac{\sum_{i}P(y_i)P(X_1|y_i)P(X_2|y_i)}{\sum_{i}P(y_i)^2P(X_1|y_i)P(X_2|y_i)} \\
&=\sum_{i}P(y_i)\int_{X_2}P(X_2|y_i)\left(\int_{R_i}P(X_1|y_i)\log\frac{P(y_i)P(X_1|y_i)P(X_2|y_i)}{P(y_i)^2P(X_1|y_i)P(X_2|y_i)}dx_1\right)dx_2\\
&=\sum_{i}P(y_i)\int_{X_2}P(X_2|y_i)\left(\int_{R_i}P(X_1|y_i)\log\frac{1}{P(y_i)}dx_1\right)dx_2\\
&=\sum_{i}P(y_i)\log\frac{1}{P(y_i)}\int_{X_2}P(X_2|y_i)\left(\int_{R_i}P(X_1|y_i)dx_1\right)dx_2\\
&=\sum_{i}P(y_i)\log\frac{1}{P(y_i)}\\
&=H(Y)
\end{align*}
\end{proof}

\section{Directly Utilizing the Statistics Network Outputs for Out-of-distribution Task}
\begin{figure}[ht]
    \centering
    \subfloat[]{\includegraphics[width=0.3\columnwidth]{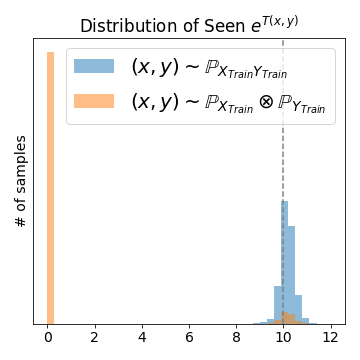}} 
    \subfloat[]{\includegraphics[width=0.3\columnwidth]{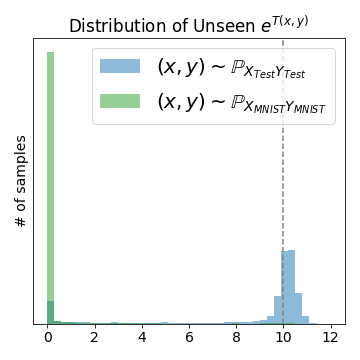}}
    \caption{
        Histogram of the exponential of the network outputs $e^{T(x, y)}$ which is trained with CLB CIFAR10. 
        Training samples and unseen samples are fed to (a) and (b), respectively.
    }
\label{supp:figs:supp_ood}
\end{figure}

We observe the SLB CIFAR10-trained network outputs when seen or unseen samples are fed to the statistics network $T$ in \cref{supp:figs:supp_ood}.
Note that we can take $e^{T(x, y)} = \frac{d\mathbb{P}_{XY}}{d\mathbb{P}_{X} \otimes \mathbb{P}_{Y}}$ for granted, thanks to regularization.
\cref{supp:figs:supp_ood} (a) shows the distribution of $e^{T(x, y)}$ for the training set samples $(x, y) \sim \mathbb{P}_{X_{\text{Train}}Y_{\text{Train}}}$.
As 90\% of $(x, y) \sim \mathbb{P}_X \otimes \mathbb{P}_Y$ is wrongly labeled, the majority yields $e^{T(x, y)} = 0$.
The likelihood ratio for the $(x, y) \sim \mathbb{P}_{XY}$ is $10$, and all the samples are centered around the ideal value as expected.
CIFAR10 test set samples $(x, y) \sim \mathbb{P}_{X_{\text{Test}}Y_{\text{Test}}}$ also yield similar results, where some of the samples are wrongly positioned, being the test error of $T$.
Surprisingly, when we feed MNIST \citep{lecun1998gradient} training samples $(x, y) \sim \mathbb{P}_{X_{\text{MNIST}}Y_{\text{MNIST}}}$, model successfully classifies nearly all the samples to be less likely to occur in $\mathbb{P}_{X_{\text{Train}}Y_{\text{Train}}}$.
This implies that exploiting the network outputs with the viewpoints of MI may show usefulness in out-of-distribution detection.

\begin{figure}[ht] 
\centering
\subfloat{\includegraphics[width=0.3\columnwidth]{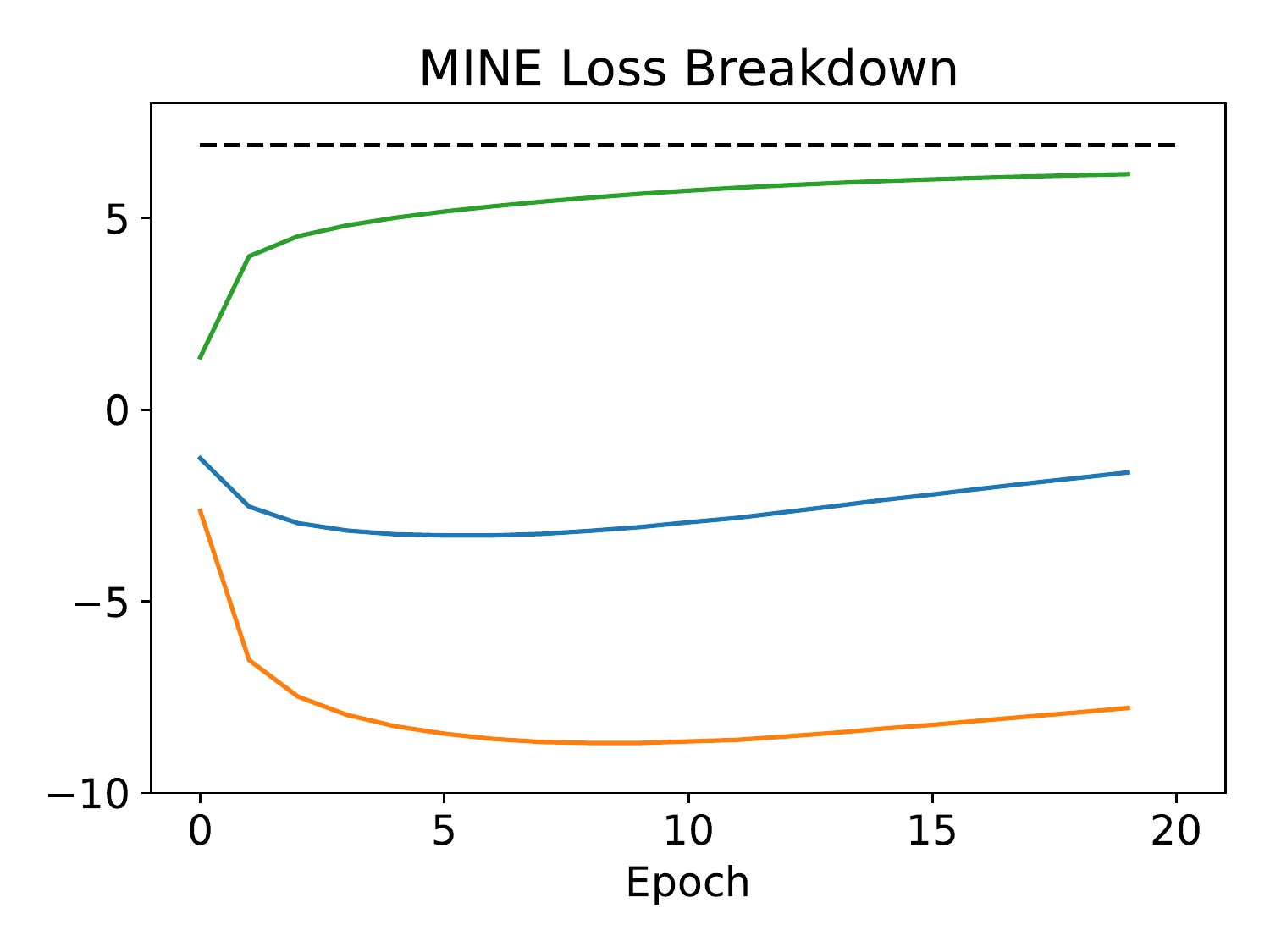}}
\subfloat{\includegraphics[width=0.3\columnwidth]{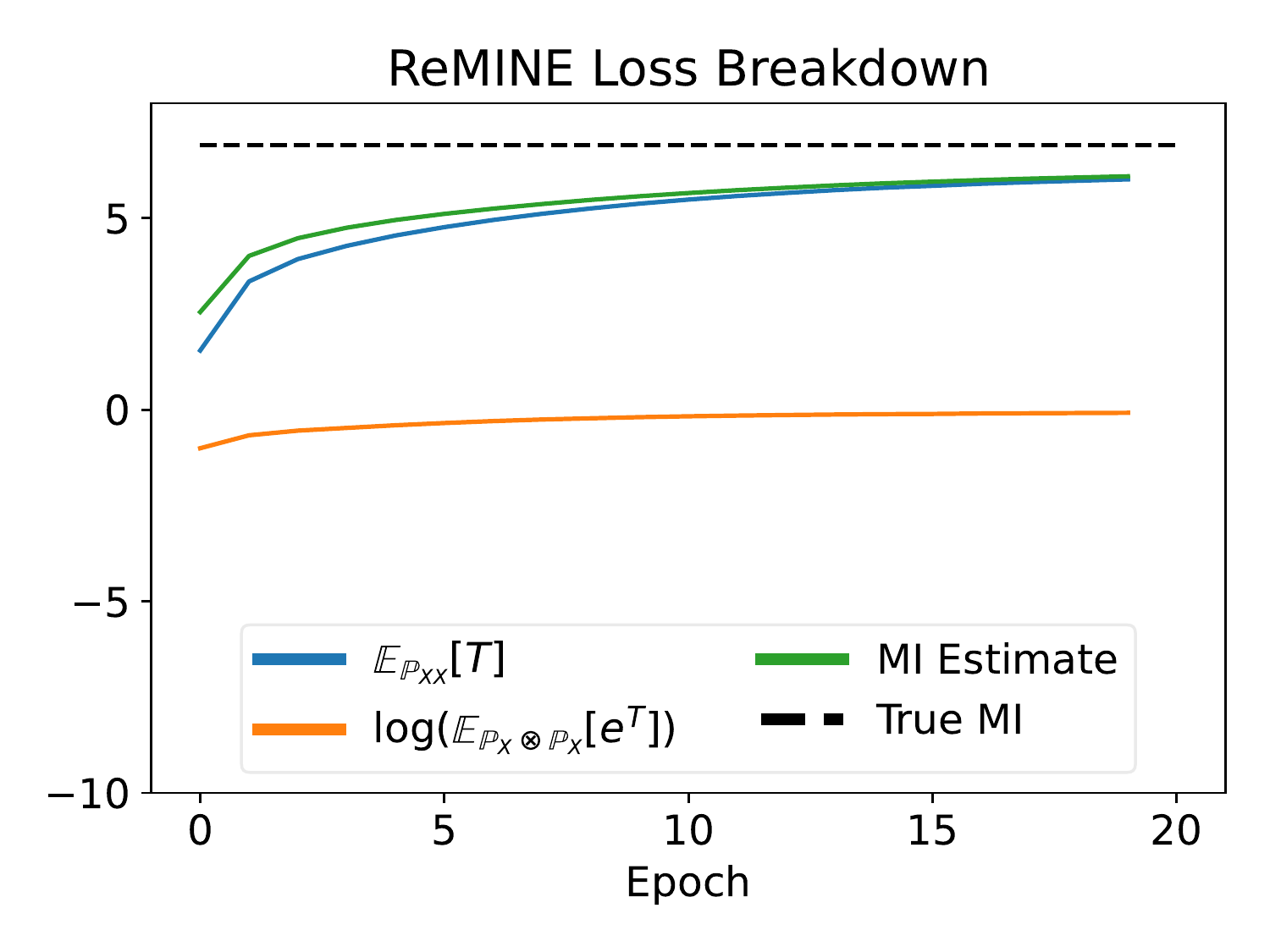}}
\caption{
    Training $T_\theta$ using $I_\text{MINE}$ and $I_\text{ReMINE}$ with batch size $100$ for $20$ epochs.
    We breakdown the MI loss into two components.
    We split both losses into first term \textcolor{blue}{\textbf{$\mathbb{E}_{\mathbb{P}_{X X}}(T)$}} and second term \textcolor{orange}{\textbf{$\log\mathbb{E}_{\mathbb{P}_X \otimes \mathbb{P}_X}(e^T)$}}.
}
\label{figs:imagenet_drifting}
\end{figure}

\section{Experiments on ImageNet}\label{supp:imagenet}
\begin{table*}[ht]
\centering

\begin{tabular}{c|c|cc|cc}
\Xhline{1.2pt}
\multirow{2}{*}{Task}  & \multirow{2}{*}{Loss} & \multicolumn{2}{c|}{MI Estimation} & \multicolumn{2}{c}{Test Accuracy} \\ 
& & Original & Regularized & Original & Regularized \\
\hline

\multirow{3}{*}{Supervised Learning Benchmark} 
& CE & - & - & 0.0795 & - \\
& MINE & 6.147 & 6.110 & 0.1056 & 0.1081 \\
& NWJ & 6.072 & 6.075 & 0.1020 & 0.1005 \\
\hline

\multirow{2}{*}{Contrastive Learning Benchmark} & MINE & 1.095 & 1.140 & 0.0103 & 0.0098 \\
& NWJ & 0.000 & 1.008 & 0.0010 & 0.0072 \\
\hline\Xhline{1.2pt}

\end{tabular}
\caption{Our supervised and contrastive learning benchmark results on ImageNet dataset.
We provide the MI estimation and test accuracy, where we clip the negative MI estimations to 0.
We compare the performance of original and regularized loss.
We also add the accuracy of standard cross-entropy loss (CE) for comparison.
Similar to \cref{subsec:comparison}, we choose the regularization weight $\lambda \in \{0.1, 0.01, 0.001\}$ that shows the best MI estimation results.
}
\label{table:supp_imagenet_benchmark}
\end{table*}
We test on the ImageNet dataset with $1000$ classes, where we use the batch size of $100$.
We set the batch size to be relatively small to observe how different losses behave, whereas multiple contrastive learning literature such as \cite{chen2020simple, he2020momentum} uses large batch sizes to avoid instability.
We train for $20$ epochs to observe the early stages of training.

First, we can observe in \cref{figs:imagenet_drifting} that the regularizer successfully solves the drifting problem of $I_\text{MINE}$.
Also, \cref{table:supp_imagenet_benchmark} shows that $I_\text{NWJ}$ fails in the contrastive learning benchmark.
$I_\text{NWJ}$ explodes within a few steps of training, where the regularizer successfully avoids the problem to yield a feasible output.
Note that we did not observe the losses till convergence; we have to train much longer to obtain a more accurate performance of MI estimation and test accuracy.
However, we can see that in the supervised learning benchmark, which is the relatively easier benchmark, all the losses are already close to the optimal MI even in the earlier epochs.
We can also observe a similar trade-off between the MI estimation and test accuracy in \cref{table:supp_imagenet_benchmark}.
Future works on large-scale datasets are needed to observe the behaviors further.

\section{Experimental Details}
In this section, we provide the experiment details in the manuscript with the accompanying code \url{https://github.com/Siyeong-Lee/Deconstructing-MINE}.

\subsection{Hardware Specification}
We use a single NVIDIA DGX A100 machine with 8 GPUs for all the experiments.
All the experiments except for our benchmark experiments take less than 10 minutes and a single GPU to compute.
It takes less than 2 days to compute all the benchmark experiments: 4 settings, 12 losses, and 5 seeds running on 8 GPUs and 4 processes per GPU.

\subsection{Detailed Settings for One-hot Dataset Experiments} \label{subsec:supp_onehot}
We describe the detailed settings for \cref{figs:onehot_toy_stable_loss}, \cref{figs:onehot_toy_stable_scatter}, \cref{figs:onehot_toy_unstable}, \cref{figs:remine_different_c}, and \cref{figs:onehot_toy_remine}.
We choose $N=16$ for the one-hot discrete dataset $X \sim U(1, N)$.
We use a simple statistics network $T$ with a concatenated vector of dimension $N\times2=32$ as input.
We pass the input through two fully connected layers with ReLU activation by widths: $32-256-1$.
The last layer outputs a single scalar with no bias and activation.
We use stochastic gradient descent (SGD) with learning rate $0.1$ to optimize the statistics network unless specified.

\subsection{Detailed Settings for Our Benchmark} \label{subsec:supp_our_benchmark}
We describe the detailed settings for \cref{table:sota_benchmark} and \cref{figs:our_benchmark_ablation}.
We use ResNet-18 \citep{he2016deep} as the backbone network and use Adam optimizer with the default learning rate $0.001$, $\beta_1=0.9$ and $\beta_2=0.999$.
We use batch size $100$ for CIFAR100 and $10$ for CIFAR10.
We train for different epochs per each benchmark: $40$ epochs (SLB CIFAR10), $100$ epochs (SLB CIFAR100), $100$ epochs (CLB CIFAR10), and $150$ epochs (CLB CIFAR100).
We choose enough number of epochs for all the losses to be fully converged for each of the benchmarks.
We rerun the same experiment $5$ times with different seeds.

\subsection{Detailed Settings for the 20D Correlated Gaussian Task}
We describe the detailed settings for \cref{figs:gaussian_benchmark}. We sampled $(x, y)$ from $d$-dimensional correlated Gaussian dataset where $X \sim N(\mathbf{0}, \mathbf{I}_d)$ and $Y \sim N(\rho X, (1-\rho^2)\mathbf{I}_d)$ given the correlation parameter $0 \leq \rho < 1$, which is taken from \citet{belghazi2018mutual}.
The true MI for the dataset is $I(X,Y)=-{\frac{d}{2}} \log(1-\rho^2)$.
For the statistics network architecture, we consider the architecture similar to \cref{subsec:supp_onehot} where we concatenate the inputs $(x,y)$ to pass through three fully connected layers with ReLU activation (excluding the output layer) by widths $40-256-256-1$, same as the network used in \citet{poole2019variational}.
We used the same optimizer with \cref{subsec:supp_our_benchmark}.

\end{document}